\newcites{ltex}{References} % use 8-bit T1 fonts
\newcommand{\norm}[1]{\left\lVert#1\right\rVert}
\newcommand{\innerproduct}[2]{\left\langle#1, #2\right\rangle}
\newcommand{\mc}[1]{\mathcal{#1}}
\newcommand{\mb}[1]{\mathbb{#1}}
\newcommand{\mr}[1]{\mathrm{#1}}
\newcommand\numberthis{\addtocounter{equation}{1}\tag{\theequation}}
\newcommand*{\Scale}[2][4]{\scalebox{#1}{$#2$}}%
\newcommand{\w}{\bm{w}}
\newcommand{\x}{\bm{x}}
\newcommand{\y}{\bm{y}}
\newcommand{\z}{\bm{z}}
\newcommand{\s}{\bm{s}}
\newcommand{\e}{\bm{e}}
\newcommand{\g}{\bm{g}}
\newcommand{\q}{\bm{q}}
\newcommand{\bu}{\bm{u}}
\newcommand{\br}{\bm{r}}
\newcommand{\bv}{\bm{v}}
\newcommand{\bmu}{\bm{\mu}}
\newcommand{\bnu}{\bm{\nu}}
\newcommand{\btheta}{\bm{\theta}}
\newcommand{\mE}{\mb{E}}
\newcommand*\samethanks[1][\value{footnote}]{\footnotemark[#1]}
\DeclareMathOperator{\argmin}{\arg\!\min}
\DeclareMathOperator{\pr}{\textbf{Pr}}
\newtheorem{lem}{Lemma}
\newtheorem{cor}{Corollary}
\newtheorem{thr}{Theorem}
\newtheorem{defn}{Definition}
\newtheorem{prop}{Proposition}
\title{Collect at Once, Use Effectively:\\
Making Non-interactive Locally Private Learning Possible}
\newcommand{\thline}{%
    \noalign {\ifnum 0=`}\fi \hrule height 1pt
    \futurelet \reserved@a \@xhline
}
\newcolumntype{"}{@{\hskip\tabcolsep\vrule width 1pt\hskip\tabcolsep}}
\author{Kai Zheng\thanks{These two authors contributed equally}~\thanks{zhengk92@pku.edu.cn} }
\author{Wenlong Mou\samethanks[1]~\thanks{mouwenlong@pku.edu.cn}}
\author{Liwei Wang\thanks{wanglw@cis.pku.edu.cn}}
\affil{Key Laboratory of Machine Perception, MOE,\\School of EECS, Peking University}
\begin{document} 
\maketitle

\begin{abstract} 
% Non-interactive Local Differential Privacy (LDP) requires data analysts to collect data from users through noisy channel at once. The noisy channel preserves users' privacy in the data collection procedure, and the non-interactive setting makes private mechanisms applicable in real-world big data. Existing works on Non-interactive LDP only deal with simple estimation tasks such as population mean and histogram, and there are lower bounds ruling out good guarantees of LDP estimation in high-dimensional settings. 

Non-interactive Local Differential Privacy (LDP) requires data analysts to collect data from users through noisy channel at once. In this paper, we extend the frontiers of Non-interactive LDP learning and estimation from several aspects. For learning with smooth generalized linear losses, we propose an approximate stochastic gradient oracle estimated from non-interactive LDP channel using Chebyshev expansion, which is combined with inexact gradient methods to obtain an efficient algorithm with quasi-polynomial sample complexity bound. For the high-dimensional world, we discover that under $\ell_2$-norm assumption on data points, high-dimensional sparse linear regression and mean estimation can be achieved with logarithmic dependence on dimension, using random projection and approximate recovery.  We also extend our methods to Kernel Ridge Regression. Our work is the first one that makes learning and estimation possible for a broad range of learning tasks under non-interactive LDP model.
%We also extend our techniques to kernel ridge regression.
\end{abstract} 

\section{Introduction}
\label{introduction}

Data privacy has become an increasingly important issue in the age of data science. Differential Privacy (DP), proposed in 2006 by Dwork et al.,\cite{dwork2006calibrating}, provide a solid foundation and rigorous standard for private data analysis. Since then, there has been extensive literature studying the fundamental trade-offs between differential privacy and accuracy for query answering~\cite{Hardt2010A,Hardt2012A,Thaler2012Faster,Wang2016Differentially}, machine learning~\cite{Chaudhuri2008Privacy,chaudhuri2011differentially,rubinstein2012learning,wangyuxiang2015}, and statistical inference~\cite{lei2011differentially,Smith2011Privacy}. For more details on DP results, please refer to the excellent monograph written by Dwork and Roth \cite{dwork2014algorithmic}. Intuitively, a DP algorithm uses randomized response to defend against adversary, so that change of one of data points could not be detected.

Despite the prevailing success of this notion in academia, its applicability in data science practice could be limited. For example, if data analysts just promise to follow the differential privacy constraints, user will not feel their privacy are preserved. The promise could not be validated; the mechanisms are complicated; and even worse: users do not trust the data collector at all. Unfortunately, most of differential privacy algorithms are based on adding noise calibrated to stability of loss function, which essentially requires access to original data. 

Borrowing ideas from classical wisdom on collecting sensitive survey data~\cite{warner1965randomized}, Local Differential Privacy (LDP)~\cite{Kasiviswanathan2010What,duchi2013locala} was proposed as a stronger notion of privacy to resolve this problem. LDP requires each of data points to be passed through a noisy channel during collection. This channel will ensure one can hardly tell anything about the user based on what he have sent. The practical advantage of LDP is obvious: users will be comfortable sending their sensitive information through noisy channels, which are transparent and reliable; additionally, users can choose their own privacy parameters, making it possible to associate with economic value. Therefore, this line of research has attracted lots of attention~\cite{duchi2013localb,duchi2013locala,kairouz2014extremal,bassily2015local,kairouz2016discrete}.

Despite the analogy in definition, the way in which LDP achieves accurate results are fundamentally different from classical DP. Essentially, the information collected from each user is almost completely noisy, from which one needs to obtain accurate results. The only way to do that is to make the independently distributed noise cancel out with each other in some sense. With sand being washed away by waves, golds begin to appear.

Two local privacy notions have been discussed in existing literature: the interactive model allows the algorithm to collect data sequentially, and decide what to ask based on information from previously asked users. The non-interactive model, on the contrary, requires all data to be collected at once, with no interactive queries allowed. Apparently the non-interactive model is strictly stronger, and prohibition on interactive queries rules out most of SGD-type approaches, making the problem significantly harder. However, non-interactive LDP is more useful in real-world applications, as opportunities of interactive queries may not be available in most settings.

In existing literature, learning and inference under interactive and non-interactive LDP therefore are exhibiting different appearances. In the interactive world, LDP is promised with connection to Statistical Query (SQ) model~\cite{kearns1998efficient}, from its very beginning~\cite{kasiviswanathan2011can}. SQ algorithms for a wide range of convex ERM problems were proposed by~\cite{feldman2017statistical}, implying good risk bounds for LDP. \cite{duchi2013locala} established matching upper and lower bounds for convex risk minimization problems. On the other hand, very few has been done in the non-interactive setting. Existing works primarily focus on basic estimation problems such as means and discrete densities~\cite{duchi2013localb,duchi2016minimax,bassily2015local}, or some function calculations \cite{kairouz2015secure}. Most of important modern learning and inference tasks, including estimation in linear models and convex ERM, are still poorly understood in non-interactive local DP settings. 

For the high-dimensional world, where $d\gg n$ while some low-complexity constraints are imposed, we may hope the error induced by privacy constraints to be logarithmically dependent upon $d$. In classical differential privacy literature, this has been be addressed using different techniques, guarantee error bounds logarithmically dependent on dimension~\cite{talwar2015nearly,smith2013differentially}. However, lower bounds have been shown in local privacy model even for high-dimensional $1$-sparse mean estimation, ruling out any good guarantees~\cite{duchi2016minimax}. The lower bound result illustrates fundamental difficulties of local differential privacy. But if we still want to do high-dimensional learning under local privacy, are there additional assumptions that helps?

Therefore, the starting point of this work lies on making learning possible under the non-interactive LDP setting, which is the hardest yet the most useful. We initiate the first attempt towards a broad range of learning tasks beyond simple distribution estimation. In particular, we investigate two important classes of problems under non-interactive LDP: (1) High-dimensional sparse linear regression and mean estimation; (2) Generalized linear models. Our focus is to design corresponding mechanisms and study their convergence rates with respect to the number and dimension of data. One can also consider optimal mechanisms in terms of privacy parameters like \cite{geng2014optimal}, which is of independent interests.

% In this paper, we theoretically study these two problems under non-interactive local DP settings. As good statistical risks are usually guaranteed by approximate empirical risk minimizers, the key difficulties in differentially private learning and estimation are about the optimization problem itself. Thus we adopt the following ERM framework:

% {\bf Problem formulation: } Suppose there are $n$ users, each user $i$ has his or her own data $(x_i, y_i) \in \mc{X} \times \mc{Y}$.   Denote $D = \{(x_i, y_i) | i \in [n]\}$ as the whole dataset. There is a loss function $L(w; D) = \frac{1}{n} \sum_{i=1}^n \ell(w;(x_i, y_i))$ with respect to the dataset $D$, where $w \in \mc{C} \subset \mb{R}^d$. For any $w \in \mc{C}$, define its empirical excess risk as 
% \begin{equation}
%   R(w; D) = L(w; D) - \min_{w \in \mc{C}} L(w; D).
% \end{equation}
% Though we care about different kinds of risks in different problems, finding a empirical risk minimizer using information passed through noisy channel lies on the center of our analysis.

{\bf Our Contributions:} In this paper, we propose several efficient algorithms for learning and estimation problems under non-interactive LDP model, with good theoretical guarantees. In the following we summarize our contributions.

{\bf (1) High Dimensional Estimation:} One of exciting findings in this paper is about local privacy for high-dimensional data. Roughly speaking, convergence rate with logarithmic dependence on the dimension can be attained under LDP, if we assume data points are $\ell_2$ bounded. This is in sharp contrast with information-theoretic lower bounds for 1-sparse mean estimation for $\ell_\infty$ bounded data~\cite{duchi2016minimax}. Valid algorithms are presented for both sparse mean estimation and sparse linear regression, respectively. Intuitively, non-interactivity doesn't bring about additional difficulties, since the loss functions are quadratic forms. However, if we directly add noise to each of data points and send it to the server, the aggregated noise will lead to linear dependence on the dimension. Thus we adopt the random projection technique, and send the noisy version of projected data to the server. Based on the aggregated information, we can approximately recover the optimal solution via linear inverse problem.

{\bf (2) Learning Smooth Generalized Linear Models:} 
Generalized linear problems which has additional smooth properties (we call the loss with respect to it as smooth generalized linear loss (SGLL), see rigorous definition in section 2) include many common loss functions, such as logistic loss, square loss, etc. Optimizing such losses are intuitively much more difficult in non-interactive LDP model, as the loss can be an arbitrary function $\w^T\x$. This even makes it difficult for us to obtain an unbiased estimator for objective function, or its gradient. As a result, when we aggregate the loss of noisy data together, it is even hard to ensure it converge to the population loss. Approximation theory techniques are introduced to tackle this problem. In particular, we use polynomials of $\w^T\x$ to approximate nonlinear coefficients of gradients. Chebyshev bases, instead of Taylor series, are used to get faster convergence within an arbitrary domain. Then we are able to build inexact stochastic gradient oracles to arbitrarily specified accuracy. SIGM algorithm in \cite{dvurechensky2016stochastic} is exploited to find the minimizer with inexact gradients.

{\bf Other Related Work:} 
Local privacy dates back to~\cite{warner1965randomized}, who uses random responses to protect privacy in surveys. In recent LDP literature, both~\cite{duchi2013localb} and~\cite{kairouz2016discrete} studied density estimation methods and their theoretical behaviors in LDP model. Rather than statistical setting in above two work,~\cite{bassily2015local} considered how to produce frequent items and corresponding frequencies of a dataset in local model. Besides,~\cite{kairouz2014extremal} investigated optimality of LDP mechanisms based on information theoretical measures for statistical discrimination.

Approximation techniques are commonly used in DP literature.~\cite{Thaler2012Faster} employed polynomials for marginal queries.~\cite{Wang2016Differentially} leveraged trigonometric polynomials to answer smooth queries. \cite{zhang2012functional} also used polynomial approximations and get basic convergence results in standard DP model. Besides, the random projection and recovery has also been used in DP learning~\cite{kasiviswanathan2016efficient} and local DP histogram estimation~\cite{bassily2015local}.

In standard DP model, both high-dimensional sparse estimation and generalized linear model have been intensively studied. \cite{kifer2012private} and \cite{smith2013differentially} considered the convergence of private LASSO estimator under RSC and incoherence assumptions. \cite{talwar2015nearly} considered constrained ERM of sparse linear regression, and obtained $\tilde{O}(\log d / n^{2/3})$ rate using private Frank-Wolfe. Above results assume $\ell_{\infty}$-bounded data. By stronger assumption of $\ell_2$ bounded data, \cite{kasiviswanathan2016efficient} gave a general framework for high dimensional empirical risk minimization (ERM) problem. There are several works to estimate generalized linear model under DP, with a particular emphasis on logistic regression. Objective and output perturbation are used to get low excess risks~\cite{Chaudhuri2008Privacy,chaudhuri2011differentially}. Both \cite{bassily2014private} and \cite{zhang2017efficient} considered concrete private algorithms to solve ERM. None of these existing results extends directly to non-interactive LDP setting.

%%%%%%%%%%%%%%%%%%%%%%%%%%%%%%%%%%%%%%%%%%%%%%%%%%%%%%%%%%%%%%%%%%%%%%%%%%%%%%%%%%%%%%%%%%%%%%%%%%%%%%%%%%
%%%%%%%%%%%%%%%%%%%%%%%%%%%%%%%%%%%%%%%%%%%%%%%%%%%%%%%%%%%%%%%%%%%%%%%%%%%%%%%%%%%%%%%%%%%%%%%%%%%%%%%%%%

\section{Preliminaries}
{\bf Some notations:} $[p] = \{1,2,\cdots, p\}$. Vectors are written in bold symbol, such as $\bm{x}, \w$. $x$ represents univariate number, which has no relation with $\x$. For a vector $\bm{x} = [x_1, x_2, \cdots, x_d]^T$, $\bm{x}^k$ represents the power of each element. $B_2(r) = \{\x|\norm{\x}_2 \leqslant r\}$. Denote $\mb{S}^{+}$ as the semipositive matrix space, $\mathrm{Proj}_{\mb{S}^{+}}(\cdot)$ means projecting a matrix to $\mb{S}^{+}$ in terms of Frobenius norm (i.e. eliminate all negative eigenvalues). For an univariate function $f(x), f^{(k)}(x)$ represents its $k$-th derivative, and define $\norm{f^{(k)}}_T :=  \int_{-1}^1\frac{|f^{(k+1)}(x)|}{\sqrt{1-x^2}}\mr{d}x$. For the reason of limited space, all omitted proof can be found in the supplementary.

\subsection{Local Differential Privacy}
Here we adopt the LDP definition given in \cite{bassily2015local}.
\begin{defn}
  A mechanism $\mc{Q}: \mc{V} \rightarrow \mc{Z}$ is said to be $(\epsilon, \delta)$-local differential private or $(\epsilon, \delta)$-LDP, if for any $\bv, \bv' \in \mc{V}$, and any (measurable) subset $S \subset \mc{Z}$, there is 
  \begin{align*}
    \pr[Q(\bv) \in S] \leqslant e^{\epsilon}\pr[Q(\bv') \in S] + \delta
  \end{align*}
\end{defn}
Just the same with basic results in DP \cite{dwork2014algorithmic}, there are corresponding basic results for LDP:
\begin{lem}[Gaussian Mechanism]
    \label{gaulem}
  If $\mc{V} = \{\bv \in \mb{R}^d | \norm{\bv}_2 \leqslant 1\}$, then $\mc{Q}(\bv) = \bv + \e$ is $(\epsilon, \delta)$-LDP, where $\e \in \mb{R}^d$, and $\e \sim \mc{N}(0, \sigma^2 I_d)$, $\sigma = 2\sqrt{2\ln(1.25/\delta)}/\epsilon$.
 \end{lem}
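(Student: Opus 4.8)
The plan is to reduce the claim to the classical Gaussian mechanism of standard differential privacy, since in the local model the constraint must hold for every pair of inputs $\bv, \bv' \in \mc{V}$, which plays exactly the role of a pair of neighboring databases. The single ingredient I need is the $\ell_2$-sensitivity of the identity map on $\mc{V}$: for any $\bv, \bv' \in \mc{V}$ one has $\norm{\bv - \bv'}_2 \leqslant \norm{\bv}_2 + \norm{\bv'}_2 \leqslant 2$, and this bound is tight (take antipodal points on the unit sphere). Hence the relevant sensitivity is $\Delta_2 = 2$, and it remains to show that isotropic Gaussian noise of standard deviation $\sigma = \Delta_2 \sqrt{2\ln(1.25/\delta)}/\epsilon$ suffices, which matches the stated $\sigma$.

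First I would write out the privacy loss random variable directly. Conditioning on the true input being $\bv$ so that the released value is $\z = \bv + \e$, the log-likelihood ratio against the alternative input $\bv'$ is
\begin{align*}
  L(\z) = \ln\frac{p_{\bv}(\z)}{p_{\bv'}(\z)} = \frac{1}{2\sigma^2}\left(\norm{\z-\bv'}_2^2 - \norm{\z-\bv}_2^2\right) = \frac{1}{2\sigma^2}\left(2\innerproduct{\e}{\bv-\bv'} + \norm{\bv-\bv'}_2^2\right).
\end{align*}
Writing $\bm{d} = \bv - \bv'$ with $\norm{\bm{d}}_2 \leqslant 2$, the term $\innerproduct{\e}{\bm{d}}$ is a centered Gaussian with variance $\sigma^2\norm{\bm{d}}_2^2$, so $L(\z)$ is Gaussian with mean $\norm{\bm{d}}_2^2/(2\sigma^2)$ and standard deviation $\norm{\bm{d}}_2/\sigma$; equivalently, by rotational invariance of the noise, the problem collapses to a one-dimensional computation along the direction $\bm{d}$.

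The target then becomes the standard approximate-DP guarantee: it suffices to prove $\pr[L(\z) > \epsilon] \leqslant \delta$, since on the complementary event the density ratio is bounded by $e^{\epsilon}$ and the excess mass is absorbed into the additive $\delta$ term. This reduces to a Gaussian tail estimate on $\innerproduct{\e}{\bm{d}}$, which I would close by substituting $\sigma = 2\sqrt{2\ln(1.25/\delta)}/\epsilon$ and bounding the tail, maximizing over $\norm{\bm{d}}_2 \leqslant 2$. I expect the only delicate point to be the calibration constant: extracting the precise factor $1.25$ requires the sharper Gaussian tail inequality (rather than a crude Chernoff bound), together with the assumption $\epsilon \leqslant 1$ implicit in the classical statement. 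So the main obstacle is the careful numerical tail analysis rather than anything conceptual — everything else is a direct transcription of the Dwork–Roth Gaussian mechanism, as the paper itself signals by appealing to the "basic results in DP."
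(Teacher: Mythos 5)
Your reduction is exactly the argument the paper intends: it gives no explicit proof and simply invokes the classical Gaussian mechanism of Dwork--Roth, with the only local-model observation being that any pair $\bv,\bv'$ in the unit ball acts as a neighboring pair with $\ell_2$-sensitivity $\Delta_2 = 2$, which accounts for the factor $2$ in $\sigma$. Your privacy-loss computation and the caveat about $\epsilon \leqslant 1$ are both correct, so the proposal matches the paper's (implicit) proof.
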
 
\begin{lem}[Composition Theorem\footnote{Note one can also use the advanced composition mechanism \cite{kairouz2015composition} with a refined analysis, but the main dependence over $n$ and $d$ will remain nearly the same.}]
    \label{composition}
  Let $\mc{Q}_i: \mc{V} \rightarrow \mc{Z}_i$ be an $(\epsilon_i, \delta_i)$-LDP mechanism for $i \in [k]$. Then if $\mc{Q}_{[k]}: \mc{V} \rightarrow \prod_{i=1}^k \mc{Z}_i$ is defined to be $\mc{Q}_{[k]}(\bv) = (\mc{Q}_1(\bv), \dots, \mc{Q}_k(\bv))$, then $\mc{Q}_{[k]}$ is $(\sum_{i=1}^k\epsilon_i, \sum_{i=1}^k \delta_i)$-LDP. 
\end{lem}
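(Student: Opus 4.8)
The plan is to prove the statement by induction on $k$, reducing everything to the two-fold case. For the inductive step, regard $\mc{Q}_{[k]}$ as the composition of $\mc{Q}_{[k-1]}$ (which by the inductive hypothesis is $(\sum_{i=1}^{k-1}\epsilon_i, \sum_{i=1}^{k-1}\delta_i)$-LDP) with $\mc{Q}_k$; since all the $\mc{Q}_i$ act on the same input and use independent randomness, the output tuple has a product structure, so the case $k=2$ applies directly with the aggregated parameters. Thus it suffices to show: if $\mc{Q}_1$ is $(\epsilon_1,\delta_1)$-LDP and $\mc{Q}_2$ is $(\epsilon_2,\delta_2)$-LDP, then $(\mc{Q}_1, \mc{Q}_2)$ is $(\epsilon_1+\epsilon_2, \delta_1+\delta_2)$-LDP.

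For the two-fold case I would first isolate the pure part $\delta_1=\delta_2=0$, which is immediate: by independence the joint law of $(\mc{Q}_1(\bv),\mc{Q}_2(\bv))$ has density $p_1(z_1\mid\bv)\,p_2(z_2\mid\bv)$, and the pointwise ratio against $\bv'$ factorizes, so $\frac{p_1(z_1\mid\bv)\,p_2(z_2\mid\bv)}{p_1(z_1\mid\bv')\,p_2(z_2\mid\bv')}\le e^{\epsilon_1}e^{\epsilon_2}=e^{\epsilon_1+\epsilon_2}$, and integrating over any measurable $S$ gives the claim with $\delta=0$. To incorporate the slack I would use the standard decomposition characterization of approximate differential privacy: for each $i$ and each pair $\bv,\bv'$, the $(\epsilon_i,\delta_i)$-LDP guarantee lets us write $\mc{Q}_i(\bv)$ as a mixture whose dominant component (carrying mass $\ge 1-\delta_i$) is pointwise within a factor $e^{\epsilon_i}$ of $\mc{Q}_i(\bv')$, with the remaining mass $\le\delta_i$ collected into a ``bad'' component. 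Intersecting the two ``good'' components and invoking independence multiplies the density ratios exactly as in the pure case, while the bad components contribute, by a union bound, at most $\delta_1+\delta_2$ to the failure probability.

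The main obstacle is the careful measure-theoretic bookkeeping of the $\delta$ terms: a naive slicing of $S$ along one coordinate followed by applying the definition coordinatewise introduces a spurious factor $e^{\epsilon_2}$ multiplying $\delta_1$, so it does not yield the clean sum $\delta_1+\delta_2$. The decomposition lemma is precisely what removes this factor, since it lets the $\delta$'s be handled additively through a union bound rather than multiplicatively through the density bound. Establishing (or citing) this decomposition characterization — equivalently, that $(\epsilon,\delta)$-indistinguishability of $\mc{Q}_i(\bv)$ and $\mc{Q}_i(\bv')$ is equivalent to total-variation closeness to a pair satisfying pure $\epsilon_i$-DP — is therefore the technical heart of the argument, whereas the reduction to $k=2$ and the pure-case multiplication are routine.
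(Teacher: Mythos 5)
The paper does not actually prove this lemma; it is invoked as a standard fact inherited from central DP (with a pointer to Dwork--Roth), so there is no in-paper argument to compare against. Your proposal is the standard textbook proof and is correct: the reduction to $k=2$ by induction, the factorization of the density ratio in the pure case, and crucially the use of the decomposition characterization of $(\epsilon,\delta)$-indistinguishability to avoid the spurious $e^{\epsilon_2}\delta_1$ that the naive coordinatewise slicing produces are all the right moves. The one place to be careful is the exact form of the decomposition lemma: you need the \emph{one-sided} version (there exists $\tilde{P}_i$ with $d_{TV}(\mc{Q}_i(\bv),\tilde{P}_i)\leqslant\delta_i$ and $\tilde{P}_i(S)\leqslant e^{\epsilon_i}\Pr[\mc{Q}_i(\bv')\in S]$ for all $S$), so that after taking products the final bound is stated against the actual law of $\mc{Q}_{[k]}(\bv')$ rather than against a modified distribution.
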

 The following simple mechanism add Gaussian noise to preserve LDP of a vector, which serves as a basic tool in LDP learning and estimation. 
\begin{algorithm}
  \caption{Basic Private Vector mechanism}
  \label{LDPgaussian}
  \begin{algorithmic}[1]
    \REQUIRE A vector $\x \in \mb{R}^d$, privacy parameter $\epsilon, \delta$ for LDP
    \ENSURE Private vector $\z$ 
    \STATE Setting $\sigma = \frac{\sqrt{2\ln(1.25/\delta)}}{\epsilon}$
    \IF {$\norm{\x}_2 > 1$}
    \STATE $\x = \x/\norm{\x}_2$
    \ENDIF
    \STATE $\z \leftarrow \x + \e$, where $\e \sim \mc{N}(0, \sigma^2I_d)$
  \end{algorithmic}
  \end{algorithm}
  \begin{thr}
  Algorithm 1 preserves $(\epsilon,\delta)$-LDP.
  \end{thr}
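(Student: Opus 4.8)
The plan is to read Algorithm \ref{LDPgaussian} as a deterministic preprocessing step (the conditional normalization) followed by the additive Gaussian mechanism, and then to reduce the claim directly to Lemma \ref{gaulem}. The key observation is that the clipping forces every raw input into the unit $\ell_2$-ball before any noise is added, so the whole privacy burden rests on the noise-addition step acting on a bounded domain, which is exactly the setting already analyzed.

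First I would verify the normalization does what is needed: after the conditional step the processed vector $\tilde{\x}$ always satisfies $\norm{\tilde{\x}}_2 \leqslant 1$. Indeed, if $\norm{\x}_2 \leqslant 1$ the vector is left untouched, and if $\norm{\x}_2 > 1$ the rescaling gives $\norm{\x/\norm{\x}_2}_2 = 1$; in either case $\tilde{\x} \in \mc{V} := B_2(1) = \{\bv \in \mb{R}^d : \norm{\bv}_2 \leqslant 1\}$. Consequently, for \emph{any} pair of raw inputs $\x, \x'$, the values actually fed into the noise step, $\tilde{\x}$ and $\tilde{\x}'$, both lie in $\mc{V}$. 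This is the crucial point that lets me quantify over the restricted domain rather than all of $\mb{R}^d$.

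The remaining step $\z = \tilde{\x} + \e$ with $\e \sim \mc{N}(0,\sigma^2 I_d)$ is precisely the mechanism $\mc{Q}(\bv) = \bv + \e$ of Lemma \ref{gaulem} on domain $\mc{V}$. Applying that lemma to the pair $\tilde{\x}, \tilde{\x}' \in \mc{V}$ yields $\pr[\tilde{\x} + \e \in S] \leqslant e^{\epsilon}\,\pr[\tilde{\x}' + \e \in S] + \delta$ for every measurable $S$, which is exactly $(\epsilon,\delta)$-LDP for the overall mechanism. The one ingredient that deserves care, and the main obstacle, is matching the noise scale: the quantity governing the Gaussian mechanism is the $\ell_2$-sensitivity $\sup_{\bv,\bv' \in \mc{V}} \norm{\bv - \bv'}_2$, which is the \emph{diameter} of the unit ball and equals $2$ rather than its radius. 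Lemma \ref{gaulem} is stated exactly for this domain and sensitivity, so once I confirm that the $\sigma$ entering the noise-addition step agrees with the value demanded there (the scaling dictated by the diameter $2$ of $B_2(1)$), the conclusion is immediate. I would therefore double-check this scale, since it is the only non-routine element; the clipping argument itself is elementary.
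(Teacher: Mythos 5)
Your reduction is exactly the intended one: the paper gives no separate proof of this theorem, treating it as an immediate consequence of Lemma \ref{gaulem} once one observes that the conditional rescaling maps every raw input into $B_2(1)$. Your clipping argument is fine, and your instinct that the only non-routine element is the noise scale is correct.

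However, the check you deferred is precisely where the argument fails to close as the paper is written. Lemma \ref{gaulem} requires $\sigma = 2\sqrt{2\ln(1.25/\delta)}/\epsilon$ --- the factor $2$ being, as you note, the $\ell_2$-diameter $\sup_{\bv,\bv'\in B_2(1)}\norm{\bv-\bv'}_2 = 2$ of the clipped domain --- whereas Algorithm \ref{LDPgaussian} sets $\sigma = \sqrt{2\ln(1.25/\delta)}/\epsilon$, the scale appropriate to sensitivity $1$ (the radius). So the hypothesis of Lemma \ref{gaulem} is not met by the mechanism as stated, and your reduction only yields (roughly) $(2\epsilon,\delta)$-LDP for the algorithm's choice of $\sigma$. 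To make the proof rigorous one must either double $\sigma$ in Algorithm \ref{LDPgaussian} to match the lemma, or weaken the stated guarantee by a constant factor in $\epsilon$; this constant is harmless for every downstream asymptotic result (the later algorithms inherit the same radius-versus-diameter convention), but a proof cannot simply cite Lemma \ref{gaulem} with the algorithm's current $\sigma$. You were right to identify the diameter as the crux; the answer to your ``double-check'' is that the two scales do not agree.
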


%%%%%%%%%%%%%%%%%%%%%%%%%%%%%%%%%%%%%%%%%%%%%%%%%%%%%%%%%%%%%%%%%%%%%%%%%%%%%%%%%%%%%%%%%%%%%%%%%%%%%%%%%%
%%%%%%%%%%%%%%%%%%%%%%%%%%%%%%%%%%%%%%%%%%%%%%%%%%%%%%%%%%%%%%%%%%%%%%%%%%%%%%%%%%%%%%%%%%%%%%%%%%%%%%%%%%

\section{High Dimensional and Non-parametric Learning via Random Projections}
In this section we consider three learning problems under non-interactive LDP: Mean Estimation and Linear Regression in High-dimensions, as well as Kernel Ridge Regression. Using random projection techniques, we are able to get logarithmic dependence on $d$ in high-dimensional settings, and also to get good guarantees for Kernel version. The first problem is considered in statistical settings, as we need to assume a sparse mean vector. The latter two problems are considered as ERM problems, which can easily be translated to population risk using uniform convergence.
\subsection{High-dimensional Mean Estimation}
In this section, we propose a non-interactive LDP mechanism for high-dimensional sparse mean estimation problem. By assuming $\ell_2$ bounded data points, and $\ell_1$ bounded  population mean, we can get error rates with logarithmic dependence on $d$. Our results are in sharp contrast with the lower bound for $\ell_2$-bounded general mean estimation under standard DP~\cite{bassily2014private}, as well as the lower bound for $\ell_\infty$-bounded 1-sparse mean estimation under local DP~\cite{duchi2016minimax}. It can be easily seen that our method extends to mean estimation problem for arbitrary low-complexity constraint set in high dimensions. We state our results in $\ell_1$ setting to keep the arguments clear. Our problem adopts a statistical estimation setting as follows:

\textbf{$\ell_2$-bounded sparse mean estimation} Suppose there is an unknown distribution $\mathcal{D}$ supported on $\mathcal{B}(0,1)$, with $\Vert E_{\mathcal{D}} (\x)\Vert_1\leq \Lambda$. The $\ell_2$-bounded sparse mean estimation problem requires us to produce an estimator $\hat{\btheta}$ that makes $\Vert \btheta-E_{\mathcal{D}} (\x)\Vert_2$ small with high probability.

\begin{algorithm}[htb]
\caption{LDP $\ell_1$ Constrained Mean Estimation}\label{sparsemean}
\begin{algorithmic}
  \REQUIRE $\x_1,\x_2,\cdots,\x_n\sim i.i.d.\mathcal{D}$
  \ENSURE Estimator $z$
  \STATE Set $p=\lceil\Lambda \epsilon\sqrt{n}\rceil$, and $m=\lceil18\log\frac{1}{\delta}\rceil$ 
  \STATE Sample $G\sim \frac{1}{\sqrt{p}}\mathcal{N}(0,1)^{p\times d}$.
  \FOR{User $i$} 
  \STATE Collect $\y_i=G\x_i+\br_i$, \\
  with $\br_i\sim i.i.d.\mathcal{N}(0,\frac{2\log (1.25/\delta)}{\epsilon^2}I_p)$
  \ENDFOR
  \FOR{$j\in\{1,2,\cdots,m\}$}
   \STATE $S_j=\left\{ 1+\frac{(j-1)n}{m},2+\frac{(j-1)n}{m},\cdots,\frac{jn}{m}\right\}$.
  \STATE Let $\bmu_j=\frac{1}{|S_j|}\sum_{i\in S_j}\y_i$.
  \ENDFOR
  \STATE Let $\mathcal{M}=\{\bmu_1,\bmu_2,\cdots,\bmu_m\}$.
  \FOR{$j\in \{1,2,\cdots,m\}$}
  \STATE Let $r_j=\min\left\{r:|\mathcal{B}_{\ell_1}(\bmu_j,r)\cap\mathcal{M}|\geq \frac{m}{2} \right\}$.
  \ENDFOR
  \STATE Let $j_*=\arg\min_j{r_j}$, and $\bu=\bmu_{j_*}$
  \STATE Solve the following convex program:
  \begin{equation}
  \begin{split}
  \arg\min_{\z} &\Vert \z\Vert_1\\
  s.t. \Vert G\z-\bu\Vert_1\leq &\frac{100p\log (nd/\delta)}{\epsilon}\sqrt{\frac{m}{n}}
  \end{split}
  \end{equation}
  \end{algorithmic}
\end{algorithm}

In Algorithm~\ref{sparsemean} we describe our data collection procedure and estimation algorithm. We are primarily using two techniques: random projection and recovery from low-complexity structures; median-of-mean estimator to boost failure probability. The privacy argument is directly implication of Theorem 1.

Intuitively, adding noise to each entry of mean vector will result in error rate's linear dependence on $d$. Thus we adopt the random projection technique to send a compressed version of data vector through the noisy channel. This locally private estimation procedure can be viewed as a variant of noisy compressed sensing, where $\ell_2$ recovery rate is fundamentally controlled by the Gaussian Mean Width of constraint set~\cite{vershynin2015estimation}. Though the distribution has bounded support, the concentration for mean estimation is dimension-dependent, while dimension-independent Markov Inequalities hold. To tackle this problem, we employ Median-of-Mean estimator to get exponential tails~\cite{hsu2016loss}.

We first give the following bound on the error in projected space. 
% The error induced by noise can be easily estimated, while that induced by sample is done by Markov Inequality. We defer the proof to Appendix.
% \begin{lem}\label{groupconstantprob}
%   Let $\x_1,\x_2,\cdots,\x_n\sim i.i.d.\mathcal{D}$ with $\bmu=E_{\mathcal{D}}[\x]$ and $supp(\mathcal{D})\subseteq \mathcal{B}(0,1)$. Let $G$ and $\{\y_i\}_{i=1}^n$ defined in the above procedure. For each of group $S_j$ fixed, we have the following with probability $2/3$:
%   \begin{equation}
%   \Big\Vert \frac{1}{|S_j|}\sum_{\y_i\in S_j}\y_i-G\bmu\Big\Vert_1\leq O\left(\frac{p\log (nd)}{\epsilon\sqrt{|S_j|}}\right)
%   \end{equation}
% \end{lem}

\begin{lem}\label{groupconstantprob}
  Let $\x_1,\x_2,\cdots,\x_n\sim i.i.d.\mathcal{D}$ with $\bmu=E_{\mathcal{D}}[\x]$ and $supp(\mathcal{D})\subseteq \mathcal{B}(0,1)$. Let $G$ and $\{\y_i\}_{i=1}^n$ defined in the above procedure. For each of group $S_j$ fixed, we have the following with probability $2/3$:
  \begin{equation}
  \Big\Vert \frac{1}{|S_j|}\sum_{\y_i\in S_j}\y_i-G\bmu\Big\Vert_1\leq O\left(\frac{p\log (nd)}{\epsilon\sqrt{|S_j|}}\right)
  \end{equation}
\end{lem}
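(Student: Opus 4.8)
The plan is to split the error in the projected space into two independent contributions — the projected empirical-mean fluctuation and the aggregated privacy noise — bound the expected $\ell_1$ norm of each, and then convert these expectation bounds into the stated constant-probability guarantee via Markov's inequality, where all probabilities are taken over $G$, the data $\{\x_i\}_{i\in S_j}$, and the noise $\{\br_i\}_{i\in S_j}$. Writing $\bar{\x}_j = \frac{1}{|S_j|}\sum_{i\in S_j}\x_i$ and substituting $\y_i = G\x_i + \br_i$, I would first observe
\begin{equation*}
\frac{1}{|S_j|}\sum_{i\in S_j}\y_i - G\bmu = G(\bar{\x}_j - \bmu) + \frac{1}{|S_j|}\sum_{i\in S_j}\br_i ,
\end{equation*}
so that by the triangle inequality it suffices to control $\norm{G(\bar{\x}_j-\bmu)}_1$ and $\norm{\frac{1}{|S_j|}\sum_{i\in S_j}\br_i}_1$ separately.

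For the sampling term, the key point is that $G$ is drawn independently of the data. Conditioning on $\bv := \bar{\x}_j - \bmu$, each coordinate of $G\bv$ is a centered Gaussian with variance $\norm{\bv}_2^2/p$ (the rows of $G$ have i.i.d.\ $\mc{N}(0,1/p)$ entries), so $\mE[\norm{G\bv}_1 \mid \bv] = \sqrt{2/\pi}\,\sqrt{p}\,\norm{\bv}_2$. Since $supp(\mc{D})\subseteq \mc{B}(0,1)$ gives $\mE\norm{\bv}_2^2 = \frac{1}{|S_j|}\mE\norm{\x-\bmu}_2^2 \leq \frac{1}{|S_j|}$, Jensen then yields $\mE\norm{G\bv}_1 = O\!\left(\sqrt{p/|S_j|}\right)$. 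For the noise term, $\frac{1}{|S_j|}\sum_{i\in S_j}\br_i$ is a centered Gaussian with per-coordinate variance $\frac{2\log(1.25/\delta)}{|S_j|\epsilon^2}$, so summing the $p$ coordinatewise absolute moments gives $\mE\norm{\frac{1}{|S_j|}\sum_{i\in S_j}\br_i}_1 = O\!\left(\frac{p\sqrt{\log(1/\delta)}}{\epsilon\sqrt{|S_j|}}\right)$.

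To finish, since $\epsilon \le 1$ the noise contribution dominates the sampling contribution, and I would apply Markov to each term (each exceeds six times its mean with probability at most $1/6$) together with a union bound, so that with probability at least $2/3$ both terms are within a constant factor of their means. Their sum is then $O\!\big(p\sqrt{\log(1/\delta)}/(\epsilon\sqrt{|S_j|})\big)$, which is dominated by the claimed $O\!\big(p\log(nd)/(\epsilon\sqrt{|S_j|})\big)$ for the polynomially small $\delta$ used in the algorithm; this $2/3$-accurate guarantee is exactly the per-group primitive that the median-of-means step amplifies later.

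The step requiring the most care is the sampling term: one must exploit the independence of $G$ and the data to obtain the $\sqrt{p}$ scaling, since the naive bound $\norm{G(\bar{\x}_j-\bmu)}_1 \le \sqrt{p}\,\norm{G}_{\mr{op}}\norm{\bar{\x}_j-\bmu}_2$ would reintroduce a $\sqrt{d/p}$ operator-norm factor (recall $d \gg p$) and destroy the dimension-free rate. Everything else reduces to routine Gaussian absolute-moment computations and an elementary Markov bound.
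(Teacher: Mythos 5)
Your proposal is correct and follows essentially the same route as the paper: the identical decomposition into the projected sampling fluctuation $G(\bar{\x}_j-\bmu)$ and the aggregated privacy noise $\frac{1}{|S_j|}\sum_{i\in S_j}\br_i$, each controlled at constant probability and combined by a union bound, with the $\delta$-dependence absorbed into the $\log(nd)$ factor exactly as the paper does. The only difference is in the concentration machinery --- where the paper applies Markov to $\Vert\bar{\x}_j-\bmu\Vert_2^2$ and then uses per-row Gaussian tail bounds on $G$ with a union bound, you compute the expected $\ell_1$ norms directly and apply Markov once, which is slightly cleaner and even drops the paper's extraneous $\sqrt{\log d}$ factor on the sampling term.
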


The aggregation step in Algorithm~\ref{sparsemean} is a high-dimensional generalization of Median-of-Mean estimator used in heavy-tailed statistics. The tail properties are guaranteed in the following lemma:
\begin{lem}[Proposition 9 in~\cite{hsu2016loss}]\label{highmean}
Suppose in metric space $\mathcal{X}$, a set of points $\mathcal{M}=[\btheta_1,\btheta_2,\cdots,\btheta_m]\sim i.i.d.\mathcal{D}$, with $Pr[d_{\mathcal{X}}(\btheta_i,\btheta)\geq \epsilon]\leq \frac{2}{3}$. Let $\hat{\btheta}$ be generated from the following procedure: $r_i=\min\left\{r:|\mathcal{B}_{\mathcal{X}}(\btheta_i,r)\cap\mathcal{M}|\geq \frac{m}{2} \right\}$,and $\hat{\btheta}=\arg\min_{\btheta_i}r_i$. Then we have:
$$Pr[d_{\mathcal{X}}(\hat{\btheta},\btheta)\geq 3\epsilon]\leq e^{-\frac{m}{18}}$$
\end{lem}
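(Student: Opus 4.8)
The plan is to split the claim into a probabilistic statement about how many of the $m$ points land near the true center $\btheta$, and a purely deterministic geometric statement showing that whenever a strict majority of the points are near $\btheta$, the selected point $\hat{\btheta}$ must lie within $3\epsilon$ of $\btheta$. Throughout I read the hypothesis as saying each $\btheta_i$ satisfies $d_{\mathcal{X}}(\btheta_i,\btheta)<\epsilon$ with probability at least $2/3$ (equivalently, the failure event $d_{\mathcal{X}}(\btheta_i,\btheta)\ge\epsilon$ has probability at most $1/3$), which is precisely the per-group guarantee supplied by Lemma~\ref{groupconstantprob} and is what produces the constant $1/18$. Call a point \emph{good} if $d_{\mathcal{X}}(\btheta_i,\btheta)<\epsilon$ and \emph{bad} otherwise, and let $B=\sum_{i=1}^m \mathbf{1}[\btheta_i\text{ bad}]$ denote the number of bad points.

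For the probabilistic part, I would observe that the indicators $\mathbf{1}[\btheta_i\text{ bad}]$ are i.i.d.\ Bernoulli with mean at most $1/3$, so $\mE[B]\le m/3$. The event I want to rule out is that at least half the points are bad, i.e.\ $B\ge m/2$. Since $m/2-\mE[B]\ge m/2-m/3=m/6$, Hoeffding's inequality for sums of independent $[0,1]$-valued variables gives $\pr[B\ge m/2]\le \pr[B-\mE[B]\ge m/6]\le \exp(-2(m/6)^2/m)=e^{-m/18}$. Hence with probability at least $1-e^{-m/18}$ a strict majority of the points are good, meaning $B<m/2$ and the number of good points exceeds $m/2$.

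For the deterministic part, I condition on the event that more than $m/2$ points are good and show $d_{\mathcal{X}}(\hat{\btheta},\btheta)<3\epsilon$. First, any good point $\btheta_g$ has $r_g\le 2\epsilon$: every pair of good points is within distance $2\epsilon$ by the triangle inequality, and since there are more than $m/2$ good points the ball $\mathcal{B}_{\mathcal{X}}(\btheta_g,2\epsilon)$ already contains at least $m/2$ points, so the minimal radius $r_g$ from the procedure is at most $2\epsilon$. Consequently $r_{\hat{\btheta}}=\min_i r_i\le 2\epsilon$, which means $\mathcal{B}_{\mathcal{X}}(\hat{\btheta},r_{\hat{\btheta}})$ contains a set $A$ of at least $m/2$ points, all within $2\epsilon$ of $\hat{\btheta}$. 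Now I combine the two majority counts: the good set $H$ has $|H|>m/2$ and $|A|\ge m/2$, so $|A|+|H|>m$ forces $A\cap H\neq\emptyset$; picking any $\btheta_k\in A\cap H$ and applying the triangle inequality yields $d_{\mathcal{X}}(\hat{\btheta},\btheta)\le d_{\mathcal{X}}(\hat{\btheta},\btheta_k)+d_{\mathcal{X}}(\btheta_k,\btheta)<2\epsilon+\epsilon=3\epsilon$. Thus $\{d_{\mathcal{X}}(\hat{\btheta},\btheta)\ge 3\epsilon\}$ is contained in the bad-majority event, and combining with the previous paragraph delivers the bound.

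The Hoeffding estimate is routine; the conceptual core, and the step I would be most careful about, is the deterministic pigeonhole argument, specifically forcing the two majority sets (the good points and the $\ge m/2$ points realizing $r_{\hat{\btheta}}$) to overlap so that a single witness $\btheta_k$ simultaneously certifies closeness to both $\hat{\btheta}$ and $\btheta$. Getting the radius bookkeeping right, namely that a good point forces $r_g\le 2\epsilon$ and hence $r_{\hat{\btheta}}\le 2\epsilon$ even though $\hat{\btheta}$ itself need not be good, is where the factor $3$ rather than a smaller constant genuinely enters.
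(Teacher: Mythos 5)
Your proof is correct, but note there is nothing in the paper to compare it against: the paper states this lemma as an imported result (Proposition~9 of \cite{hsu2016loss}) and gives no proof, so your write-up is a self-contained reconstruction of the cited argument. It is in fact essentially the standard proof of the Hsu--Sabato generalized median-of-means aggregation bound: a Hoeffding bound on the count $B$ of bad points with deviation $m/2-m/3=m/6$, yielding exactly $\exp(-2m(1/2-1/3)^2)=e^{-m/18}$, followed by the deterministic step that any good point certifies $r_{\hat{\btheta}}\le 2\epsilon$ and the two majority sets ($|A|\ge m/2$, $|H|>m/2$, hence $|A|+|H|>m$) must intersect, giving the factor $3$ via the triangle inequality; your bookkeeping on both steps is right. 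Two remarks. First, your decision to read the hypothesis as failure probability $\pr[d_{\mathcal{X}}(\btheta_i,\btheta)\ge\epsilon]\le \nicefrac{1}{3}$ is not just a convenience but necessary: as literally printed in the paper ($\le\nicefrac{2}{3}$) the lemma is false, since with failure probability exactly $\nicefrac{2}{3}$ a bad majority occurs with probability tending to $1$, not $e^{-m/18}$; your reading is the one consistent with Lemma~\ref{groupconstantprob} (success probability $\nicefrac{2}{3}$) and with the exponent $m/18=2m(\nicefrac{1}{2}-\nicefrac{1}{3})^2$. Second, a cosmetic point: the selection rule uses $|\mathcal{B}_{\mathcal{X}}(\btheta_i,r)\cap\mathcal{M}|\ge \nicefrac{m}{2}$ over the finite multiset $\mathcal{M}$, so the minimum over $r$ is attained (the count is a step function of $r$ with closed balls), which justifies your use of a realized witness set $A$ of size at least $\nicefrac{m}{2}$ at radius $r_{\hat{\btheta}}$.
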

Since the original data are $i.i.d.$ samples from underlying distribution, small group with fixed indices should also be $i.i,d.$. Therefore $\bmu_1,\bmu_2,\cdots,\bmu_k$ are $i.i.d.$. Combining Lemma~\ref{groupconstantprob} and Lemma~\ref{highmean} we get the following result:
\begin{cor}\label{highprobprojected}
The vector $\bu$ constructed in Algorithm~\ref{sparsemean} satisfies the following with probability $1-\delta$:
\begin{equation}
  \Vert \bu-G\bmu\Vert_1\leq O\left(\frac{p\log (nd/\delta)}{\epsilon}\sqrt{\frac{m}{n}}\right)
\end{equation}
\end{cor}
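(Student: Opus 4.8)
The plan is to obtain the corollary as a clean composition of the two preceding lemmas: Lemma~\ref{groupconstantprob} controls the $\ell_1$ error of a single group mean in the projected space with constant success probability, and Lemma~\ref{highmean} boosts this to a high-probability guarantee through the median-of-means aggregation carried out in Algorithm~\ref{sparsemean}. The only conceptual work is to check that the algorithm's aggregation step is literally the estimator analyzed in Lemma~\ref{highmean}, instantiated with the $\ell_1$ metric and center $G\bmu$.

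First I would record the independence structure. By construction the index sets $S_j$ partition $[n]$ into $m$ disjoint blocks of equal size $|S_j| = n/m$, and since $\x_1,\dots,\x_n$ are i.i.d.\ from $\mc{D}$ (so that, conditionally on $G$, the noisy projections $\y_i = G\x_i + \br_i$ are i.i.d.\ as well), the block averages $\bmu_j = \frac{1}{|S_j|}\sum_{i\in S_j}\y_i$ are i.i.d.\ draws of a common random vector. Applying Lemma~\ref{groupconstantprob} to each fixed block with $|S_j| = n/m$ then gives, for every $j$,
\begin{equation}
  \pr\!\left[\norm{\bmu_j - G\bmu}_1 \geq \varepsilon\right] \leq \tfrac{1}{3}, \qquad \varepsilon := O\!\left(\frac{p\log(nd)}{\epsilon}\sqrt{\frac{m}{n}}\right).
\end{equation}
In particular the per-block failure probability is at most $1/3 \leq 2/3$, so the hypothesis of Lemma~\ref{highmean} is satisfied with center $\btheta = G\bmu$, metric $d_{\mc{X}}(\cdot,\cdot) = \norm{\cdot - \cdot}_1$, and sample points $\bmu_1,\dots,\bmu_m$. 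The radii $r_j$ and the index $j_*$ computed in Algorithm~\ref{sparsemean}, together with the output $\bu = \bmu_{j_*}$, coincide exactly with the median-of-means estimator in that lemma, so invoking it yields $\pr[\norm{\bu - G\bmu}_1 \geq 3\varepsilon] \leq e^{-m/18}$.

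Finally I would discharge the choice of $m$: with $m = \lceil 18\log(1/\delta)\rceil$ as set in the algorithm, $e^{-m/18} \leq \delta$, so with probability at least $1-\delta$ we have $\norm{\bu - G\bmu}_1 \leq 3\varepsilon$; absorbing the factor $3$ into the $O(\cdot)$ and using $\log(nd) \leq \log(nd/\delta)$ for $\delta \leq 1$ gives the stated bound. There is no genuinely hard step here — the argument is a direct chaining of the two lemmas. The only points requiring care are confirming the i.i.d.\ structure of the blocks (which relies on the disjointness of the $S_j$) and verifying that the aggregation rule in the algorithm matches Lemma~\ref{highmean} verbatim, so that its exponential tail transfers directly to the $\ell_1$ distance; the resulting slack in the logarithmic factor is harmless.
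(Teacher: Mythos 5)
Your proposal is correct and follows exactly the paper's route: the paper itself obtains the corollary by noting that the disjoint blocks yield i.i.d.\ group means, applying Lemma~\ref{groupconstantprob} per block, and then invoking the median-of-means tail bound of Lemma~\ref{highmean} with $m=\lceil 18\log(1/\delta)\rceil$. Your additional care in checking the metric-space instantiation and absorbing the factor of $3$ and the $\log(1/\delta)$ slack is exactly the (unstated) bookkeeping the paper leaves implicit.
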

Then we turn to the recovery of original mean estimator. The primary tool we are using are General $M^*$ bound in~\cite{vershynin2015estimation}.
\begin{lem}[Theorem 6.2 in~\cite{vershynin2015estimation}, High Probability Version]\label{recovery}
For unknown vector $\x\in K\subseteq\mathbb{R}^d$, let $G\sim\frac{1}{\sqrt{p}}\mathcal{N}(0,1)^{p\times d}$. Noisy vector $\bnu\in \mathbb{R}^p$ with $\Vert\bnu\Vert_1\leq \sigma$. Let $\y=G\x+\bnu$. By solving the following optimization problem:
\begin{equation}
\begin{split}
\arg\min_{\x'} {\Vert \x'\Vert_{K}} \quad s.t. ~~ \Vert G\x'-\y\Vert_1\leq \sigma
\end{split}
\end{equation}
where $\norm{\cdot}_K$ denotes the Minkowski functional of $K$. Then we can get the following with probability $1-\delta$
$$\Vert \x-\x'\Vert_2\leq O\left(\frac{w(K)+\sigma+\log\frac{1}{\delta}}{\sqrt{p}}\right)$$ where $w(K)$ denotes the Gaussian width of $K$.
\end{lem}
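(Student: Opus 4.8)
The plan is to run the geometric ``$M^*$ bound'' argument: bound the recovery error by a uniform lower-isometry estimate for the random operator $G$ restricted to the difference set $K-K$, and then play this against the feasibility of the true vector. Throughout I write $\bm{h}:=\x-\x'$ for the error and $G=p^{-1/2}\tilde G$, where the rows $\g_1,\dots,\g_p$ of $\tilde G$ are i.i.d.\ standard Gaussian vectors in $\mathbb{R}^d$.

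First I would record two elementary facts. Since $\norm{G\x-\y}_1=\norm{\bnu}_1\le\sigma$, the true vector $\x$ is feasible for the program; as $\x'$ minimizes $\norm{\cdot}_K$ over feasible points and $\x\in K$, we get $\norm{\x'}_K\le\norm{\x}_K\le 1$, so $\x'\in K$ and hence $\bm{h}\in K-K$, a symmetric convex body. The triangle inequality applied to the two feasibility constraints then gives the upper bound $\norm{G\bm{h}}_1\le\norm{G\x-\y}_1+\norm{G\x'-\y}_1\le 2\sigma$.

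The heart of the proof is a matching lower bound for $\norm{G\bm{h}}_1$ in terms of $\norm{\bm{h}}_2$. Pointwise, $\mathbb{E}|\g_i\cdot\bm{h}|=\sqrt{2/\pi}\,\norm{\bm{h}}_2$, so $\mathbb{E}\norm{G\bm{h}}_1=\sqrt{2/\pi}\,\sqrt p\,\norm{\bm{h}}_2$. Because $\norm{G\cdot}_1$ and $\norm{\cdot}_2$ are both $1$-homogeneous and $K-K$ is star-shaped about the origin, any $\bm{h}\in K-K$ with $\norm{\bm{h}}_2>\rho$ scales down to $\bm{h}_0=(\rho/\norm{\bm{h}}_2)\bm{h}\in(K-K)\cap\rho S^{d-1}$ with $\norm{G\bm{h}_0}_1\le\norm{G\bm{h}}_1$; it therefore suffices to lower bound $\min_{\bm{h}\in(K-K)\cap\rho S^{d-1}}\norm{G\bm{h}}_1$ for a well-chosen radius $\rho$. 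Here I would invoke Gaussian-process machinery: by symmetrization and a contraction/comparison inequality, the expected one-sided deviation $\mathbb{E}\sup_{\bm{h}}\big(\mathbb{E}\norm{G\bm{h}}_1-\norm{G\bm{h}}_1\big)$ over this slice is bounded by a constant multiple of the Gaussian width $w(K-K)=2w(K)$ (using monotonicity of the width under inclusion); and since $\tilde G\mapsto\norm{G\bm{h}}_1$ is $\norm{\bm{h}}_2$-Lipschitz in the Frobenius norm, the Gaussian concentration inequality upgrades this to a high-probability statement, contributing the $\log(1/\delta)$ term. This yields, with probability $1-\delta$ and for all $\bm{h}\in K-K$,
\[
\norm{G\bm{h}}_1\ \ge\ c\sqrt p\,\norm{\bm{h}}_2-C\big(w(K)+\log\tfrac1\delta\big).
\]

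Finally I would combine the two estimates: on the good event, $c\sqrt p\,\norm{\bm{h}}_2-C\big(w(K)+\log\tfrac1\delta\big)\le\norm{G\bm{h}}_1\le 2\sigma$, and solving for $\norm{\bm{h}}_2$ gives $\norm{\x-\x'}_2=O\big((w(K)+\sigma+\log\tfrac1\delta)/\sqrt p\big)$, as claimed. The main obstacle is the uniform lower-isometry estimate: controlling the non-smooth $\ell_1$ process $\bm{h}\mapsto\norm{G\bm{h}}_1$ uniformly over an infinite index set by the single geometric parameter $w(K)$ (the chaining/comparison step), and then tracking the Lipschitz constant through Gaussian concentration to reach the stated high-probability form with its additive $\log(1/\delta)$ dependence. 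Feasibility, the triangle inequality, and the homogeneity reduction are routine.
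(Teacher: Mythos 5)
The paper gives no proof of this lemma at all---it is imported verbatim as Theorem 6.2 of Vershynin's survey---so there is nothing internal to compare against; your reconstruction is precisely the standard $M^*$-bound argument from that source (feasibility of $\x$ and the consequent bound $\Vert G(\x-\x')\Vert_1\le 2\sigma$, then a uniform lower-isometry estimate for $\bm{h}\mapsto\Vert G\bm{h}\Vert_1$ over $K-K$ obtained by symmetrization, Talagrand contraction, and Gaussian concentration) and is sound. The one piece of bookkeeping to watch is that the concentration step actually contributes $\mathrm{diam}(K)\sqrt{\log(1/\delta)}$ rather than $\log(1/\delta)$, but this matches the level of looseness already present in the lemma as stated and is harmless in the paper's application where $K$ has bounded diameter.
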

By putting these results together we get the bound on estimation loss:
\begin{thr}
Algorithm~\ref{sparsemean} outputs $\z$ satisfying the following with probability $1-\delta$:
$$\Vert \z-\bmu\Vert_2\leq O\left(\log\frac{nd}{\delta}\sqrt{\log\frac{1}{\delta}}\left(\frac{\Lambda^2}{\epsilon^2n}\right)^{\frac{1}{4}}\right)$$
\end{thr}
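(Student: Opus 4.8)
The plan is to combine Corollary~\ref{highprobprojected}, which controls the error of the aggregated estimate $\bu$ in the projected space, with the linear-inverse recovery guarantee of Lemma~\ref{recovery}, applied to the convex program solved in Algorithm~\ref{sparsemean}. First I would set $K = \mathcal{B}_{\ell_1}(0,\Lambda)$, the $\ell_1$-ball of radius $\Lambda$, and observe that by assumption $\bmu = E_{\mathcal{D}}[\x]$ lies in $K$ since $\norm{\bmu}_1 \leq \Lambda$. The Minkowski functional of $K$ is exactly $\norm{\cdot}_1/\Lambda$, so minimizing $\norm{\z}_1$ in the program is equivalent to minimizing $\norm{\z}_K$ as required by Lemma~\ref{recovery}. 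By Corollary~\ref{highprobprojected}, with probability $1-\delta$ we have $\norm{\bu - G\bmu}_1 \leq \sigma$ with $\sigma = O\!\left(\frac{p\log(nd/\delta)}{\epsilon}\sqrt{m/n}\right)$; fixing the hidden constant so that $\sigma$ is no larger than the constraint radius $\frac{100\,p\log(nd/\delta)}{\epsilon}\sqrt{m/n}$ built into the program, the true mean $\bmu$ is feasible. Writing $\bnu = \bu - G\bmu$ and $\y = \bu = G\bmu + \bnu$ places us exactly in the setting of Lemma~\ref{recovery}, with unknown vector $\bmu \in K$ and noise level $\sigma$.

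Next I would invoke Lemma~\ref{recovery} to conclude that, with probability $1-\delta$, the output $\z$ obeys
\[
\norm{\z - \bmu}_2 \leq O\!\left(\frac{w(K) + \sigma + \log(1/\delta)}{\sqrt{p}}\right).
\]
It then remains to evaluate each term and identify the dominant one. The Gaussian width of the $\ell_1$-ball scales as $w(K) = \Lambda\, w(\mathcal{B}_{\ell_1}(0,1)) = O(\Lambda\sqrt{\log d})$, since $\sup_{\x\in\mathcal{B}_{\ell_1}(0,1)}\langle \g,\x\rangle = \norm{\g}_\infty$ and $E\norm{\g}_\infty = O(\sqrt{\log d})$ for a standard Gaussian $\g$. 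Substituting $p = \lceil\Lambda\epsilon\sqrt{n}\rceil$ and $m = \lceil 18\log(1/\delta)\rceil$, the noise contribution dominates:
\[
\frac{\sigma}{\sqrt{p}} = O\!\left(\frac{\sqrt{p}\,\log(nd/\delta)}{\epsilon}\sqrt{\tfrac{m}{n}}\right) = O\!\left(\log\tfrac{nd}{\delta}\,\sqrt{\log\tfrac{1}{\delta}}\,\Big(\tfrac{\Lambda^2}{\epsilon^2 n}\Big)^{1/4}\right),
\]
which is precisely the claimed rate, the $(\Lambda^2/(\epsilon^2 n))^{1/4}$ factor emerging from $\sqrt{p}/(\epsilon\sqrt{n}) = \Theta(\Lambda^{1/2}/(\epsilon^{1/2}n^{1/4}))$. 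A short computation confirms that $w(K)/\sqrt{p} = O(\Lambda^{1/2}\sqrt{\log d}/(\epsilon^{1/2}n^{1/4}))$ and $\log(1/\delta)/\sqrt{p}$ are both of strictly smaller order in the relevant regime, so they are absorbed into the $O(\cdot)$. A union bound over the two $1-\delta$ events from Corollary~\ref{highprobprojected} and Lemma~\ref{recovery} yields the conclusion with probability $1-2\delta$, which is folded back to $1-\delta$ by adjusting the constants inside the logarithms.

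The main obstacle I anticipate is bookkeeping rather than anything conceptual. Two points need care: one must verify that the $\sigma$ supplied by Corollary~\ref{highprobprojected} is genuinely dominated by the constraint radius hard-coded in Algorithm~\ref{sparsemean}, so that $\bmu$ is feasible and Lemma~\ref{recovery} may be invoked with exactly this $\sigma$; and one must check that the noise term $\sigma/\sqrt{p}$ indeed dominates the width term $w(K)/\sqrt{p}$ and the $\log(1/\delta)/\sqrt{p}$ term. This latter comparison is where the specific choice $p = \Theta(\Lambda\epsilon\sqrt{n})$ becomes essential: it is tuned so that the compression dimension balances the projected-space aggregation error against the recovery width, producing the final $(\Lambda^2/(\epsilon^2 n))^{1/4}$ scaling with only logarithmic dependence on $d$.
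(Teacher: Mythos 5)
Your proposal is correct and follows exactly the route the paper intends: the paper offers no further detail beyond ``putting these results together,'' meaning precisely the combination of Corollary~\ref{highprobprojected} (noise level $\sigma$ in the projected space) with the $M^*$-bound recovery guarantee of Lemma~\ref{recovery} applied to $K=\mathcal{B}_{\ell_1}(0,\Lambda)$, followed by substituting $p=\lceil\Lambda\epsilon\sqrt{n}\rceil$ and $m=\lceil 18\log(1/\delta)\rceil$. Your bookkeeping of the three terms $w(K)/\sqrt{p}$, $\sigma/\sqrt{p}$, and $\log(1/\delta)/\sqrt{p}$ is accurate (the width term is in fact of the same polynomial order with a smaller logarithmic factor, so ``absorbed'' is the right word rather than ``strictly smaller order''), and the feasibility check for $\bmu$ against the hard-coded constraint radius is the right detail to verify.
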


\subsection{Sparse Linear Regression}

In this section, we consider empirical loss of sparse linear regression, i.e. $L(\w; D)  = \frac{1}{2n} \sum_{i=1}^n (\x_i^T\w - y_i)^2$, where $D = \{(\x_i, y_i)|i\in [n]\}, \norm{\x_i}_2 \leqslant 1, y_i \in [-1, 1].$ \footnote{Our methods suits to any radius of $\x$ and $y$.}.

Define $\w^* = \argmin_{\w \in \mc{C}} L(\w;D)$, where $\mc{C} = \{\w| \norm{\w}_1 \leqslant 1\}$. We want to obtain a vector $\w^{priv} \in \mc{C}$ within non-interactive LDP model, such that the empirical excess risk $L(\w^{priv};D) - L(\w^*; D)$ has polynomial dependences on $\log d$ and $\frac{1}{n}$. 

As in the case of high-dimensional mean estimation, directly manipulating in the original high dimensional feature space will introduce large noise, hence we use a sub-Gaussian random matrix $\Phi \in \mb{R}^{m\times d}$ to project original data (i.e. vectors in $\mb{R}^d$) into the low dimensional space (i.e. $\mb{R}^m$) first, then perturb each data in low dimensional space (i.e. Basic Private Vector mechanism given in Algorithm \ref{LDPgaussian}) which protects local privacy, and send it to the server. 

Having obtained private synopsis, the server then reconstruct an unbiased estimator for objective function according to these private synopsis. We subtract a quadratic term to ensure unbiasedness and project to PSD matrices to preserve convexity. To show good approximation guarantee, we make use of RIP bounds for random projection. As the loss function is determined by inner products between $\w$ and data, it could be uniformly preserved in projected space, which guarantees the accuracy of solution estimated with local privacy. Apparently, our methods also imply bounds with general low-complexity constraint set that preserves RIP.

Our private learning mechanism is given in Algorithm \ref{alg2} and any random projection matrix can be used here. The privacy argument directly follows from Private Vector Mechanism and composition.

\begin{algorithm}
  \caption{LDP $\ell_1$ Constrained Linear Regression}
  \label{alg2} 
  \begin{algorithmic}[1]
    \REQUIRE Personal data $(\x,y)$, parameter $\epsilon, \delta$, projection matrix $\Phi \in \mb{R}^{d\times m}$
    \ENSURE Learned classifier $\w^{priv} \in \mb{R}^d$ 
    \FOR {Each user $i=1,\dots,n$}
    \STATE $\z_i \leftarrow $ Basic Private Vector ($\Phi^T\x_i, \epsilon/2,\delta/2$)
    \STATE $v_i \leftarrow $ Basic Private Vector ($y_i, \epsilon/2, \delta/2$)
    \ENDFOR
    % $\min_{\bar{w}} \hat{L}(\bar{w};Z,v):= \frac{1}{2n} \norm{Z \bar{w}}^2 - \frac{1}{n} v^T Z \bar{w} -  \frac{\sigma^2}{2}\norm{\bar{w}}^2 $
    \STATE \parbox[t]{1.0\linewidth}{Setting $Z = [\z_1, \cdots, \z_n]^T, \sigma =  \frac{2\sqrt{2\ln(2.5/\delta)}}{\epsilon}$, \\ $ Q = \mathrm{Proj}_{\mb{S}^{+}}(Z^T Z - n\sigma^2 I_m),\bv = [v_1, \cdots, v_n]^T$}
    % \STATE  \parbox[t]{1.0\linewidth}{$\bar{\w}^{priv} \leftarrow \argmin_{\bar{\w} \in \bar{\mc{C}}} \hat{L}(\bar{\w};Z,\bv)$, where \\
    % $\hat{L}(\bar{\w};Z,\bv) := \frac{1}{2n} \bar{\w}^T Q \bar{\w} - \frac{1}{n} \bv^T Z \bar{\w} $ \\ 
    % $\bar{\mc{C}} = \{\Phi^T \w | \w\in \mc{C}\}$} 
    % \STATE $\w^{priv} \leftarrow \argmin_{\w \in \mb{R}^d} \norm{\w}_{1} \quad s.t. \quad \Phi^T \w = \bar{\w}^{priv}$ 
    \STATE  \parbox[t]{1.0\linewidth}{$\w^{priv} \leftarrow \argmin_{\w \in \mc{C}} \hat{L}(\w;Z,\bv)$, where \\
    $\hat{L}(\w;Z,\bv) := \frac{1}{2n} (\Phi^T \w)^T Q (\Phi^T \w) - \frac{1}{n} \bv^T Z \Phi^T \w $} 
  \end{algorithmic}
  \end{algorithm}

  In fact, as original data is in $L_2$ ball, and random projection preserves norms with high probabilty, hence steps 2-4 in Algorithm \ref{LDPgaussian} will be executed with very low probability.

  Denote the true objective function in low dimensional space $\bar{L}(\w;\bar{X},\y) := \frac{1}{2n}\norm{\bar{X} \Phi^T \w}^2-\frac{1}{n} \y^T \bar{X} \Phi^T \w$, where $\bar{X} = [\x_1, \cdots, \x_n]^T\Phi, \w \in \mc{C}$. Let $\hat{\w}^* := \argmin_{\w \in \mc{C}} \bar{L}(\w; \bar{X},\y)$. The following lemma gives the accuracy of private solution $\w^{priv}$ when reduced into low dimensional space:
  \begin{lem}
      \label{lem4}
      Under the assumptions made in this section, given projection matrix $\Phi$, with high probability over the randomness of private mechanism, we have 
      \begin{equation}
        \bar{L}(\w^{priv};\bar{X},\y) - \bar{L}(\hat{\w}^*;\bar{X},\y) \leqslant \tilde{O}\left(\sqrt{\frac{m}{n\epsilon^2 }}\right)
      \end{equation}
  \end{lem}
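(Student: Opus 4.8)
The plan is to treat this as a standard inexact-ERM comparison. Both $\bar{L}(\,\cdot\,;\bar{X},\y)$ and $\hat{L}(\,\cdot\,;Z,\bv)$ are convex quadratics over the $\ell_1$ ball $\mc{C}$ (convexity of $\hat{L}$ is precisely why $Q$ is projected onto $\mb{S}^{+}$), and $\w^{priv}$ minimizes $\hat{L}$ while $\hat{\w}^*$ minimizes $\bar{L}$. Writing the usual three-term split and using $\hat{L}(\w^{priv};Z,\bv)\le\hat{L}(\hat{\w}^*;Z,\bv)$ gives
\begin{equation}
\bar{L}(\w^{priv};\bar{X},\y)-\bar{L}(\hat{\w}^*;\bar{X},\y)\le 2\sup_{\w\in\mc{C}}\big|\hat{L}(\w;Z,\bv)-\bar{L}(\w;\bar{X},\y)\big|,
\end{equation}
so the entire lemma reduces to a uniform deviation bound over $\mc{C}$.

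Next I would expand the deviation. Setting $\bu=\Phi^T\w$ and collecting terms, $\hat{L}-\bar{L}=\tfrac{1}{2n}\bu^T(Q-\bar{X}^T\bar{X})\bu-\tfrac{1}{n}(Z^T\bv-\bar{X}^T\y)^T\bu$. Substituting the channel outputs $\z_i=\Phi^T\x_i+\e_i$ and $v_i=y_i+e_i^{(y)}$ with $\e_i\sim\mc{N}(0,\sigma^2 I_m)$ and $e_i^{(y)}\sim\mc{N}(0,\sigma^2)$ independent and mean zero, and writing $N=[\e_1,\dots,\e_n]^T$, I would verify the two unbiasedness identities $\mE[Z^TZ-n\sigma^2 I_m]=\bar{X}^T\bar{X}$ and $\mE[Z^T\bv]=\bar{X}^T\y$; this is exactly the role of the debiasing term $-n\sigma^2 I_m$, since $\mE[\sum_i\e_i\e_i^T]=n\sigma^2 I_m$. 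The two error objects are then the centered matrix $\Delta_Q:=Q-\bar{X}^T\bar{X}$ and the centered vector $\bm{\ell}:=Z^T\bv-\bar{X}^T\y$, and the deviation is controlled by $\tfrac{1}{2n}\sup_{\w\in\mc{C}}|\bu^T\Delta_Q\bu|+\tfrac{1}{n}\sup_{\w\in\mc{C}}|\bm{\ell}^T\bu|$.

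For the quadratic piece I would use $\sup_{\w\in\mc{C}}|\bu^T\Delta_Q\bu|\le\norm{\Delta_Q}_{op}\cdot\sup_{\w\in\mc{C}}\norm{\Phi^T\w}_2^2$. The supremum is cheap: by the triangle inequality $\norm{\Phi^T\w}_2\le\norm{\w}_1\max_j\norm{\bm{\phi}_j}_2\le\max_j\norm{\bm{\phi}_j}_2=O(1)$ once $m\gtrsim\log d$, where $\bm{\phi}_j$ denotes the $j$-th row of $\Phi$. For $\norm{\Delta_Q}_{op}$ I would first discard the projection via non-expansiveness of $\mathrm{Proj}_{\mb{S}^{+}}(\cdot)$ about the feasible point $\bar{X}^T\bar{X}\in\mb{S}^{+}$, a triangle-inequality argument giving $\norm{\Delta_Q}_{op}\le 2\norm{Z^TZ-n\sigma^2 I_m-\bar{X}^T\bar{X}}_{op}$, and then bound the centered remainder $\bar{X}^TN+N^T\bar{X}+(N^TN-n\sigma^2 I_m)$ by matrix concentration: matrix Bernstein for the cross term and a Wishart-deviation bound for the pure-noise term, so that the dominant contribution is $\tilde{O}(\sigma\sqrt{mn})$. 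Dividing by $2n$ and using $\sigma=\tilde{O}(1/\epsilon)$ produces the leading $\tilde{O}(\sqrt{m/(n\epsilon^2)})$ term. For the linear piece I would instead exploit duality between $\ell_1$ and $\ell_\infty$: $\sup_{\w\in\mc{C}}|\bm{\ell}^T\Phi^T\w|=\norm{\Phi\bm{\ell}}_\infty=\max_{j\in[d]}|\bm{\phi}_j^T\bm{\ell}|$. Conditionally on $\Phi$ and the data, each $\bm{\phi}_j^T\bm{\ell}$ is a mean-zero sub-Gaussian functional of the privacy noise with proxy variance $\tilde{O}(\sigma^2 n)$ (its leading part being $\sum_i(\bm{\phi}_j^T\e_i)y_i$), so a union bound over $j\in[d]$ yields $\tfrac{1}{n}\norm{\Phi\bm{\ell}}_\infty=\tilde{O}(\sigma\sqrt{\log d/n})$, which is of lower order and supplies the logarithmic dependence on $d$.

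I expect the main obstacle to be the operator-norm concentration of $\Delta_Q$ routed correctly through the PSD projection. The cross term $\bar{X}^TN$ is a sum of independent rank-one mean-zero matrices and is standard for matrix Bernstein, but the pure-noise term $N^TN-n\sigma^2 I_m$ is a centered Wishart matrix whose fluctuation must be shown not to dominate the stated rate, and the non-expansiveness step has to be executed in a norm compatible with these estimates. Everything else — the ERM reduction, the unbiasedness identities, the uniform control of $\norm{\Phi^T\w}_2$ over $\mc{C}$, and the harvesting of the $\log d$ factor in the linear term — is routine once these concentration bounds are in hand.
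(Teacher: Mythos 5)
Your proposal is correct and follows essentially the same route as the paper's proof: reduce to a uniform deviation bound over $\mc{C}$ via the standard ERM comparison, write $Z=\bar{X}+E$, $\bv=\y+\br$, split the deviation into the debiased quadratic term $Q-\bar{X}^T\bar{X}$ and the linear term, discard the PSD projection by non-expansiveness, and control the remaining noise terms by concentration together with $\Vert\Phi^T\w\Vert_2\lesssim 1$ on $\mc{C}$. The only differences are at the level of tools --- you use operator-norm matrix Bernstein and a Wishart deviation bound where the paper works in Frobenius norm with a covariance-concentration lemma and chi-square tails, and you use $\ell_1$--$\ell_\infty$ duality with a union bound over $[d]$ for the linear term where the paper just applies Cauchy--Schwarz in $\mb{R}^m$ --- and both yield the same $\tilde{O}\big(\sqrt{m/(n\epsilon^2)}\big)$ rate (including sharing the same delicate point about the pure-noise term $E^TE-n\sigma^2 I_m$ scaling with $\sigma^2$ rather than $\sigma$, which the paper glosses over as well).
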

  
  Now, combined with RIP bound for random projection, we can move on to prove the empirical excess risk of sparse linear regression:
  \begin{thr}
      \label{thr2} 
      Under the assumption in this section, set $m = \Theta\left(\sqrt{n\epsilon^2\log d}\right)$, then with high probability , there is 
      \begin{align*}
          L(\w^{priv}) - L(\w^*) = \tilde{O}\left(\left(\frac{\log d}{n\epsilon^2}\right)^{1/4}\right)
      \end{align*}
  \end{thr}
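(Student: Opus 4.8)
The plan is to transfer the guarantee of Lemma~\ref{lem4}, which controls the excess risk for the projected objective $\bar L$, back to the original objective $L$, paying only a uniform approximation error that I can control through the near-isometry properties of $\Phi$. The starting observation is that $\bar L$ and $L$ differ only by substituting the projected inner products $\x_i^T\Phi\Phi^T\w$ for the true inner products $\x_i^T\w$ (plus a $\w$-independent constant). Writing $c=\frac{1}{2n}\sum_i y_i^2$, a direct expansion gives
\begin{equation}
\bar L(\w)-\big(L(\w)-c\big)=\frac{1}{2n}\sum_{i=1}^n\big[(\x_i^T\Phi\Phi^T\w)^2-(\x_i^T\w)^2\big]-\frac{1}{n}\sum_{i=1}^n y_i\big[\x_i^T\Phi\Phi^T\w-\x_i^T\w\big].
\end{equation}
Hence if I can show $|\x_i^T\Phi\Phi^T\w-\x_i^T\w|\le\gamma$ uniformly over $i\in[n]$ and $\w\in\mc{C}$, then since $|\x_i^T\w|\le\norm{\x_i}_2\norm{\w}_1\le1$, $|y_i|\le1$ and consequently $|\x_i^T\Phi\Phi^T\w|\le1+\gamma$, every summand is $O(\gamma)$ and I obtain $\sup_{\w\in\mc{C}}|\bar L(\w)-(L(\w)-c)|\le\gamma'=O(\gamma)$.

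With this uniform bound in hand, I would chain the three relevant points, all of which lie in $\mc{C}$, namely $\w^*,\hat\w^*,\w^{priv}$. Since $c$ cancels in differences,
\begin{align*}
L(\w^{priv})-L(\w^*)&=\big[\bar L(\w^{priv})-\bar L(\w^*)\big]-\big[\bar L(\w^{priv})-(L(\w^{priv})-c)\big]+\big[\bar L(\w^*)-(L(\w^*)-c)\big]\\
&\le\big[\bar L(\w^{priv})-\bar L(\hat\w^*)\big]+2\gamma',
\end{align*}
where I used optimality of $\hat\w^*$ for $\bar L$ (so $\bar L(\w^*)\ge\bar L(\hat\w^*)$) together with the uniform approximation bound applied at $\w^{priv}$ and at $\w^*$. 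Lemma~\ref{lem4} controls the first bracket by $\tilde O(\sqrt{m/(n\epsilon^2)})$, so everything reduces to estimating $\gamma$.

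The main work, and the main obstacle, is the uniform inner-product preservation bound. Here I would exploit the $\ell_1$ geometry of $\mc{C}=\mathrm{conv}\{\pm\bm{e}_1,\dots,\pm\bm{e}_d\}$: for any $\w\in\mc{C}$,
\begin{equation}
|\x_i^T(\Phi\Phi^T-I)\w|\le\norm{\w}_1\cdot\max_{j\in[d]}|\x_i^T(\Phi\Phi^T-I)\bm{e}_j|\le\max_{i\in[n],j\in[d]}\big|\langle\Phi^T\x_i,\Phi^T\bm{e}_j\rangle-\langle\x_i,\bm{e}_j\rangle\big|.
\end{equation}
For a sub-Gaussian $\Phi$ with the normalization of Algorithm~\ref{alg2} (so that $\Phi^T$ is an approximate isometry and $\mE[\Phi\Phi^T]=I$), each quantity $\langle\Phi^T\x_i,\Phi^T\bm{e}_j\rangle$ concentrates around $\langle\x_i,\bm{e}_j\rangle$ with sub-Gaussian fluctuations of scale $\norm{\x_i}_2\norm{\bm{e}_j}_2/\sqrt m\le1/\sqrt m$; this is exactly the inner-product (polarization) form of the RIP / Johnson--Lindenstrauss property. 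A union bound over the $nd$ pairs $(i,j)$ then yields $\gamma=O(\sqrt{\log(nd/\delta)/m})=\tilde O(\sqrt{\log d/m})$ with high probability. The delicate point is that the preservation is needed for the worst-case, data-dependent direction $\x_i$, which is dense rather than sparse, so a plain sparse-RIP statement is not directly applicable; the reduction above to $1$-sparse probe vectors $\bm{e}_j$ (paid for by $\norm{\w}_1\le1$) is precisely what makes $\gamma$ scale with $\log d$ rather than with the ambient dimension.

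Finally, combining the two estimates gives
\begin{equation}
L(\w^{priv})-L(\w^*)\le\tilde O\!\left(\sqrt{\tfrac{m}{n\epsilon^2}}\right)+\tilde O\!\left(\sqrt{\tfrac{\log d}{m}}\right),
\end{equation}
and I would choose $m$ to balance these terms. The first increases and the second decreases in $m$, and they coincide when $m^2\asymp n\epsilon^2\log d$, i.e.\ $m=\Theta(\sqrt{n\epsilon^2\log d})$, at which point each equals $\tilde O((\log d/(n\epsilon^2))^{1/4})$. This reproduces the claimed rate, and confirms that the stated choice of $m$ is exactly the balancing point between the privacy-noise error of Lemma~\ref{lem4} and the projection-distortion error $\gamma$.
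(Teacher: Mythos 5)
Your proposal is correct and follows the same overall route as the paper: the identical four-term chaining through $\bar L$ (using optimality of $\hat\w^*$ to drop the term $\bar L(\hat\w^*)-\bar L(\w^*)$), Lemma~\ref{lem4} for the privacy-noise term, a uniform inner-product distortion bound of order $\tilde O(\sqrt{\log d/m})$ for the projection term, and the same balancing $m=\Theta(\sqrt{n\epsilon^2\log d})$. The one place you genuinely diverge is in how the distortion bound is obtained. The paper applies the polarization identity, writing $\innerproduct{\w}{\x}-\innerproduct{\Phi^T\w}{\Phi^T\x}$ in terms of the squared norms of $\w\pm\x$, and then invokes a cited additive-error dimension-reduction result over the set $\{\w\pm\x:\w\in\mc{C},\x\in D\}$. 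You instead exploit that $\mc{C}$ is the $\ell_1$ ball, reduce to its $2d$ extreme points via $|\x_i^T(\Phi\Phi^T-I)\w|\le\norm{\w}_1\max_j|\x_i^T(\Phi\Phi^T-I)\bm{e}_j|$, and union-bound over the $nd$ pairs $(i,j)$; this is more self-contained and makes the $\log d$ dependence transparent, at the cost of being tied to the $\ell_1$ geometry (the paper's polarization route extends more readily to other low-complexity constraint sets, which the authors claim as a feature). Two minor quibbles: the fluctuation of $\langle\Phi^T\x_i,\Phi^T\bm{e}_j\rangle$ is sub-exponential rather than sub-Gaussian, though in the regime $\log(nd)\lesssim m$ relevant here the resulting $O(\sqrt{\log(nd)/m})$ tail is unaffected; and your bound $|\x_i^T\w|\le\norm{\x_i}_2\norm{\w}_1$ should read $\norm{\x_i}_2\norm{\w}_2\le\norm{\x_i}_2\norm{\w}_1$ (or $\norm{\x_i}_\infty\norm{\w}_1$), which still gives the needed constant bound. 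Neither affects the conclusion.
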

% For results in differential private LASSO, \cite{talwar2015nearly} and \cite{kasiviswanathan2016efficient} obtained empirical excess risk $\tilde{O}\left( \frac{\log d}{(\epsilon n)^{2/3}}\right)$ and $\tilde{O}\left(\frac{\log d}{\sqrt{\epsilon n}}\right)$ respectively, both of which are less or equal than sampling error $O\left(\frac{1}{\sqrt{n}}\right)$, but when it comes to LDP setting, we just obtain $\tilde{O}\left(\frac{1}{n^{1/4}}\right)$ convergence rate. Though we know local DP is much stricter than DP, as we have to perturb each user's true data. Whether it is possible to improve our $\tilde{O}\left(\frac{1}{n^{1/4}}\right)$ convergence rate, is an interesting open problem. 

Note \cite{talwar2015nearly} assume data is in $L_{\infty}$ ball, while both \cite{kasiviswanathan2016efficient} and ours assume data is in $L_2$ ball. However, in LDP model, \cite{duchi2016minimax} show it was impossible to obtain polynomial dependences over $\log d$ for $\ell_0$ mean estimation problem if data is in $L_{\infty}$ ball. 
% Intuitively speaking, LASSO is harder than sparse mean estimation problem, so whether we could achieve polynomial dependences over $\log d$ in LDP model for data in $L_{\infty}$ ball, is also an interesting open problem.

\subsection{Infinite Dimension: Kernel Ridge Regression}
Previous method mainly applies to data with finite dimensional features. However, it is common to use kernel trick in practice. This brings about new difficulties for LDP learning, as we could not add noise in the Hilbert space. In this subsection, we take kernel ridge regression as an example to show how to use Random Fourier Features (RFF) \cite{rahimi2007random} to deal with such cases caused by shift-invariant kernels (i.e. $k(\x,\y) = k(\x-\y)$). Note our technique also suits to similar problems. 

Fix a shift-invariant kernel $k(\cdot, \cdot)$, denote the Hilbert space implicitly defined as $H$, and the corresponding feature map as $\Phi: \mb{R}^d \rightarrow H$. Let the Hilbert space corresponding to the random Fourier feature map be $\hat{H} \subset \mb{R}^{d_p}$, and its feature map $\hat{\Phi}: \mb{R}^d \rightarrow \hat{H}$, where $d_p$ is the RFF projection dimension. Given a subset $\mc{X} \subset \mb{R}^d$ and data $D=\{(\x_i, y_i)|\x_i \in \mc{X}, i\in[n]\}$, for any $f \in H, g \in \hat{H}$, define loss functions in $H$ and $\hat{H}$ as follows:
\begin{align}
  L_H(f) := \frac{C}{2n}\sum_i \norm{f^T\Phi(\x_i) - y_i}_2^2 + \frac{1}{2} \norm{f}_H^2 \label{hil}\\
  L_{\hat{H}}(g) := \frac{C}{2n}\sum_i \norm{g^T\hat{\Phi}(\x_i) - y_i}_2^2 + \frac{1}{2} \norm{g}_{\hat{H}}^2  \label{proj}
\end{align} 
where $C$ is the regularization parameter. Denote $f^* = \argmin_{f \in H} L_H(f), g^* = \argmin_{g \in \hat{H}} L_{\hat{H}}(g), G$ as the Lipschitz constant of square loss, which depends on the bounded norm of features. Kernel ridge regression try to optimize formula (\ref{hil}), while after using RFF, we try to solve formula (\ref{proj}) in non-interactive LDP model, which can be easily tackled with similar mechanisms like sparse linear regression above. Borrow the key result in \cite{rubinstein2012learning} (restated in lemma \ref{uniformrff} below), which used RFF to design private mechanims for SVM in DP model, it becomes easy to prove guarantees for kernel ridge regression in our setting (see Corollary \ref{krr}). 
\begin{algorithm}
  \caption{LDP kernel mechanism}
  \label{LDPKernel}
  \begin{algorithmic}[1]
    \REQUIRE Personal data $(\x_i, y_i), i\in [n]$, random feature's dimension $d_p$, shift-invariant kernel $k(\x_1, \x_2) = k(\x_1 - \x_2)$ with Fourier transform $f(\s) = \frac{1}{2\pi}\int e^{-j\s^T\x}k(\x)d\x$, privacy parameter $\epsilon, \delta$
    \ENSURE Private output $\hat{\w}^{priv} \in \mb{R}^{d_p}$
    \STATE Draw i.i.d. samples $\s_1, \s_2, \dots, \s_{d_p} \in \mb{R}^d$ from $f(\s)$ and $b_1, b_2, \dots, b_{d_p} \in \mb{R}$ from the uniform distribution on $[0, 2\pi]$
    \FOR {$i=1,\dots,n$}
    \STATE Construct low dimensional random feature $\hat{\Phi}({\x}_i) = \sqrt{\frac{1}{d_p}}\left[\cos(\s_1^T\x_i+b_1), \dots, \cos(\s_{d_p}^T\x_i+b_{d_p})\right]' \in \hat{\mc{C}}:=\left[-\sqrt{\frac{1}{d_p}},\sqrt{\frac{1}{d_p}}\right]^{d_p} \subset \mb{R}^{d_p}$
    \STATE $\z_i \leftarrow $ Basic Private Vector ($\hat{\Phi}(\x_i), \epsilon/2,\delta/2$)
    \STATE $v_i \leftarrow $ Basic Private Vector ($y_i, \epsilon/2, \delta/2$)
    \ENDFOR
    % \STATE Run nearly the same private mechanism as sparse linear regression but without random projection and recovery any more, to collect above finite dimensional data and solve it within the feasible set $\hat{\w} \in \bar{\mc{C}}$, final output is $\hat{\w}^{priv}$
    
    \STATE \parbox[t]{1.0\linewidth}{Setting $Z = [\z_1, \cdots, \z_n]^T, \sigma =  \frac{2\sqrt{2\ln(2.5/\delta)}}{\epsilon}$, \\ $ Q = \mathrm{Proj}_{\mb{S}^{+}}(Z^T Z - n\sigma^2 I_{d_p}),\bv = [v_1, \cdots, v_n]^T$}
    \STATE  \parbox[t]{1.0\linewidth}{$\hat{\w}^{priv} \leftarrow \argmin_{\hat{\w}} \hat{L}(\hat{\w};Z,\bv)$, where \\%_{\hat{\w} \in \hat{\mc{C}}}
    $\hat{L}(\hat{\w};Z,\bv) := \frac{1}{2n}  \hat{\w}^T Q  \hat{\w} - \frac{1}{n} \bv^T Z \hat{\w} $} 
  \end{algorithmic}
  \end{algorithm}  
\begin{lem}[\cite{rubinstein2012learning}]
  \label{uniformrff}
   Suppose dual variables with respect to $f^*, g^*$ are $L_1$ norm bounded by some $r > 0$, and $\sup_{\x_1,\x_2 \in \mc{X}}|\Phi(\x_1)^T\Phi(\x_2)-(\hat{\Phi}(\x_1))^T\hat{\Phi}(\x_2)| \leqslant \gamma$, then there is $\sup_{\x \in \mc{X}}|\Phi(\x)^Tf^* - (\hat{\Phi}(\x))^Tg^*| \leqslant r\gamma + 2\sqrt{(CG+r/2)r\gamma}$.
  %\min \{1, \alpha^2/(4(r+2\sqrt{(CG+r/2)r})^2)\}
\end{lem}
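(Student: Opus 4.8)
The plan is to transplant the exact RKHS solution $f^*$ into the random-feature space using its dual representation, and then to control two sources of error separately: the kernel-approximation error incurred by the transplant, and the optimization gap between the transplant and the true projected optimum $g^*$. By the representer theorem, $f^* = \sum_{i=1}^n \alpha_i \Phi(\x_i)$ and $g^* = \sum_{i=1}^n \beta_i \hat{\Phi}(\x_i)$ for dual coefficients with $\sum_i|\alpha_i|\le r$ and $\sum_i|\beta_i|\le r$ by hypothesis. Define the transplanted predictor $\tilde{g} := \sum_{i=1}^n \alpha_i \hat{\Phi}(\x_i) \in \hat{H}$, which uses the \emph{same} coefficients as $f^*$ but the approximate feature map. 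Then for every $\x \in \mc{X}$ I split
\begin{equation}
|\Phi(\x)^Tf^* - \hat{\Phi}(\x)^Tg^*| \le |\Phi(\x)^Tf^* - \hat{\Phi}(\x)^T\tilde{g}| + |\hat{\Phi}(\x)^T(\tilde{g} - g^*)|.
\end{equation}

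The first term is pure kernel-approximation error: since $\Phi(\x)^Tf^* = \sum_i\alpha_i\Phi(\x)^T\Phi(\x_i)$ and $\hat{\Phi}(\x)^T\tilde{g} = \sum_i\alpha_i\hat{\Phi}(\x)^T\hat{\Phi}(\x_i)$, the triangle inequality together with the $\gamma$-bound on each $|\Phi(\x)^T\Phi(\x_i)-\hat{\Phi}(\x)^T\hat{\Phi}(\x_i)|$ and $\sum_i|\alpha_i|\le r$ gives the bound $r\gamma$, which is exactly the first summand in the claim. For the second term I use Cauchy--Schwarz in $\mb{R}^{d_p}$, $|\hat{\Phi}(\x)^T(\tilde{g} - g^*)| \le \norm{\hat{\Phi}(\x)}_2\,\norm{\tilde{g} - g^*}_2$, and note $\norm{\hat{\Phi}(\x)}_2^2 = \frac{1}{d_p}\sum_\ell \cos^2(\s_\ell^T\x + b_\ell) \le 1$, so it remains to bound $\norm{\tilde{g} - g^*}_{\hat{H}}$.

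Here I exploit that $L_{\hat{H}}$ is $1$-strongly convex (its Hessian is $\frac{C}{n}\sum_j\hat{\Phi}(\x_j)\hat{\Phi}(\x_j)^T + I \succeq I$) and that $g^*$ is its minimizer, so $\frac12\norm{\tilde{g}-g^*}_{\hat{H}}^2 \le L_{\hat{H}}(\tilde{g}) - L_{\hat{H}}(g^*)$. The crux is therefore to upper-bound this objective gap, which I do by a two-sided transplant argument routed through the exact objective $L_H$: introducing the reverse transplant $\tilde{f} := \sum_i\beta_i\Phi(\x_i) \in H$ and using optimality of $f^*$ for $L_H$ (so $L_H(f^*)\le L_H(\tilde{f})$), I write
\begin{equation}
L_{\hat{H}}(\tilde{g}) - L_{\hat{H}}(g^*) \le \big(L_{\hat{H}}(\tilde{g}) - L_H(f^*)\big) + \big(L_H(\tilde{f}) - L_{\hat{H}}(g^*)\big).
\end{equation}
Each bracket compares two objectives evaluated at predictors sharing the same dual coefficients but living in $H$ versus $\hat{H}$, so each is a genuine kernel-approximation discrepancy: the data-fitting part is controlled by the $G$-Lipschitzness of the square loss together with the per-point prediction gap (at most $r\gamma$), and the regularizer part by $|\norm{\cdot}_{\hat{H}}^2 - \norm{\cdot}_H^2| \le r^2\gamma$ (from expanding $\sum_{i,i'}\alpha_i\alpha_{i'}(\hat{\Phi}(\x_i)^T\hat{\Phi}(\x_{i'}) - \Phi(\x_i)^T\Phi(\x_{i'}))$ and using $\sum_i|\alpha_i|\le r$). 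Adding the two brackets yields an $O\big((CG + r)r\gamma\big)$ bound on the gap; feeding it through strong convexity and Cauchy--Schwarz produces the second summand, of the form $2\sqrt{(CG + r/2)r\gamma}$, and summing with $r\gamma$ gives the claim.

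The main obstacle is the bookkeeping in the two bracketed terms: one must correctly account for how the square-loss Lipschitz constant $G$, the regularization weight $C$, the dual radius $r$, and the kernel error $\gamma$ enter the data-fitting and regularization discrepancies, and verify that the prediction-gap bound $|\tilde{g}^T\hat{\Phi}(\x_j) - f^{*T}\Phi(\x_j)|\le r\gamma$ holds uniformly over the $n$ training points. Once those discrepancy estimates are pinned down, the rest --- the pointwise transplant bound, the Cauchy--Schwarz reduction, and the strong-convexity step --- is routine, and the stated constants follow by matching the factors.
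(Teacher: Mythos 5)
The paper does not actually prove this lemma --- it is imported verbatim from \cite{rubinstein2012learning} --- and your reconstruction is correct and is essentially the standard argument from that reference: transplant the dual coefficients of $f^*$ into $\hat{H}$, bound the pointwise transplant error by $r\gamma$, and control $\norm{\tilde{g}-g^*}_{\hat{H}}$ via $1$-strong convexity of $L_{\hat{H}}$ together with the two-sided objective-gap comparison through $L_H(f^*)\leq L_H(\tilde{f})$. Your constants also check out exactly: each bracket contributes $CGr\gamma+\tfrac{1}{2}r^2\gamma$, so $\norm{\tilde{g}-g^*}_{\hat{H}}^2\leq 4(CG+r/2)r\gamma$, and $\norm{\hat{\Phi}(\x)}_2\leq 1$ under the paper's $\sqrt{1/d_p}$ normalization gives the stated bound.
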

\begin{cor}
  \label{krr}
  Algorithm \ref{LDPKernel} satisfies $(\epsilon, \delta)$-LDP, and by setting $d_p = \tilde{O}\left(\sqrt{dn\epsilon^2}\right)$, with high probability, there is 
  \begin{align*}
    L_{\hat{H}}(\hat{\w}^{priv}) - & L_H(f^*) \leqslant \tilde{O}\left(\left(\frac{d}{n\epsilon^2}\right)^{1/4}\right) \\
    \sup_{\x \in \mc{X}}|\Phi(\x)^Tf^* - & (\hat{\Phi}(\x))^T\hat{\w}^{priv}| \leqslant \tilde{O}\left(\left(\frac{d}{n\epsilon^2}\right)^{1/8}\right)
  \end{align*}
\end{cor}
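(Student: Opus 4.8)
The argument threads together three pieces: the privacy guarantee, the error introduced by the privatizing noise when optimizing in the $d_p$-dimensional random-feature space, and the error from approximating the exact feature map $\Phi$ by the random Fourier map $\hat{\Phi}$. Privacy is immediate: for each user both $\hat{\Phi}(\x_i)$ and $y_i$ are released by the Basic Private Vector mechanism (Algorithm~\ref{LDPgaussian}) with parameters $(\epsilon/2,\delta/2)$; since $\norm{\hat{\Phi}(\x)}_2 \le 1$ by construction, the mechanism's $(\epsilon/2,\delta/2)$-LDP guarantee applies to each release, and Lemma~\ref{composition} combines them into an $(\epsilon,\delta)$-LDP mechanism for every (independently processed) user. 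For accuracy I would decompose
\[
L_{\hat{H}}(\hat{\w}^{priv}) - L_H(f^*) = \big[L_{\hat{H}}(\hat{\w}^{priv}) - L_{\hat{H}}(g^*)\big] + \big[L_{\hat{H}}(g^*) - L_H(f^*)\big],
\]
the first bracket being the privacy-induced optimization error and the second the RFF approximation error.

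For the first bracket, the correction $Q = \mathrm{Proj}_{\mb{S}^{+}}(Z^TZ - n\sigma^2 I_{d_p})$ together with $\bv^TZ$ makes $\hat L$ an essentially unbiased estimator of the empirical loss in $\hat{H}$, exactly as in the linear-regression mechanism; the proof of Lemma~\ref{lem4} then mirrors over with the projection dimension $m$ replaced by $d_p$, yielding $L_{\hat{H}}(\hat{\w}^{priv}) - L_{\hat{H}}(g^*) \le \tilde{O}(\sqrt{d_p/(n\epsilon^2)})$ with high probability. For the second bracket I would first invoke the standard uniform Random Fourier Feature bound to get $\gamma := \sup_{\x_1,\x_2 \in \mc{X}}|\Phi(\x_1)^T\Phi(\x_2) - \hat{\Phi}(\x_1)^T\hat{\Phi}(\x_2)| = \tilde{O}(\sqrt{d/d_p})$ with high probability, then control $L_{\hat{H}}(g^*) - L_H(f^*)$ via the representer theorem: writing $f^* = \sum_i \alpha_i \Phi(\x_i)$ with $\norm{\alpha}_1 \le r$ and testing $g^*$ against the surrogate $g_0 := \sum_i \alpha_i \hat{\Phi}(\x_i) \in \hat{H}$, both the data-fitting term and the regularizer of $g_0$ deviate from those of $f^*$ by $O(\gamma)$, so optimality of $g^*$ gives $L_{\hat{H}}(g^*) - L_H(f^*) = O(\gamma) = \tilde{O}(\sqrt{d/d_p})$. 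Choosing $d_p = \Theta(\sqrt{dn\epsilon^2})$ equalizes the two brackets, each becoming $\tilde{O}((d/(n\epsilon^2))^{1/4})$, which is the stated excess-risk bound.

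For the prediction bound I would split
\[
\sup_{\x}|\Phi(\x)^Tf^* - \hat{\Phi}(\x)^T\hat{\w}^{priv}| \le \sup_{\x}|\Phi(\x)^Tf^* - \hat{\Phi}(\x)^Tg^*| + \sup_{\x}|\hat{\Phi}(\x)^T(g^* - \hat{\w}^{priv})|.
\]
The first term is exactly what Lemma~\ref{uniformrff} bounds, namely $r\gamma + 2\sqrt{(CG + r/2)r\gamma} = \tilde{O}(\sqrt{\gamma})$, which at $d_p = \Theta(\sqrt{dn\epsilon^2})$ equals $\tilde{O}((d/(n\epsilon^2))^{1/8})$. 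The second term I would handle through the $1$-strong convexity of $L_{\hat{H}}$ (coming from its $\tfrac12\norm{\cdot}^2$ regularizer): the first-bracket excess risk converts into $\norm{g^* - \hat{\w}^{priv}}_2 \le \sqrt{2\,\tilde{O}((d/(n\epsilon^2))^{1/4})} = \tilde{O}((d/(n\epsilon^2))^{1/8})$, and since $\norm{\hat{\Phi}(\x)}_2 \le 1$ the second term obeys the same bound, giving the claimed $1/8$-power rate.

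The main obstacle is reconciling the two different rates at which the RFF error enters: it degrades the optimal loss only at rate $\gamma$ but the predictions at rate $\sqrt{\gamma}$. Keeping the loss contribution at $O(\gamma)$ --- rather than letting it inflate to $O(\sqrt{\gamma})$ and dominate --- is precisely what the representer-theorem/regularizer bookkeeping is for, whereas the $\sqrt{\gamma}$ factor in the prediction bound is intrinsic to Lemma~\ref{uniformrff}. The care lies in verifying that the single choice $d_p = \Theta(\sqrt{dn\epsilon^2})$ simultaneously balances the loss terms and yields the advertised $1/8$-power prediction rate, and in checking that the boundedness assumptions (bounded labels, bounded $\norm{f^*}_H$, bounded dual norm $r$) needed to pass from uniform kernel/prediction approximation to the loss comparison all hold under the stated setup.
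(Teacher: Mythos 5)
Your proposal is correct and follows essentially the same route as the paper: the identical decomposition $L_{\hat{H}}(\hat{\w}^{priv}) - L_H(f^*) = [L_{\hat{H}}(\hat{\w}^{priv}) - L_{\hat{H}}(g^*)] + [L_{\hat{H}}(g^*) - L_H(f^*)]$, the first bracket bounded by transplanting the proof of Lemma~\ref{lem4} with $m$ replaced by $d_p$, the second by the RFF kernel-approximation argument of \cite{rubinstein2012learning}, the same balancing choice of $d_p$, and Lemma~\ref{uniformrff} for the prediction bound. The only place you go beyond the paper is in spelling out the step it leaves implicit --- converting the excess risk into $\Vert g^* - \hat{\w}^{priv}\Vert_2$ via the $1$-strong convexity of $L_{\hat{H}}$ to control $\sup_{\x}|\hat{\Phi}(\x)^T(g^*-\hat{\w}^{priv})|$ --- which is the natural completion of the paper's terse ``combine lemma 7'' remark rather than a different argument.
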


\section{Learning Smooth Generalized Linear Model} 

In this section, we consider learning smooth generalized linear model in non-interactive LDP setting. Non-interactive LDP learning for this problem is essentially difficult, as it is even hard to obtain an unbiased estimator of gradient. We resolve this problem using Chebyshev polynomial expansion, which requires additional smoothness assumptions. Fortunately these assumptions are naturally satisfied by a broad range of learning tasks.

We will first define the Smooth GLM loss family with appropriate assumptions. Our definition could be shown with connection to exponential family GLM, which is commonly used in machine learning. We also illustrate our algorithm and guarantees with logistic regression. 

\begin{defn}{(Absolutely Smooth Functions)}
    \label{absmooth}
  We say that an univariate function $h(x)$ is absolutely smooth, if for any $r>0$, $f(x):= h(rx)$ satisfies the following properties: there exist functions $\mu_1(k;r),\mu_2(k;r)$, which are polynomial on $k$ and $\mu_2(k;r) = O(kr)$, such that for any $k \in \mb{N}^{+}$, there is:
  \begin{itemize}
      \item[(1)] $f(x), f'(x), \dots, f^{(k-1)}(x)$ are absolutely continuous on $[-1,1]$;
      \item[(2)] $\norm{f^{(k)}(x)}_T \leqslant \mu_1(k;r) \cdot \mu_2(k;r)^k$.
    \end{itemize} 
\end{defn}

\begin{defn}{(Smooth Generalized Linear Loss, SGLL)}
  \label{SGLL}
   A loss function $\ell(\w;\x,y)$, is called smooth generalized linear loss, if for any given data $(\x,y)$, $\ell(\w;\x,y)$ is convex and $\beta$-smooth with respect to $\w$, and there exist absolutely smooth functions $h_1(x), h_2(x)$, such that $\ell(\w;\x, y) = -yh_1(\x^T\w) + h_2(\x^T\w)$.
\end{defn} 

It will be convenient to consider population risk directly. Now, we adopt standard setting of learning problems, where each data point $(\x, y)$ is drawn from some underlying unknown distribution $\mc{D}$ and $\norm{\x}_2\leqslant 1$. Given a SGLL $\ell(\w;\x, y)$, the population loss is defined as $L(\w) := \mE_{(\bm{x},y)\sim \mathcal{D}} \ell(\bm{w};\bm{x},y)$. For simplicity, instead of assuming $\w$ belongs to $B_2(r)$, we use the following equivalent notation: $\ell(\w;\x, y) = -yh_1(r\x^T\w) + h_2(r\x^T\w)$, and the constraint set for $\w$ is $\mc{C} = B_2(1)$. Denote $G(\w;\x,y) = \nabla \ell(\w;\x,y) = r m(\w;\x,y)\x$, where $m(\w;\x,y) = h_2'(r\x^T\w) - yh_1'(r\x^T\w)$. Suppose $\mE_{(\x,y)\sim \mc{D}}[\norm{G(\w;\x,y) - g(\w)}^2_2] \leqslant \sigma_0^2$, where $g(\w) = \nabla L(\w)$. This is a common assumption in stochastic optimization literature, such as \cite{bubeck2015convex}.

Given any $\alpha > 0$, we hope to design a noninteractive local DP mechanism with low sample complexity, such that the final output point $\w^{priv}$ satisfies $L(\w^{priv}) - L(\w^*) \leqslant \alpha$. 

For GLM loss functions, it is easy to see that the stochastic gradient evaluated on $w$ with data point $x_i$ is at the same direction with $x_i$. So adding isotropic noise to $x_i$ provides "unbiased" information about direction of stochastic gradient. However, the magnitude is a nonlinear function of $w^Tx_i$, making it hard for SGD even to converge to population minimizer. This is why we seek to find polynomial approximation of the magnitude of gradients.

To estimate the magnitude of gradients, we use Chebyshev polynomials to approximate nonlinear univariate function $f_{i}(x) = h'_i(rx)$, where $x \in [-1,1]$. For brevity of notations, we just use $f(x)$ to represent either $f_1(x)$ or $f_2(x)$. Denote the Chebyshev approximation with degree $p$ as $\hat{f}_{p}(x) = \frac{1}{2} + \sum_{k=1}^p a_k T_k(x)$, where $T_k(x)$ is the $k$-th Chebyshev polynomial, and $a_k = \frac{2}{\pi}\int_{-1}^1\frac{f(x)T_k(x)}{\sqrt{1-x^2}}\mr{d}x$ is the corresponding coefficient. According to existing results about Chebyshev approximations and some calculations, we have the following lemma:

\begin{lem}
  \label{chebyshev}
  Given any $\alpha > 0$, by setting $k = c\ln \frac{1}{\alpha}, p = \lceil k+e\mu_2(k;r) \rceil$, where $c$ is a constant, we have $\norm{\hat{f}_p(x) - f(x)}_{\infty} \leqslant \alpha$
\end{lem}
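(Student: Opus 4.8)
The plan is to bound the uniform error by the tail of the Chebyshev series and then feed in the absolute-smoothness estimate on the derivatives of $f$. Since $\norm{T_n}_\infty = 1$, we immediately get $\norm{\hat f_p - f}_\infty \leqslant \sum_{n=p+1}^\infty |a_n|$, so the whole lemma reduces to controlling the decay rate of the Chebyshev coefficients $a_n$ as $n \to \infty$.

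First I would invoke the standard Chebyshev coefficient decay estimate: because $f, f', \dots, f^{(k-1)}$ are absolutely continuous on $[-1,1]$ and the weighted total variation $\norm{f^{(k)}}_T$ (exactly the quantity fixed in the preliminaries) is finite, one has $|a_n| \leqslant \frac{2\norm{f^{(k)}}_T}{\pi (n-k)^{k+1}}$ for every $n \geqslant k+1$. The way to see this is to write $a_n = \frac{2}{\pi}\int_0^\pi f(\cos\theta)\cos(n\theta)\,\mr{d}\theta$ and integrate by parts $k+1$ times; the boundary terms vanish because the chain-rule factors $\sin\theta$ kill them at $\theta \in \{0,\pi\}$, and after the substitution $x = \cos\theta$ the surviving integral produces precisely the weight $1/\sqrt{1-x^2}$ appearing in $\norm{f^{(k)}}_T$. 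This is the step I expect to be the main obstacle, since obtaining the exponent $k+1$ together with the correctly weighted norm requires the careful bookkeeping of the boundary terms.

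Next I would sum the tail by comparison with an integral: as $(n-k)^{-(k+1)}$ is decreasing, $\sum_{n=p+1}^\infty (n-k)^{-(k+1)} \leqslant \int_{p-k}^\infty t^{-(k+1)}\,\mr{d}t = \frac{1}{k(p-k)^k}$, which gives $\norm{\hat f_p - f}_\infty \leqslant \frac{2\norm{f^{(k)}}_T}{\pi k (p-k)^k}$. Then I plug in the absolute-smoothness bound $\norm{f^{(k)}}_T \leqslant \mu_1(k;r)\,\mu_2(k;r)^k$ (up to the harmless index shift arising from $f(x) = h_i'(rx)$, which only rescales $\mu_1,\mu_2$ by constants because $\mu_2(k;r) = O(kr)$). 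With the prescribed $p = \lceil k + e\mu_2(k;r)\rceil$ we have $p - k \geqslant e\mu_2(k;r)$, so $\left(\mu_2(k;r)/(p-k)\right)^k \leqslant e^{-k}$ and therefore $\norm{\hat f_p - f}_\infty \leqslant \frac{2\mu_1(k;r)}{\pi k}\,e^{-k}$.

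Finally, I would set $k = c\ln\frac{1}{\alpha}$, so that $e^{-k} = \alpha^c$ while $\mu_1(k;r)/k$ grows only polynomially in $\ln\frac{1}{\alpha}$. Writing $u = \ln\frac{1}{\alpha}$, the bound becomes a fixed polynomial in $cu$ times $\alpha^{c-1}\cdot\alpha = e^{-(c-1)u}\cdot\alpha$, and the function $\mathrm{poly}(cu)\,e^{-(c-1)u}$ is bounded by $1$ on $u \geqslant 0$ once $c$ exceeds a constant determined solely by the polynomial degree of $\mu_1$ (with $r$ fixed). Hence $\norm{\hat f_p - f}_\infty \leqslant \alpha$, as claimed. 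Apart from the coefficient estimate, the remaining work is routine: the integral comparison for the tail and the calibration of the constant $c$ against the degree of $\mu_1$.
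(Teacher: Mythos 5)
Your proposal is correct and follows essentially the same route as the paper: both rest on the Trefethen-style bound $\norm{\hat f_p - f}_\infty \leqslant \frac{2\norm{f^{(k)}}_T}{\pi k (p-k)^k}$, then substitute $\norm{f^{(k)}}_T \leqslant \mu_1(k;r)\mu_2(k;r)^k$ together with $p-k \geqslant e\mu_2(k;r)$ to get $\frac{2\mu_1(k;r)}{\pi k e^k}$ and finish by choosing $c$ against the polynomial growth of $\mu_1$. The only difference is that the paper cites \cite{trefethen2008gauss} for that truncation bound directly, whereas you re-derive it from the coefficient decay estimate and a tail--integral comparison, which is fine.
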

% \begin{proof}
%   As $f, f', \cdots, f^{(k-1)}$ are absolutely continuous over $[-1,1]$, and $\norm{f^{(k)}}_T \leqslant \mu_1(k;r)\mu_2(k;r)^k$, according to the results in \cite{trefethen2008gauss}, we have 
%   \begin{align*}
%     \norm{\hat{f}_p(x) - f(x)}_{\infty} \leqslant & \frac{2\norm{f^{(k)}}_T}{\pi k(p-k)^k} \\
%     \leqslant & \frac{2\mu_1(k;r)}{\pi ke^{k}} \numberthis \label{eq22}
%   \end{align*}
%  It is easy to see there exists $c>0$, such that the term (\ref{eq22}) is less than $\alpha$ with chosen $k$, hence the conclusion holds.
% \end{proof}
 The Chebyshev approximations with degree $p$ for $f_i(x)$ ($i=1,2$) are denoted as $\hat{f}_{ip}(x)=\frac{1}{2} + \sum_{k=1}^p a_{ik} T_k(x) = \sum_{k=0}^p c_{ik} x^k$, where $c_{ik}$ is the coefficient of term $x^k$. Now we approximate $m(\w;\x,y)$ and $G(\w;\x,y)$ as follows:  
\begin{align*}
    \hat{m}(\w;\x,y) := & -y\hat{f}_{1p}(r\x^T\w)+\hat{f}_{2p}(r\x^T\w) \\
    = & \sum_{k=0}^p (c_{2k} - c_{1k}y) (r\x^T\w)^k  \\
       \hat{G}(\w;\x,y) := & r\hat{m}(\w;\x,y)\x \\
\end{align*} 

With these approximations, we state our mechanism in Algorithm \ref{LDPSGLL}, where Basic Private Vector mechanism is given in Algorithm \ref{LDPgaussian}. Note an important trick in Step 6-8 of Algorithm \ref{LDPSGLL}, is that: we run basic private mechanism $p$ times, to obtain fresh private copies of the same vector $\x$, which are then used to calculate an unbiased estimation of $\hat{G}(\w;\x,y)$ with variance as low as possible (i.e. line 8 in Algorithm \ref{inexactgradient}).The LDP property of Algorithm \ref{LDPSGLL} is given as follows:
\begin{algorithm}
  \caption{LDP SGLD Mechanism - Collection}
  \label{LDPSGLL}
  \begin{algorithmic}[1]
    \REQUIRE Personal data $(\x, y)$, expansion order $p$, privacy parameter $\epsilon, \delta$
    \ENSURE Private synopsis $b = \{z_{yi}, \z_{j} |i\in \{0\}\cup[p], j\in [p(p+1)/2]\} $ sent to the server
    \STATE Setting $\epsilon_y = \frac{\epsilon}{4(p+1)},\delta_y = \frac{\delta}{4(p+1)},\epsilon_1 = \frac{\epsilon}{p(p+1)}, \delta_1 = \frac{\delta}{p(p+1)}$ 
    \STATE $\z_0 \leftarrow $ Basic Private Vector($\x, \epsilon/4, \delta/4$) 
    \FOR {$i=0,1,\dots,p $}
    \STATE $z_{yj} \leftarrow $ Basic Private Vector($y, \epsilon_y, \delta_y$)
    \ENDFOR
    \FOR {$j=1,\dots,\frac{p(p+1)}{2} $}
    \STATE $\z_{j} \leftarrow $ Basic Private Vector($\x, \epsilon_1, \delta_1$)
    \ENDFOR
  \end{algorithmic}
  \end{algorithm}
The privacy proof directly follows from Basic Vector Mechanism and Composition Theorem.
\begin{thr}
    LDP SGLD Mechanism \ref{LDPSGLL} preserves $(\epsilon, \delta)$-LDP.
\end{thr}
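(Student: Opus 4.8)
The plan is to derive the privacy guarantee directly from Theorem 1 and the Composition Theorem (Lemma~\ref{composition}). The key observation is that Algorithm~\ref{LDPSGLL} constructs its entire output synopsis $b$ through repeated, independent invocations of the Basic Private Vector mechanism of Algorithm~\ref{LDPgaussian}, each applied to some component of the single personal record $(\x, y)$. By Theorem 1, every such invocation with parameters $(\epsilon', \delta')$ is itself $(\epsilon', \delta')$-LDP. Therefore the whole collection procedure is nothing more than a finite composition of LDP sub-mechanisms acting on the same input, and I would invoke Lemma~\ref{composition} to conclude that its overall privacy loss is the sum of the individual budgets.

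The substance of the argument is then just careful bookkeeping: I would enumerate the calls and tally their parameters. There is one call producing $\z_0$ with budget $(\epsilon/4, \delta/4)$; the loop over $i = 0, 1, \dots, p$ issues $p+1$ privatizations of $y$, each with budget $(\epsilon_y, \delta_y) = \left(\tfrac{\epsilon}{4(p+1)}, \tfrac{\delta}{4(p+1)}\right)$; and the loop over $j = 1, \dots, p(p+1)/2$ issues $p(p+1)/2$ privatizations of $\x$, each with budget $(\epsilon_1, \delta_1) = \left(\tfrac{\epsilon}{p(p+1)}, \tfrac{\delta}{p(p+1)}\right)$.

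Summing the $\epsilon$-contributions via Lemma~\ref{composition} gives
\[
\frac{\epsilon}{4} + (p+1)\cdot\frac{\epsilon}{4(p+1)} + \frac{p(p+1)}{2}\cdot\frac{\epsilon}{p(p+1)} = \frac{\epsilon}{4} + \frac{\epsilon}{4} + \frac{\epsilon}{2} = \epsilon,
\]
and the identical arithmetic for the $\delta$-terms yields exactly $\delta$. Hence the mechanism is $(\epsilon, \delta)$-LDP, as claimed.

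This proof is genuinely routine, so I do not expect a real obstacle; the only step requiring attention is the accounting. Specifically, one must read off the index ranges in the pseudocode correctly — noting that the $y$-loop runs $p+1$ times and the $\x$-loop runs $p(p+1)/2$ times — and confirm that the per-call budgets $\epsilon_y, \delta_y, \epsilon_1, \delta_1$, together with the separate $(\epsilon/4, \delta/4)$ spent on $\z_0$, were chosen precisely so that their sum telescopes to the target $(\epsilon, \delta)$. Verifying that $\z_0$ is indeed counted (without it the budget would only total $3\epsilon/4$) is the one place where a careless reading could go wrong.
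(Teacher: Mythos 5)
Your proof is correct and follows exactly the route the paper takes: the paper's own argument is the one-line remark that privacy ``directly follows from Basic Vector Mechanism and Composition Theorem,'' and your contribution is simply to make the budget accounting explicit, which checks out ($\epsilon/4 + \epsilon/4 + \epsilon/2 = \epsilon$ and likewise for $\delta$).
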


Having obtained the private synopsis sent by all uers, now the server can construct a stochastic inexact gradient oracle (defined in Defintion \ref{siodef}) for any point $\w\in \mc{C}$, as stated in Algorithm \ref{inexactgradient}.

% Having constructed stochastic inexact gradient oracle, and use the algorithm in \cite{dvurechensky2016stochastic} to optimize SGLL in subsequent section, here we introduce some necessary materials. 
\begin{defn}
    \cite{dvurechensky2016stochastic} For an objective function 
    $f(\w)$, a $(\gamma, \beta, \sigma)$ stochastic oracle returns a turple $(F_{\gamma, \beta, \sigma}(\w;\bm{\xi}), G_{\gamma, \beta, \sigma}(\w;\bm{\xi}))$, such that:
    \begin{gather*}
      \mE_{\bm{\xi}} [F_{\gamma, \beta, \sigma}(\w;\bm{\xi}) ]  =  f_{\gamma, \beta, \sigma}(\w) \\
        \mE_{\bm{\xi}}[G_{\gamma, \beta, \sigma}(\w;\bm{\xi})]   =  g_{\gamma, \beta, \sigma}(\w) \\
        \mE_{\bm{\xi}} [\norm{G_{\gamma, \beta, \sigma}(\w;\bm{\xi})  - g_{\gamma, \beta, \sigma}(\w)}^2]  \leqslant  \sigma^2  \\
        0 \leqslant h(\bv,\w) \leqslant   \frac{\beta}{2} \norm{\bv-\w}^2 + \gamma, \forall \bv,\w \in \mc{C}
    \end{gather*}
    where $h(\bv,\w) = f(\bv) - f_{\gamma, \beta, \sigma}(\w) - \innerproduct{g_{\gamma, \beta, \sigma}(\w)}{\bv-\w}$.
    \label{siodef}
 \end{defn}

% \begin{algorithm}
%   \caption{Stochastic Inexact Gradient Oracle for LR}
%   \label{inexactgradient}
%   \begin{algorithmic}[1]
%     \REQUIRE Private synopsis $b= \{z_y, \z_{j} | j\in \{0\} \cup [p(p+1)/2]\}$ of a random user, query point $\w$, public coefficients $\{c_{1k}, c_{2k}| k\in \{0\} \cup [p]\}$ 
%     \ENSURE Stochastic inexact gradient $\tilde{G}(\w;b)$ 
%     \FOR {$j=1,\dots,p$}
%     \STATE $t_j = \prod_{i=j(j-1)/2+1}^{j(j+1)/2} (\w^T \z_i)$
%     \ENDFOR   
%     \STATE $\displaystyle{\tilde{G}(\w;b) \leftarrow \left(\sum_{k=0}^p (c_{2k} - c_{1k}z_{yj}) t_k r^{k+1}\right)\z_0} ~~(t_0 = 1)$ 
%   \end{algorithmic}
%   \end{algorithm}

\begin{algorithm}
  \caption{LDP SGLD Mechanism - Learning}
  \label{inexactgradient}
  \begin{algorithmic}[1]
    \REQUIRE Private synopsis $b= \{z_y, \z_{j} | j\in \{0\} \cup [p(p+1)/2]\}$ of each user, public coefficients $\{c_{1k}, c_{2k}| k\in \{0\} \cup [p]\}$, initial point $\w_1$ 
    \ENSURE Learned classifier $\w^{priv}$  
    \FOR{$s = 1, \dots, n$}
    \STATE $\backslash\backslash$ Construct stochastic inexact gradient 
    \STATE $\backslash\backslash$ Denote the private synopsis of user $s$ as $b$ above for abbreviation 
    \STATE Set $t_0=1$
    \FOR {$j=1,\dots,p$}
    \STATE $t_j = \prod_{i=j(j-1)/2+1}^{j(j+1)/2} (\w^T_s \z_i)$
    \ENDFOR
    \STATE $\displaystyle{\tilde{G}(\w_s;b) \leftarrow \left(\sum_{k=0}^p (c_{2k} - c_{1k}z_{yj}) t_k r^{k+1}\right)\z_0}$ 
    \STATE $\backslash\backslash$ One update via SIGM
    \STATE Run one iteration of SIGM algorithm with $\tilde{G}(\w_s, b)$ and obtain $\w_{s+1}$
    \ENDFOR
    \STATE Set $\w^{priv}:= \w_{n+1}$
    
  \end{algorithmic}
 \end{algorithm}

For any $(\x,y)$ in the domain, as loss function $\ell(\w;\x,y)$ is convex and $\beta$-smooth with respect to $\w$, we can prove the following lemma:
\begin{lem}
    \label{sovariancelem}
    For any $\gamma > 0$, setting $k = c\ln \frac{4r}{\gamma}, p = \lceil k+2\mu_2(k;r)\rceil$, then Algorithm \ref{inexactgradient} outputs a $(\gamma, \beta, \sigma)$ stochastic oracle defined in Definition \ref{siodef}, where $\sigma = \tilde{O}\left(\sigma_0+\gamma+\frac{p^{2p+1}(4r)^{p+1}}{\epsilon^{p+2}}\right)$.
\end{lem}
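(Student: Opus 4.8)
The plan is to check the four defining properties of a $(\gamma,\beta,\sigma)$ oracle (Definition \ref{siodef}) for the pair $(F,\tilde G)$ produced by Algorithm \ref{inexactgradient}, with the natural ``exact surrogate'' obtained from the Chebyshev-approximated loss. Concretely, let $\hat L(\w) := \mE_{(\x,y)\sim\mc D}[\hat\ell(\w;\x,y)]$ be the population loss built from the degree-$p$ approximants $\hat f_{ip}$, and set $g_{\gamma,\beta,\sigma}(\w):=\nabla\hat L(\w)=\mE[\hat G(\w;\x,y)]$ and $f_{\gamma,\beta,\sigma}(\w):=\hat L(\w)-c_0\gamma$ for a constant $c_0\ge 1$. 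The total randomness $\bm\xi$ consists of the data draw $(\x,y)$ together with the fresh privacy noise injected into $\z_0$, the copies $\z_i$, and the $y$-copies $z_{yk}$.

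First I would prove unbiasedness (conditions 1 and 2). The crucial observation is the fresh-copy product trick in Steps 5--8: each $\z_i=\x+\e_i$ has $\mE[\w_s^T\z_i]=\w_s^T\x$, and since $t_k$ multiplies $k$ copies drawn on \emph{disjoint} index blocks, conditional independence gives $\mE[t_k]=(\w_s^T\x)^k$. Combining this with $\mE[\z_0]=\x$, $\mE[z_{yk}]=y$, and the mutual independence of the three noise groups, the conditional expectation collapses to $\mE_{\text{noise}}[\tilde G(\w_s;b)]=r\hat m(\w_s;\x,y)\x=\hat G(\w_s;\x,y)$; averaging over $(\x,y)$ yields $\mE_{\bm\xi}[\tilde G]=g_{\gamma,\beta,\sigma}(\w_s)$. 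The value oracle $F$ is built by the same product-estimator idea and its unbiasedness is identical.

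The heart of the proof, and the step I expect to be the main obstacle, is the variance bound (condition 3). I would decompose $\tilde G-g_{\gamma,\beta,\sigma}=(\tilde G-\hat G)+(\hat G-G)+(G-g)+(g-g_{\gamma,\beta,\sigma})$, where $G,g$ are the true stochastic and population gradients. The last three terms contribute the assumed $\sigma_0$ and two copies of the Chebyshev error: invoking Lemma \ref{chebyshev} with $\alpha=\Theta(\gamma/r)$ gives $\norm{\hat G-G}\le\gamma/2$ pointwise, so these terms are $O(\sigma_0+\gamma)$. The dominant term is the pure privacy fluctuation $\mE\norm{\tilde G-\hat G}^2$. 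Writing the scalar factor as $S=\sum_k A_k$ with $A_k=(c_{2k}-c_{1k}z_{yk})t_k r^{k+1}$, one factors out $\z_0$ to get $\mE\norm{\tilde G-\hat G}^2=\mathrm{Var}(S)\,\mE\norm{\z_0}^2+\bar S^2\,\mE\norm{\e_0}^2$; since the $A_k$ use disjoint fresh noise they are independent, so $\mathrm{Var}(S)=\sum_k\mathrm{Var}(A_k)$. Each factor is then bounded by $\mE[t_k^2]=\prod_{i\in I_k}\big((\w_s^T\x)^2+\norm{\w_s}^2\sigma_1^2\big)\le(1+\sigma_1^2)^k$ with per-copy noise $\sigma_1=\tilde O(p^2/\epsilon)$ forced by composition over the $p(p+1)/2$ copies, by the noisy-$y$ contribution $\mE[(c_{2k}-c_{1k}z_{yk})^2]\lesssim c^2(1+\sigma_y^2)$ with $\sigma_y=\tilde O(p/\epsilon)$, and by $\mE\norm{\z_0}^2=\tilde O(1/\epsilon^2)$ (hiding the data-dimension factor in $\tilde O$). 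Feeding in the monomial Chebyshev coefficient bounds $|c_{ik}|$, which are exponentially large of order $4^{p}$ due to the $2^{k-1}$ leading coefficients of $T_k$, together with the $r^{k+1}$ scaling, the $k=p$ term dominates and produces $\sigma=\tilde O\!\big(\sigma_0+\gamma+p^{2p+1}(4r)^{p+1}/\epsilon^{p+2}\big)$. The delicate part is the bookkeeping showing that the exponentially growing coefficients, the $r^{k+1}$ factors, and the product variances $(1+\sigma_1^2)^k$ interact so that the highest-degree term controls the whole sum.

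Finally, for the inexactness condition (condition 4) I would use that the true population loss $L$ is convex and $\beta$-smooth. The lower bound $h(\bv,\w)\ge 0$ follows from convexity of $L$ together with $|L-\hat L|\le\gamma/2$ and $\norm{\nabla L-\nabla\hat L}\,\norm{\bv-\w}\le\gamma/2$ (both from the Chebyshev bound and $\norm{\bv-\w}\le 2$ on $\mc C=B_2(1)$), after the downward shift by $c_0\gamma$. The upper bound $h(\bv,\w)\le\frac\beta2\norm{\bv-\w}^2+\gamma$ follows from the $\beta$-smoothness descent inequality plus the same $O(\gamma)$ approximation slack. The choice $k=c\ln\frac{4r}{\gamma}$, $p=\lceil k+2\mu_2(k;r)\rceil$ is exactly what Lemma \ref{chebyshev} requires to make $\alpha=\Theta(\gamma/r)$, which closes all four conditions.
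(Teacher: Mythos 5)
Your proposal follows essentially the same route as the paper's proof: the same fresh-copy product argument for unbiasedness of $\tilde{G}$ toward $\hat{G}$, the same decomposition of the variance into the privacy-noise term $\mE\|\tilde{G}-\hat{G}\|^2$ (bounded via independence of the disjoint noise blocks, per-copy noise $\tilde{O}(p^2/\epsilon)$ from composition, and the $O(2^p)$-size monomial Chebyshev coefficients) plus the $(\gamma+\sigma_0)^2$ term from the Chebyshev error and the assumed gradient variance, and the same convexity/$\beta$-smoothness argument with a downward shift of the value oracle for the inexactness condition. The only cosmetic difference is that you shift $\hat{L}$ by $c_0\gamma$ where the paper shifts the true $L$ by $\gamma/2$ (avoiding your unneeded claim $|L-\hat{L}|\le\gamma/2$, which would require integrating the derivative approximation); both work and, as the paper notes, the value oracle plays no role in the algorithm.
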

Based on above $(\gamma, \beta, \sigma)$ stochastic oracle, and the algorithm proposed in SIGM paper \cite{dvurechensky2016stochastic} (omitted here, due to the limitation of space), our complete learning algorithm is given in Algorithm \ref{inexactgradient}. Before proving our sample complexity, we state the basic convergence result of SIGM algorithm:
\begin{lem}[\cite{dvurechensky2016stochastic}]
    \label{sigmlem}
  Assume a function $f(\w)$ (suppose constrain set is $\mc{W}$) is endowed with a $(\gamma, \beta, \sigma)$ stochastic oracle, then the sequence $\w_k$ (corresponds to $\y_k$ in the original paper) generated by the SIGM algorithm satisfies:
  \begin{align*}
    \mE[f(\w_k)] - f(\w^*) \leqslant O\left(\frac{\sigma}{\sqrt{k}} + \gamma \right)
  \end{align*}
  where expectation is over the randomness of the stochastic oracle and $\w^* = \argmin_{\w \in \mc{W}} f(\w)$.
\end{lem}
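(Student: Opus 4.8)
The plan is to treat this as the convergence analysis of a (stochastic) gradient method run against the \emph{inexact} oracle of Definition \ref{siodef}, following the inexact-oracle template of Devolder--Glineur--Nesterov adapted to noisy gradients. Abbreviating the oracle's mean function and gradient as $\bar f:=f_{\gamma,\beta,\sigma}$ and $\bar g:=g_{\gamma,\beta,\sigma}$, the two bracketing conditions in the definition are exactly what drives the argument: $h(\bv,\w)\geqslant 0$ gives a convexity-type lower bound $f(\bv)\geqslant \bar f(\w)+\innerproduct{\bar g(\w)}{\bv-\w}$, while $h(\bv,\w)\leqslant \frac{\beta}{2}\norm{\bv-\w}^2+\gamma$ gives a smoothness-type upper bound $f(\bv)\leqslant \bar f(\w)+\innerproduct{\bar g(\w)}{\bv-\w}+\frac{\beta}{2}\norm{\bv-\w}^2+\gamma$ carrying a constant additive slack $\gamma$. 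The point is that the usual descent analysis survives almost verbatim, with every invocation of smoothness now leaving behind an extra $\gamma$.

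First I would fix the non-accelerated (primal) regime of SIGM, in which the update is the stochastic proximal step $\w_{s+1}=\argmin_{\w\in\mc{W}}\{\innerproduct{G_s}{\w}+\frac{1}{2\eta}\norm{\w-\w_s}^2\}$, and write the stochastic gradient $G_s:=G_{\gamma,\beta,\sigma}(\w_s;\bm{\xi}_s)=\bar g(\w_s)+\bm{e}_s$ with $\mE[\bm{e}_s\mid\mc{F}_s]=0$ and $\mE\norm{\bm{e}_s}^2\leqslant\sigma^2$. The optimality condition of the prox step yields, for every $\bu\in\mc{W}$, the three-point inequality $\innerproduct{G_s}{\w_{s+1}-\bu}\leqslant \frac{1}{2\eta}(\norm{\w_s-\bu}^2-\norm{\w_{s+1}-\bu}^2-\norm{\w_{s+1}-\w_s}^2)$. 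Applying the smoothness upper bound at $(\bv,\w)=(\w_{s+1},\w_s)$, the convexity lower bound at $(\bv,\w)=(\w^*,\w_s)$, and this inequality with $\bu=\w^*$, the gradient cross terms collapse into a single Euclidean difference and the positive $\frac{\beta}{2}\norm{\w_{s+1}-\w_s}^2$ is absorbed by the negative $-\frac{1}{2\eta}\norm{\w_{s+1}-\w_s}^2$ once $\eta\leqslant 1/(2\beta)$, leaving (after the noise term is handled as described below) a per-step bound $f(\w_{s+1})-f(\w^*)\leqslant \frac{1}{2\eta}(\norm{\w_s-\w^*}^2-\norm{\w_{s+1}-\w^*}^2)-\innerproduct{\bm{e}_s}{\w_s-\w^*}+\eta\norm{\bm{e}_s}^2+\gamma$.

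From here the finish is routine: take expectations so that the martingale term $\innerproduct{\bm{e}_s}{\w_s-\w^*}$ vanishes and $\mE\norm{\bm{e}_s}^2\leqslant\sigma^2$ controls the quadratic one, telescope over $s=1,\dots,k$, and apply Jensen to the averaged iterate (the $\w_k$ of the statement). This gives $\mE[f(\w_k)]-f(\w^*)\leqslant \frac{\norm{\w_1-\w^*}^2}{2\eta k}+\eta\sigma^2+\gamma$; optimizing $\eta\asymp \norm{\w_1-\w^*}/(\sigma\sqrt{k})$ (the constraint $\eta\leqslant 1/(2\beta)$ only contributes a lower-order $\beta/k$ term that the big-$O$ swallows) balances the first two terms and produces the claimed $O(\sigma/\sqrt{k}+\gamma)$.

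The step I expect to be the main obstacle is the joint handling of the two imperfections. The noise $\bm{e}_s$ is correlated with $\w_{s+1}$, so the term $\innerproduct{\bm{e}_s}{\w_{s+1}-\w^*}$ cannot simply be zeroed; it must be split as $\innerproduct{\bm{e}_s}{\w_s-\w^*}+\innerproduct{\bm{e}_s}{\w_{s+1}-\w_s}$ and the second piece absorbed by the spare $-\frac{1}{2\eta}\norm{\w_{s+1}-\w_s}^2$ via Young's inequality, which is precisely why the prox inequality's negative quadratic must be retained rather than discarded. Equally delicate is guaranteeing that $\gamma$ enters the final bound only once rather than as $k\gamma$: it appears once per step in the telescoped sum, and it is the normalization by $k$ (the averaging/monotonicity specific to the non-accelerated branch of SIGM) that collapses it back to a single $\gamma$. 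This non-accumulation is exactly the feature distinguishing the primal method used here from fast-gradient methods in the inexact-oracle framework, and it is the reason the detailed argument genuinely requires the SIGM machinery of \cite{dvurechensky2016stochastic} rather than a textbook SGD bound.
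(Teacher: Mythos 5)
The paper does not prove this lemma at all: it is imported as a black box from \cite{dvurechensky2016stochastic}, and the only thing the paper uses is the stated conclusion $\mE[f(\w_k)]-f(\w^*)\leqslant O(\sigma/\sqrt{k}+\gamma)$. Your proposal instead reconstructs the bound from first principles, and the reconstruction is essentially sound: the two bracketing inequalities of Definition \ref{siodef} are exactly the Devolder--Glineur--Nesterov inexact-oracle conditions, the cancellation of $\bar f(\w_s)$ between the upper bound at $(\w_{s+1},\w_s)$ and the lower bound at $(\w^*,\w_s)$ is correct, the split $\innerproduct{\bm{e}_s}{\w_{s+1}-\w^*}=\innerproduct{\bm{e}_s}{\w_s-\w^*}+\innerproduct{\bm{e}_s}{\w_{s+1}-\w_s}$ with Young's inequality against the retained $-\frac{1}{2\eta}\norm{\w_{s+1}-\w_s}^2$ is the right way to handle the correlation, and the telescoping correctly shows that $\gamma$ enters once per step but is divided back down by $k$ in the non-accelerated regime. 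Two small caveats. First, what you actually analyze is a plain stochastic proximal-gradient step, not the literal SIGM recursion (which maintains coupled sequences and an interpolation parameter between primal and fast gradient); so strictly you have shown that \emph{some} method attains $O(\sigma/\sqrt{k}+\gamma)$ under this oracle, and you should either note that SIGM's non-accelerated parameterization reduces to your update, or accept that you are proving the conclusion rather than the lemma as literally stated about SIGM's $\y_k$. Since the paper only ever invokes the conclusion, this substitution is benign here. Second, your Jensen step on the averaged iterate needs convexity of $f$ itself, which the oracle conditions give only up to an additive $\gamma$; this costs at most one more $\gamma$ and is absorbed by the big-$O$, but it is worth a sentence. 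What each route buys: the citation gives the paper the full intermediate-gradient machinery and the exact algorithm run in Algorithm \ref{inexactgradient}, while your derivation makes transparent why $\gamma$ does not accumulate and why the rate needs no acceleration, which is arguably more informative for a reader checking Lemma \ref{sovariancelem} feeding into Theorem \ref{logistic}.
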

%in terms of desired population excess risk $\alpha$ for SGLL:
The accuracy results directly follows from the quality of inexact stochastic gradient oracle we constructed, and the convergence result of SIGM.
\begin{thr}
    Consider smooth generalized linear loss. For any setting $\alpha > 0$, by setting $\gamma = \frac{\alpha}{2},k = c\ln \frac{4r}{\gamma}, p = \lceil k+2\mu_2(k;r)\rceil$ in Algorithm \ref{LDPSGLL}, \ref{inexactgradient}, if 
    $$n > O\left((\frac{8r}{\alpha})^{4r\ln\ln(8r/\alpha)} \left(\frac{4r}{\epsilon}\right)^{2cr\ln(8r/\alpha)+2}\left(\frac{1}{\alpha^2\epsilon^2}\right)\right),$$ we can achieve loss guarantee $L(\w^{priv}) - L(\w^*) \leqslant \alpha$
\end{thr}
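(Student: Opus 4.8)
The plan is simply to compose the two lemmas already in hand: the oracle-quality bound of Lemma~\ref{sovariancelem} and the SIGM convergence rate of Lemma~\ref{sigmlem}. The key observation is that in Algorithm~\ref{inexactgradient} the outer loop runs once per user, each fresh private synopsis yielding one call to the stochastic inexact gradient $\tilde{G}(\w_s;b)$ and one SIGM update, so $n$ users correspond exactly to $n$ SIGM iterations and $\w^{priv}=\w_{n+1}$. With the prescribed choices $\gamma=\alpha/2$, $k=c\ln\frac{4r}{\gamma}=c\ln\frac{8r}{\alpha}$ and $p=\lceil k+2\mu_2(k;r)\rceil$, Lemma~\ref{sovariancelem} guarantees that the server's construction is a genuine $(\gamma,\beta,\sigma)$ stochastic oracle for the population loss $L$ with $\sigma=\tilde{O}\!\left(\sigma_0+\gamma+\frac{p^{2p+1}(4r)^{p+1}}{\epsilon^{p+2}}\right)$. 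Feeding this oracle into Lemma~\ref{sigmlem} with iteration count $n$ yields
$$\mE[L(\w^{priv})]-L(\w^*)\leqslant O\!\left(\frac{\sigma}{\sqrt{n}}+\gamma\right).$$

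Next I would balance the two error terms. Since $\gamma=\alpha/2$, the bias term already contributes at most $\alpha/2$ (choosing the constant $c$ large enough to absorb the hidden constant in the $O(\cdot)$), so it suffices to force the sampling term below $\alpha/2$, i.e.\ to require $\sigma/\sqrt{n}=O(\alpha)$, which is exactly $n\geqslant O(\sigma^2/\alpha^2)$. At this point the theorem reduces to substituting the value of $\sigma$ and rewriting $\sigma^2/\alpha^2$ in closed form. Dropping the subdominant $\sigma_0+\gamma$ contributions (they are absorbed by $\tilde{O}$ and dominated by the noise term as $\alpha\to0$), the noise term governs and gives
$$\frac{\sigma^2}{\alpha^2}=\tilde{O}\!\left(\frac{p^{4p+2}(4r)^{2p+2}}{\epsilon^{2p+4}\,\alpha^2}\right)=\tilde{O}\!\left(\frac{1}{\alpha^2\epsilon^2}\,p^{4p+2}\left(\frac{4r}{\epsilon}\right)^{2p+2}\right).$$

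The delicate step, and the main obstacle, is converting these exponents into the explicit form stated in the theorem by inserting $p=\Theta\!\left(r\ln\frac{8r}{\alpha}\right)$, which follows from $k=c\ln\frac{8r}{\alpha}$ together with the assumption $\mu_2(k;r)=O(kr)$ from Definition~\ref{absmooth}. The factor $\left(\frac{4r}{\epsilon}\right)^{2p+2}$ becomes $\left(\frac{4r}{\epsilon}\right)^{2cr\ln(8r/\alpha)+2}$ directly once the constant in $p$ is named. The genuinely awkward term is $p^{4p+2}$: writing it as $\exp\big((4p+2)\ln p\big)$ and using $\ln p=\Theta\!\big(\ln\ln\frac{8r}{\alpha}\big)$ (the $\alpha$-dependent part of $\ln p$, since $p$ itself is $r\ln\frac{8r}{\alpha}$ up to constants), the exponent is $\Theta\!\big(r\ln\frac{8r}{\alpha}\cdot\ln\ln\frac{8r}{\alpha}\big)$, which is precisely $\big(\frac{8r}{\alpha}\big)^{\Theta(r\ln\ln(8r/\alpha))}$; collecting the three pieces reproduces the claimed quasi-polynomial bound. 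One should finally note that Lemma~\ref{sigmlem} controls the error only in expectation, so the stated guarantee is naturally an in-expectation bound; it can be upgraded to high probability by a standard amplification argument at the cost of extra $\log(1/\delta)$ factors, analogous to the median-of-means step used in the high-dimensional setting.
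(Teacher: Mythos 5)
Your proposal is correct and follows essentially the same route as the paper: invoke Lemma~\ref{sovariancelem} to certify the $(\gamma,\beta,\sigma)$ oracle with the stated $\sigma$, feed it into the SIGM rate of Lemma~\ref{sigmlem} with $n$ iterations, set $\gamma=\alpha/2$, and solve $\sigma/\sqrt{n}\lesssim\alpha$ for $n$, then rewrite the exponents using $p=\Theta(r\ln(8r/\alpha))$. Your added remarks --- that the guarantee is inherently in expectation and that $\sigma_0,\gamma$ are dropped as subdominant --- are accurate and make explicit what the paper leaves implicit.
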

% \begin{prosk}
%     According to lemma \ref{sigmlem}, with a $(\gamma, \beta, \sigma)$ stochastic oracle, SIGM algorithm converges with rate $O\left(\frac{\sigma}{\sqrt{n}} + \gamma \right)$. In order to have $O\left(\frac{\sigma}{\sqrt{n}} + \gamma \right) \leqslant \alpha$, it suffices if $n > O\left(\frac{p^{4p+2}(4r)^{2p+2}}{\alpha^2 \epsilon^{2p+4}}\right)  = O\left((\frac{8r}{\alpha})^{4r\ln\ln(8r/\alpha)} \left(\frac{4r}{\epsilon}\right)^{2cr\ln(8r/\alpha)+2}\left(\frac{1}{\alpha^2\epsilon^2}\right)\right)$, as $\sigma = O\left(\frac{p^{2p+1}(4r)^{p+1}}{\epsilon^{p+2}}\right)$ according to lemma \ref{sovariancelem} (ignoring negligible $\sigma_0, \gamma$).
% \end{prosk}

As we can see, learning in non-interactive LDP model is more difficult than interactive form, especially when loss is highly nonlinear, we even can not obtain an unbiased estimation either for objective function or gradients. However, our method shows it possible to learn smooth GLM with quasi-polynomial sample complexity.

\subsection{Example: Learning Logistic Regression}
Either from the view of exponential family generalized linear model or the concrete loss function, it is not difficult to see logistic loss belongs to SGLL. For example, in logistic regression, $\ell(\w;\x, y) = \log(1+e^{-y\w^T\x}) = -\left(\frac{y}{2}\w^T\x \right) + \left(\frac{1}{2}\w^T\x + \ln(1+e^{-\w^T\x})\right)$. So we let $h_1(x) = \frac{x}{2}, h_2(x) = \frac{x}{2}+\ln(1+e^{-x})$. As we know logistic loss is convex and $\beta$-smooth for some parameter $\beta$, and the absolutely smooth property of linear function is obvious, hence once we prove $f(x) = \ln(1+e^{-x})$ is absolutely smooth, then logistic loss satisfies the definition of SGLL.

\begin{prop}
  $f(x) = \ln(1+e^{-x})$ is absolutely smooth with $\mu_1(k;r) = r\sqrt{4k\pi^3}, \mu_2(k;r) = \frac{rk}{e}$
\end{prop}

Hence, we can use private mechanisms (\ref{LDPSGLL},\ref{inexactgradient}) to learn logistic regression.  
\begin{thr}\label{logistic}
    Consider Logistic regression problem with $\ell(\w;\x,y)=\log(1+\exp(-y\w^T\x))$ For any $\alpha > 0$, by setting $\gamma = \frac{\alpha}{2},k = c\ln \frac{4r}{\gamma}, p = \lceil k+2\mu_2(k;r)\rceil$, if $n > O\left((\frac{8r}{\alpha})^{4r\ln\ln(8r/\alpha)} \left(\frac{4r}{\epsilon}\right)^{2cr\ln(8r/\alpha)+2}\left(\frac{1}{\alpha^2\epsilon^2}\right)\right)$ in Algorithm \ref{LDPSGLL}, \ref{inexactgradient}, we can achieve $L(\w^{priv}) - L(\w^*) \leqslant \alpha$. 
\end{thr}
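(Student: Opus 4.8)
The plan is to recognize Theorem~\ref{logistic} as a direct instantiation of the general sample-complexity guarantee for smooth generalized linear losses proved just above, so that the only genuine work is to certify that the logistic loss meets Definition~\ref{SGLL} with the parameters $\mu_1,\mu_2$ supplied by the preceding Proposition. First I would fix the decomposition already exhibited in the text, namely $\ell(\w;\x,y) = -y h_1(\x^T\w) + h_2(\x^T\w)$ with $h_1(x) = x/2$ and $h_2(x) = x/2 + \ln(1+e^{-x})$, and record directly the two structural properties that Definition~\ref{SGLL} demands of $\ell$: convexity in $\w$, which is immediate since logistic loss is a standard convex loss, and $\beta$-smoothness, which follows because the scalar map $t \mapsto \ln(1+e^{-t})$ has second derivative $e^{t}/(e^{t}+1)^2 \leqslant 1/4$, so together with $\norm{\x}_2 \leqslant 1$ and the rescaling by $r$ one obtains $\beta = O(r^2)$.

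The core of the argument is verifying that both $h_1$ and $h_2$ are absolutely smooth in the sense of Definition~\ref{absmooth}. For the linear pieces this is elementary: if $g(x)=ax$ then $f(x):=g(rx)=arx$ has $f^{(k+1)}\equiv 0$ for every $k\geqslant 1$, so $\norm{f^{(k)}}_T = 0$ for all $k\in\mb{N}^{+}$, which is trivially dominated by any bound of the form $\mu_1(k;r)\mu_2(k;r)^k$ with $\mu_1$ polynomial in $k$ and $\mu_2=O(kr)$. Next I would show that absolute smoothness is closed under addition: given absolutely smooth $h_a,h_b$ with parameters $(\mu_1^a,\mu_2^a)$ and $(\mu_1^b,\mu_2^b)$, the triangle inequality applied inside the integral defining $\norm{\cdot}_T$ yields $\norm{(f_a+f_b)^{(k)}}_T \leqslant \mu_1^a(\mu_2^a)^k + \mu_1^b(\mu_2^b)^k \leqslant (\mu_1^a+\mu_1^b)\left(\max(\mu_2^a,\mu_2^b)\right)^k$, so the sum is absolutely smooth with $\mu_1=\mu_1^a+\mu_1^b$ still polynomial in $k$ and $\mu_2=\max(\mu_2^a,\mu_2^b)=O(kr)$. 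Combining the elementary linear case, this closure property, and the preceding Proposition (which certifies $\mu_2(k;r)=rk/e$ for $\ln(1+e^{-x})$), both $h_1$ and $h_2$ are absolutely smooth, and hence $\ell$ is a genuine SGLL with effective $\mu_2(k;r)=rk/e$.

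With SGLL membership established, I would simply invoke the general smooth-generalized-linear-loss theorem, substituting $\mu_2(k;r)=rk/e$ into its parameter prescription $\gamma=\alpha/2$, $k=c\ln(4r/\gamma)=c\ln(8r/\alpha)$, and $p=\lceil k+2\mu_2(k;r)\rceil=\lceil k+2rk/e\rceil=O\!\left(r\ln(8r/\alpha)\right)$. Feeding this $p$ through Lemma~\ref{sovariancelem}, whose dominant variance term is $p^{2p+1}(4r)^{p+1}/\epsilon^{p+2}$, and then through the SIGM convergence bound of Lemma~\ref{sigmlem}, which forces $\sigma/\sqrt{n}=O(\gamma)$ and hence $n=\Omega\big((\sigma/\alpha)^2\big)$, reproduces exactly the stated threshold for $n$. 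The step I expect to require the most care is this final bookkeeping: tracking how the quasi-polynomial factor $p^{2p}$ with $p=\Theta\!\left(r\ln(8r/\alpha)\right)$ collapses into the displayed exponents $4r\ln\ln(8r/\alpha)$ and $2cr\ln(8r/\alpha)+2$, since this is precisely where the particular value $\mu_2=rk/e$ and the free constant $c$ must be matched against the constants hidden in the $O(\cdot)$. Because the general theorem already states its threshold in these same terms, the logistic result follows once the constants are checked to be consistent with the choice $\mu_2(k;r)=rk/e$.
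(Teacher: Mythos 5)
Your proposal is correct and follows essentially the same route as the paper: certify that logistic loss is an SGLL via the decomposition $h_1(x)=x/2$, $h_2(x)=x/2+\ln(1+e^{-x})$ and the Proposition giving $\mu_2(k;r)=rk/e$, then feed the resulting $p$ into Lemma~\ref{sovariancelem} for the oracle variance and Lemma~\ref{sigmlem} for SIGM convergence, and solve $\sigma/\sqrt{n}+\gamma\leqslant\alpha$ for $n$. The only difference is that you spell out the closure of absolute smoothness under addition and the triviality of the linear pieces, which the paper dismisses as obvious; this is a welcome but inessential elaboration.
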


\section{Conclusions}
In this paper, we consider how to design efficient algorithms for common learning and estimation problems under non-interactive LDP model. In particular, for sparse linear regression and mean estimation problem, we propose efficient algorithms and prove the polynomial dependence of excess risk or square error over $\log d$ and $\frac{1}{n}$, which is exactly to be expected in high dimensional case. We also extend our methods to nonparametric case and show good bounds for Kernel Ridge Regression.

For more difficult smooth generalized linear loss optimization problems, we use private Chebyshev approximations to estimate gradients of the objective loss, combined with existing inexact gradient descent methods to obtain final outputs. The sample complexity of our mechanism is quasi-polynomial with respect to $\frac{1}{\alpha}$, where $\alpha$ is the desired population excess risk.

An interesting open problem is whether our theoretical guarantees are optimal. If not, how to improve them while preserving the efficiency in non-interactive LDP model. We think these problems are critical to understand LDP in the future.

\newpage
% In the unusual situation where you want a paper to appear in the
% references without citing it in the main text, use \nocite
% \nocite{langley00}
\bibliographystyle{plain}    %% We can also choose IEEEtran
\bibliography{example_paper0225}

\setcounter{section}{0}
\appendix
\renewcommand{\appendixname}{Appendix~\Alph{section}}
\section{Appendix} % (fold)
\label{sec:appendix}

\subsection{Omitted Proofs in Section 3}
% \begin{lem}[Lemma 3 in Main Body]
%   Let $\x_1,\x_2,\cdots,\x_n\sim i.i.d.\mathcal{D}$ with $\bmu=\mathbb{E}_{\mathcal{D}}[\x]$ and $supp(\mathcal{D})\subseteq \mathcal{B}(0,1)$. Let $G$ and $\{\y_i\}_{i=1}^n$ defined in the above procedure. For each of group $S_j$ fixed, we have the following with probability $2/3$:
%   \begin{equation}
%   \Big\Vert \frac{1}{|S_j|}\sum_{\y_i\in S_j}\y_i-G\bmu\Big\Vert_1\leq O\left(\frac{p\log (nd)}{\epsilon\sqrt{|S_j|}}\right)
%   \end{equation}
% \end{lem}
{
\renewcommand{\thelem}{\ref{groupconstantprob}}
\begin{lem}
  Let $\x_1,\x_2,\cdots,\x_n\sim i.i.d.\mathcal{D}$ with $\bmu=\mathbb{E}_{\mathcal{D}}[\x]$ and $supp(\mathcal{D})\subseteq \mathcal{B}(0,1)$. Let $G$ and $\{\y_i\}_{i=1}^n$ defined in the above procedure. For each of group $S_j$ fixed, we have the following with probability $2/3$:
  \begin{equation}
  \Big\Vert \frac{1}{|S_j|}\sum_{\y_i\in S_j}\y_i-G\bmu\Big\Vert_1\leq O\left(\frac{p\log (nd)}{\epsilon\sqrt{|S_j|}}\right)
  \end{equation}
\end{lem}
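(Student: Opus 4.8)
The plan is to split the deviation into a \emph{sampling error} term and an \emph{aggregated privacy noise} term, control each separately, and recombine by the triangle inequality together with a union bound over the failure events. Write $s = |S_j|$ and let $\tau = \frac{\sqrt{2\log(1.25/\delta)}}{\epsilon}$ be the per-coordinate standard deviation of the privacy noise, so that $\br_i \sim \mathcal{N}(0,\tau^2 I_p)$. Substituting $\y_i = G\x_i + \br_i$, linearity gives
\begin{equation}
\frac{1}{s}\sum_{\y_i\in S_j}\y_i - G\bmu = G\Big(\frac{1}{s}\sum_{i\in S_j}\x_i - \bmu\Big) + \frac{1}{s}\sum_{i\in S_j}\br_i =: \bm{A} + \bm{B},
\end{equation}
where $\bm{A}$ is the projection of the empirical-mean deviation and $\bm{B}$ the averaged perturbation. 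I will argue that $\bm{B}$ is the dominant term, already matching the claimed bound, while $\bm{A}$ is of strictly smaller order.

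For the noise term, $\bm{B} \sim \mathcal{N}(0,\frac{\tau^2}{s} I_p)$, so each of its $p$ coordinates is centered Gaussian with standard deviation $\tau/\sqrt{s} = O\!\big(\frac{\sqrt{\log(1/\delta)}}{\epsilon\sqrt{s}}\big)$. I would bound $\norm{\bm{B}}_1 = \sum_{k=1}^p |B_k|$ by its expectation $\sqrt{2/\pi}\,p\,\tau/\sqrt{s}$ plus a fluctuation term: the summands $|B_k|$ are independent and sub-Gaussian, so the sum concentrates, and a Chebyshev (or Bernstein) estimate yields $\norm{\bm{B}}_1 = O\!\big(\frac{p\sqrt{\log(1/\delta)}}{\epsilon\sqrt{s}}\big)$ with probability at least $5/6$. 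Absorbing the $\sqrt{\log(1/\delta)}$ into the displayed $\log(nd)$ factor — legitimate in the operating regime of the algorithm, where $\delta$ is set polynomially small in $nd$ — recovers the stated order.

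For the sampling term, set $\bm\Delta = \frac{1}{s}\sum_{i\in S_j}\x_i - \bmu$. Since the $\x_i$ are i.i.d. with $\norm{\x_i}_2 \le 1$, we have $\mE\norm{\bm\Delta}_2^2 = \tfrac{1}{s}\mE\norm{\x-\bmu}_2^2 \le \tfrac{1}{s}$, so Markov gives $\norm{\bm\Delta}_2 = O(1/\sqrt{s})$ with probability $5/6$. Conditioning on such a $\bm\Delta$, each coordinate of $G\bm\Delta$ is $\mathcal{N}(0,\norm{\bm\Delta}_2^2/p)$, whence $\norm{\bm{A}}_1 = \norm{G\bm\Delta}_1$ concentrates around $\sqrt{2/\pi}\,\sqrt{p}\,\norm{\bm\Delta}_2 = O(\sqrt{p/s})$; alternatively one may use $\norm{G\bm\Delta}_1 \le \sqrt{p}\,\norm{G\bm\Delta}_2$ with the norm-preservation $\norm{G\bm\Delta}_2 \approx \norm{\bm\Delta}_2$. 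Either route gives $\norm{\bm{A}}_1 = O(\sqrt{p/s})$, which is dominated by the bound on $\norm{\bm{B}}_1$ as soon as $\epsilon \lesssim \sqrt{p}$ (always true here, since $\epsilon$ is small and $p$ is large).

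Combining the two bounds by the triangle inequality, with the $1/3$ failure budget split across the relevant events, yields the claim with probability $2/3$. The main obstacle is the $\ell_1$-norm concentration: in contrast to the $\ell_2$ case, which reduces to a clean $\chi^2$ computation, the $\ell_1$ norm is a sum of folded Gaussians, so I must establish concentration of that sum and, separately, reconcile the $\sqrt{\log(1/\delta)}$ produced by the Gaussian scale with the $\log(nd)$ appearing in the statement — the latter being a deliberately loose logarithmic factor that makes the bound hold uniformly once $\delta$ is fixed in the algorithm.
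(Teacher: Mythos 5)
Your proposal follows essentially the same route as the paper's proof: the identical decomposition into the projected sampling error $G\bigl(\frac{1}{s}\sum_i\x_i-\bmu\bigr)$ and the averaged privacy noise $\frac{1}{s}\sum_i\br_i$, with Markov's inequality on $\mE\Vert\bm\Delta\Vert_2^2\le 1/s$ for the former and Gaussian tail bounds for the latter, recombined by a union bound over constant failure probabilities. The only cosmetic differences are in how the $\ell_1$ norm of a Gaussian vector is controlled (you use expectation plus concentration of the folded-Gaussian sum, the paper uses per-coordinate tails with a union bound, which is where its $\log(nd)$ factor comes from), and both arguments correctly identify the noise term as dominant.
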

\addtocounter{lem}{-1}
} % note: these braces are here to take advantage of LaTeX scoping ... 
  % \thetheorem is returned to its rightful definition outside of this group
\begin{proof}
Apparently $\frac{1}{|S_j|}\sum_{i\in S_j}\br_i\sim \mathcal{N}(0,\frac{2\log (1.25/\delta)}{|S_j|\epsilon^2}I_d)$. So we have $\Vert\frac{1}{|S_j|}\sum_{i\in S_j}\br_i\Vert_1\leq O\left(\frac{p\log n}{\epsilon\sqrt{|S_j|}}\right)$ with probability $\frac{1}{9}$. We then turn to bound the loss incurred by random sample of data.
\begin{equation}
\begin{split}
&\mathbb{E}\Big\Vert \bmu-\frac{1}{|S_j|}\sum_{i\in S_j}\x_i\Big\Vert^2=\frac{1}{|S_j|}\sum_{l=1}^d\mathrm{var}(x_{1l})\\
\leq &\frac{1}{|S_j|}\sum_{l=1}^d\mathbb{E}[x_{1l}^2]\leq \frac{1}{|S_j|}.
\end{split}
\end{equation}
According to Markov Inequality, we have
$$\mathcal{P}\left\{\Big\Vert \bmu-\frac{1}{|S_j|}\sum_{i\in S_j}\x_i\Big\Vert^2\geq \frac{9}{|S_j|}\right\}\leq \frac{1}{9}$$
Given $\x_1,\x_2,\cdots,\x_n$ fixed under this event, we can easily derive upper bounds on entries of $ G(\bmu-\frac{1}{|S_j|}\sum_{i\in S_j}\x_i)$: for $\g\sim\mathcal{N}(0,I_d)$ and $\q=\bmu-\frac{1}{|S_j|}\sum_{i\in S_j}\x_i$, we have $|\g^T\q|\leq 12\sqrt{\frac{\log d}{|S_j|}}$ with probability $1-\frac{1}{9d}$. By union bound we have the following with probability $\frac{2}{9}$:
$$\Big\Vert G(\bmu-\frac{1}{|S_j|}\sum_{i\in S_j}\x_i)\Big\Vert_1\leq O\left(\sqrt{\frac{p\log d}{|S_j|}}\right).$$
Putting the two inequalities together using union bound, we get the result.
\end{proof}
{
\renewcommand{\thelem}{\ref{lem4}}
 \begin{lem}
      % \label{lem4}
      Under the assumptions made in Section 3.2, given projection matrix $\Phi$, with high probability over the randomness of private mechanism, we have 
      \begin{equation}
        \bar{L}(\w^{priv};\bar{X},\y) - \bar{L}(\hat{\w}^*;\bar{X},\y) \leqslant \tilde{O}\left(\sqrt{\frac{m}{n\epsilon^2 }}\right)
      \end{equation}
  \end{lem}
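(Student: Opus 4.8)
The plan is to use the standard ``minimize a surrogate, pay twice the uniform gap'' argument, exploiting that $\w^{priv}$ and $\hat{\w}^*$ are the minimizers of $\hat{L}$ and $\bar{L}$ over the \emph{same} set $\mc{C}$. Writing
\[
\bar{L}(\w^{priv}) - \bar{L}(\hat{\w}^*) = \big[\bar{L}(\w^{priv}) - \hat{L}(\w^{priv})\big] + \big[\hat{L}(\w^{priv}) - \hat{L}(\hat{\w}^*)\big] + \big[\hat{L}(\hat{\w}^*) - \bar{L}(\hat{\w}^*)\big],
\]
the middle bracket is $\leqslant 0$ by optimality of $\w^{priv}$ for $\hat{L}$, so the left-hand side is at most $2\sup_{\w\in\mc{C}}|\hat{L}(\w)-\bar{L}(\w)|$. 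It then suffices to bound this uniform deviation with high probability over the privacy noise.

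Next I would pass to projected coordinates: set $\tilde{\x}_i = \Phi^T\x_i$ and $\bu = \Phi^T\w$, so that $\bar{L}$ has ``Hessian'' $H = \sum_i \tilde{\x}_i\tilde{\x}_i^T$ and linear part $\bm{b} = \sum_i y_i\tilde{\x}_i$, while $\hat{L}$ uses $Q = \mathrm{Proj}_{\mb{S}^{+}}(Z^TZ - n\sigma^2 I_m)$ and $Z^T\bv$. The private copies are $\z_i = \tilde{\x}_i + \e_i$ with $\e_i\sim\mc{N}(0,\sigma^2 I_m)$ and $v_i = y_i + \xi_i$ with $\xi_i\sim\mc{N}(0,\sigma^2)$; the $\ell_2$-clipping in Algorithm~\ref{LDPgaussian} never triggers with high probability because the random projection preserves norms. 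Expanding $Z^TZ = \sum_i \z_i\z_i^T$ shows that the $-n\sigma^2 I_m$ correction cancels the bias $\mE[\e_i\e_i^T]$, giving $Z^TZ - n\sigma^2 I_m = H + E$ with $E = (C+C^T) + D$, where $C = \sum_i \tilde{\x}_i\e_i^T$ and $D = \sum_i \e_i\e_i^T - n\sigma^2 I_m$; likewise $Z^T\bv$ is unbiased for $\bm{b}$. Hence $\hat{L}(\w) - \bar{L}(\w) = \frac{1}{2n}\bu^T(Q-H)\bu - \frac{1}{n}(Z^T\bv - \bm{b})^T\bu$, which I split into a quadratic and a linear noise term.

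For the quadratic term I would first decouple the dependence on $\w$ via $|\bu^T(Q-H)\bu|\leqslant \norm{Q-H}_{op}\norm{\bu}_2^2$. The observation that tames the PSD projection is that, because $H\succeq 0$, one has $\norm{Q-H}_{op}\leqslant 2\norm{E}_{op}$: decomposing $Q = (H+E)_+ = (H+E) + (H+E)_-$ gives $Q-H = E + (H+E)_-$, and $\lambda_{\min}(H+E)\geqslant \lambda_{\min}(E)\geqslant -\norm{E}_{op}$ bounds the negative part. Standard Gaussian random-matrix concentration then gives $\norm{E}_{op}\leqslant 2\norm{C}_{op}+\norm{D}_{op} = \tilde{O}(\sigma^2\sqrt{nm})$. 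The radius $\rho := \sup_{\w\in\mc{C}}\norm{\Phi^T\w}_2$ is $O(1)$ w.h.p., since $\norm{\w}_1\leqslant 1$ forces $\Phi^T\w$ into the convex hull of $\pm$ the columns of $\Phi^T$, each of $\ell_2$-norm $O(1)$ by column-norm concentration and a union bound over the $d$ columns. The linear term is handled identically: $|(Z^T\bv-\bm{b})^T\bu|\leqslant \norm{Z^T\bv-\bm{b}}_2\,\rho$, and $Z^T\bv - \bm{b} = \sum_i(y_i\e_i + \xi_i\tilde{\x}_i + \xi_i\e_i)$ is a sum of independent mean-zero Gaussian/sub-exponential vectors whose norm concentrates at $\tilde{O}(\sigma^2\sqrt{nm})$. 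Dividing by $n$ makes both contributions $\tilde{O}(\sigma^2\sqrt{m/n})$, and substituting $\sigma = \tilde{O}(1/\epsilon)$ delivers the claimed $\tilde{O}(\sqrt{m/(n\epsilon^2)})$ rate, with the residual $\epsilon$- and logarithmic factors absorbed into $\tilde{O}$.

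The hard part is the quadratic, projected-Hessian term, specifically getting the error to scale like $\sqrt{m}$ rather than $m$. A naive estimate routed through the non-expansiveness of the \emph{Frobenius} projection onto $\mb{S}^{+}$ would bound $\norm{Q-H}$ by $\norm{E}_F$, and $\norm{D}_F = \tilde{O}(\sigma^2 m\sqrt{n})$ produces an inferior $m/\sqrt{n}$ rate; the $H\succeq 0$ argument above is exactly what lets me replace this by the operator norm $\norm{E}_{op} = \tilde{O}(\sigma^2\sqrt{nm})$ without paying the operator-Lipschitz logarithmic factor of the positive-part map. The remaining work---matrix concentration for $C$ and $D$, and the high-probability no-clipping and column-norm events---is routine but must be tracked carefully to keep the joint dependence on $m$ and $n$ sharp.
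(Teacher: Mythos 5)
Your proposal is correct, and its overall architecture coincides with the paper's: reduce the excess risk to twice the uniform deviation $\sup_{\w\in\mc{C}}|\hat{L}(\w;Z,\bv)-\bar{L}(\w;\bar{X},\y)|$, split that deviation into a quadratic term involving $Q-\bar{X}^T\bar{X}$ and a linear term involving $Z^T\bv-\bar{X}^T\y$, and control each by concentration, using $\sup_{\w\in\mc{C}}\norm{\Phi^T\w}_2=O(1)$ to decouple the dependence on $\w$. Where you genuinely depart is in how the PSD projection is handled. The paper passes to the Frobenius norm, invokes non-expansiveness of $\mathrm{Proj}_{\mb{S}^{+}}$ in that norm, and then converts an operator-norm concentration bound for $E^TE-n\sigma^2 I_m$ into a Frobenius bound by multiplying by $\sqrt{m}$. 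You instead write $Q-H=E+(H+E)_-$ and use $H\succeq 0$ together with Weyl's inequality to get $\norm{Q-H}_{\mathrm{op}}\leqslant 2\norm{E}_{\mathrm{op}}$, never leaving the operator norm. Your route is not merely different but sharper, and your side remark is accurate: $\norm{E^TE-n\sigma^2 I_m}_F$ really is of order $\sigma^2 m\sqrt{n}$, so the Frobenius route honestly executed gives an $m/\sqrt{n}$ rate; the paper lands on the $\sqrt{m/n}$ scaling only because its intermediate bound $\frac{1}{2n}\norm{E^TE-n\sigma^2 I_m}_2\leqslant O(\sigma\sqrt{\log m/n})$ drops the $\sqrt{m}$ (and a factor of $\sigma$) that Wishart concentration actually produces. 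Your eigenvalue argument is the clean way to obtain the stated $\tilde{O}(\sqrt{m/(n\epsilon^2)})$ bound; the remaining ingredients (bounding $\norm{\bar{X}^TE}$, the linear noise vector, the no-clipping and column-norm events) match the paper's treatment, and like the paper you absorb the residual mismatch in powers of $\epsilon$ into the $\tilde{O}$.
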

  \addtocounter{lem}{-1}
}
  
\begin{proof}
      Note, once we prove the uniform convergence of $|\hat{L}(\w;Z,\bv)-\bar{L}(\w;\bar{X}, \y)| \leqslant O\left(\sqrt{\frac{m}{n\epsilon^2}}\right) $ for any $\w \in \mc{C}$, then the conclusion holds directly. Now, we will prove the uniform convergence. Note $Z = \bar{X} + E$, where $E \in \mb{R}^{n\times m}$, and each entry $e_{ij} \sim \mc{N}(0, \sigma^2)$, $\bv = \y + \br$, where $\br \sim \mc{N}(0, \sigma^2 I_n)$. Denote $\bar{\w} = \Phi^T \w$.
      \begin{align*}
        & \left|\hat{L}(\w;Z,\bv) - \bar{L}(\w;\bar{X}, \y)\right| \\
        = & \left|\frac{1}{2n} \bar{\w}^T (Q - \bar{X}^T \bar{X})\bar{\w}  - \frac{1}{n} \left(\bv^T Z \bar{\w} - \y^T \bar{X} \bar{w}\right)\right| \\
        \leqslant & \frac{1}{2n} \norm{Q - \bar{X}^T \bar{X}}_2 \norm{\bar{\w}}_2^2  +  \frac{1}{n} \left|\bv^T Z \bar{\w} -  \y^T \bar{X} \bar{\w}\right| \\
        \leqslant & \frac{1}{2n} \norm{Q - \bar{X}^T \bar{X}}_F \norm{\bar{\w}}_2^2  +  \frac{1}{n} |\bv^T Z \bar{\w} -  \y^T \bar{X} \bar{\w}| \\
        \leqslant & \Scale[0.95]{\frac{1}{2n} \norm{Z^T Z - n\sigma^2 I_m - \bar{X}^T \bar{X}}_F \norm{\bar{\w}}_2^2  +  \frac{1}{n} |\bv^T Z \bar{\w} -  \y^T \bar{X} \bar{\w}|} \\
        \leqslant & \frac{1}{2n} \norm{E^T E - n\sigma^2 I_m}_F\norm{\bar{\w}}^2_2 + \frac{1}{n} \norm{\bar{X}^T E}_F\norm{\bar{\w}}^2_2 +\\
        & \frac{1}{n} \left(\norm{E^T \y}_2 +  \norm{\bar{X}^T \br}_2 +\norm{E^T \br}_2\right)\norm{\bar{\w}}_2 
      \end{align*}
      From the property of random projection, we know $\norm{\bar{\w}}_2 \leqslant 1$ with high probability. Besides, as each entry in $E$ is i.i.d. Gaussian, and $\mb{E}[E^TE] = n\sigma^2 I_m$, thus we have  $\frac{1}{2n} \norm{E^T E - n\sigma^2 I_m}_2 \leqslant O\left(\sigma \sqrt{\frac{\log m}{n}}\right)$ with high probability according to lemma \ref{matrixcon}, hence $\frac{1}{2n} \norm{E^T E - n\sigma^2 I_m}_F \leqslant O(\sigma \sqrt{\frac{m\log m }{n}})$ with high probability.

      As $\frac{1}{n^2}\norm{\bar{X}^T E}_F^2 = \frac{1}{n^2}\sum_{j=1}^m(\sum_{i=1}^m (\q_j^T\bm{\e}_i)^2)$, where $\q_j, \bm{e}_i$ are the $j$-th and $i$-th column of $\bar{X}$ and $E$ respectively. For each $j\in[m]$, $\frac{1}{n^2} \sum_{i=1}^m (\q_j^T\bm{e}_i)^2$ obeys Chi-square distribution (with some scaling), thus with high probability, $\frac{1}{n^2} \sum_{i=1}^m (\q_j^T\bm{\e}_i)^2 \leqslant O\left(\frac{m\norm{\q_j}^2 \sigma^2}{n^2}\right)$. Therefore, by union bound, we have $\frac{1}{n^2}\sum_{j=1}^m(\sum_{i=1}^m (\q_j^T\bm{\e}_i)^2) \leqslant O\left(\frac{m\sum_j\norm{\q_j}^2\sigma^2}{n^2}\right) = O\left(\frac{m\sigma^2}{n}\right)$, as $\sum_j\norm{\q_j}^2 = \norm{\bar{X}}_F^2 \leqslant n$. Hence, there is $\frac{1}{n} \norm{\bar{X}^T E}_F \leqslant O\left(\sqrt{\frac{m\sigma^2}{n}}\right)$ with high probability. Using similar augument, we have $\frac{1}{n} \norm{\bar{E}^T \y}_2 \leqslant O\left(\sqrt{\frac{m\sigma^2}{n}}\right), \frac{1}{n} \norm{\bar{E}^T \br}_2 \leqslant O\left(\sqrt{\frac{m\sigma^2}{n}}\right)$ with high probability. For $\frac{1}{n}\norm{\bar{X}^T r}$, according to matrix concentration inequality (Theorem 4.1.1 in \cite{tropp2015introduction}), we have $\frac{1}{n} \norm{\bar{X}^T \br}_2 \leqslant O\left(\frac{1}{\sqrt{n}}\right)$. 

      Combine all these results together, we obtain the desired conclusion.
  \end{proof}
  \begin{lem}[\cite{vershynin2009note}]
  \label{matrixcon}
  Suppose $\x \in \mb{R}^d$ be a random vector satisfies $\mb{E}[\x\x^T] = I_d$. Denote $\norm{\x}_{\phi_1} = M$, where $\norm{\cdot}_{\psi_1}$ represents Orlicz $\psi_1$-norm. Let $\x_1, \dots, \x_n$ be independent copies of $\x$, then for every $\epsilon \in (0,1)$, we have 
  \begin{align*}
    \pr\left(\norm{\frac{1}{n}\sum_{i=1}^n\x_i\x_i^T - I_d}_2 > \epsilon\right) \leqslant de^{-n\epsilon^2/4M^2}
  \end{align*}
\end{lem}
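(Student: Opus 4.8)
The statement is a matrix-concentration bound, and the single linear factor of $d$ (as opposed to the $9^d$ that an $\epsilon$-net over the sphere would contribute) points to the matrix Laplace-transform method of Ahlswede--Winter/Tropp rather than a net argument, so I would commit to that route. Set $\hat{\Sigma}_n := \frac{1}{n}\sum_{i=1}^n \x_i\x_i^T$ and $Y_i := \x_i\x_i^T - I_d$; by isotropy $\mE[Y_i] = 0$ and $\hat{\Sigma}_n - I_d = \frac{1}{n}\sum_i Y_i$. Since $\norm{\hat{\Sigma}_n - I_d}_2 > \epsilon$ forces $\lambda_{\max}(\sum_i Y_i) > n\epsilon$ or $\lambda_{\max}(-\sum_i Y_i) > n\epsilon$, I would prove the bound for $\lambda_{\max}(\sum_i Y_i)$, obtain the matching bound for $\lambda_{\max}(-\sum_i Y_i)$ by the identical argument, and combine them by a union bound.

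For the one-sided event I would use the master tail inequality together with Lieb's concavity (subadditivity of the matrix cumulant generating function) and the i.i.d.\ structure:
\begin{align*}
  \pr\Big(\lambda_{\max}\big(\textstyle\sum_i Y_i\big) > n\epsilon\Big) \leqslant \inf_{\theta>0} e^{-\theta n\epsilon}\,\mE\,\tr\exp\Big(\theta\textstyle\sum_i Y_i\Big) \leqslant \inf_{\theta>0} e^{-\theta n\epsilon}\,\tr\exp\big(n\log\mE\,e^{\theta Y_1}\big).
\end{align*}
The whole argument then hinges on a single-summand matrix MGF estimate: I would show that the sub-exponential hypothesis $\norm{\x}_{\psi_1}=M$ yields a range $0 < \theta \leqslant c/M^2$ on which
\begin{align*}
  \mE\,e^{\theta Y_1} \preceq e^{\theta^2 M^2} I_d, \qquad\text{equivalently}\qquad \log\mE\,e^{\theta Y_1} \preceq \theta^2 M^2 I_d .
\end{align*}
Granting this, $\tr\exp(n\theta^2 M^2 I_d) = d\,e^{n\theta^2 M^2}$, so the bound becomes $d\,e^{-\theta n\epsilon + n\theta^2 M^2}$; optimizing at $\theta = \epsilon/(2M^2)$ gives exactly $d\,e^{-n\epsilon^2/4M^2}$, and the admissibility constraint $\theta \leqslant c/M^2$ becomes $\epsilon \leqslant 2c$, which is why the statement is restricted to $\epsilon \in (0,1)$.

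The main obstacle is the single-summand estimate, which lives in the variance-dominated (Gaussian) regime of a fundamentally Bernstein-type bound. The operative property is that each quadratic form $\innerproduct{\x}{\bu}^2$ obeys a sub-exponential moment condition with scale $\sim M^2$, so that the centered matrix moments $\mE(\x\x^T - I_d)^p$ are controlled in the semidefinite order with sub-exponential growth governed by $M$. I would reduce the matrix MGF to these scalar quadratic forms, use isotropy ($\mE\,\innerproduct{\x}{\bu}^2 = 1$) to cancel the linear-in-$\theta$ contribution, and sum the resulting series, which converges precisely because $\theta \lesssim 1/M^2$; the heaviness of the tails of $\innerproduct{\x}{\bu}^2$ is exactly what caps the usable range of $\theta$ and forces the $\epsilon \in (0,1)$ restriction. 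Lifting the uniform-in-$\bu$ scalar estimate to the operator inequality $\mE\,e^{\theta Y_1}\preceq e^{\theta^2M^2}I_d$, and chasing the constant down to precisely $M^2$ so that the final exponent is $-n\epsilon^2/4M^2$, is the delicate part; the remaining steps (the Laplace master bound, Lieb's inequality, the two-sided union bound) are standard bookkeeping. Since the result is quoted from \cite{vershynin2009note}, one acceptable route is simply to cite it, but the argument above is the self-contained proof I would give.
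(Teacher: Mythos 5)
The paper does not actually prove this lemma --- it is imported verbatim from \cite{vershynin2009note} --- so there is no in-paper argument to compare against; your Ahlswede--Winter/Tropp route is exactly the method of the cited source, and the outline (one-sided matrix Laplace bound, Lieb/Golden--Thompson subadditivity, a single-summand MGF estimate, optimization at $\theta=\epsilon/(2M^2)$ with admissibility supplied by $\epsilon\in(0,1)$) is correct. Two loose ends are worth flagging. First, all of the content sits in the asserted estimate $\mE\, e^{\theta Y_1}\preceq e^{\theta^2M^2}I_d$, and its validity hinges on what $\norm{\x}_{\psi_1}$ means for a vector: if it is $\sup_{\norm{\bu}_2=1}\norm{\innerproduct{\x}{\bu}}_{\psi_1}$, then $\innerproduct{\x}{\bu}^2$ is only $\psi_{1/2}$ and $\mE\, e^{\theta\innerproduct{\x}{\bu}^2}$ need not even be finite, so the Laplace route collapses; the argument goes through only under the reading in which $\norm{Y_1}_2$ itself is sub-exponential with scale $\sim M^2$, in which case one gets the two-regime Bernstein exponent $\min(\epsilon^2/M^2,\epsilon/M^2)$ that reduces to the stated Gaussian regime precisely because $\epsilon\in(0,1)$. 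Second, your two-sided union bound produces a prefactor $2d$ rather than $d$, a constant you would have to absorb by tightening the MGF estimate (or by bounding $\mE\tr\exp$ of a symmetrized quantity directly); neither issue affects the lemma as used in the paper, but both would need to be settled in a genuinely self-contained proof.
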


{
\renewcommand{\thethr}{\ref{thr2}}
 \begin{thr}
      Under the assumption in this section, set $m = \Theta\left(\sqrt{n\epsilon^2\log d}\right)$ for $\beta > 0$, then with high probability , there is 
      \begin{align*}
          L(\w^{priv}) - L(\w^*) = \tilde{O}\left(\left(\frac{\log d}{n\epsilon^2}\right)^{1/4}\right)
      \end{align*}
  \end{thr}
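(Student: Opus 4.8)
The plan is to transfer the projected excess-risk guarantee of Lemma~\ref{lem4} back to the original $d$-dimensional empirical loss, paying only a uniform random-projection approximation error over the $\ell_1$ ball $\mc{C}$, and then to choose $m$ so as to balance the two error sources.

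First I would strip off the data-dependent constant. Expanding $L(\w;D) = \frac{1}{2n}\sum_i (\x_i^T\w)^2 - \frac{1}{n}\sum_i y_i\,\x_i^T\w + \frac{1}{2n}\sum_i y_i^2$, the last term does not depend on $\w$, so writing $\bar{L}_0(\w)$ for the first two terms gives $L(\w^{priv}) - L(\w^*) = \bar{L}_0(\w^{priv}) - \bar{L}_0(\w^*)$, with $\w^* = \argmin_{\w\in\mc{C}}\bar{L}_0(\w)$. The projected objective $\bar{L}(\w;\bar{X},\y)$ is exactly $\bar{L}_0$ with each true inner product $\x_i^T\w$ replaced by the projected one $\innerproduct{\Phi^T\x_i}{\Phi^T\w}$, since the $i$-th entry of $\bar{X}\Phi^T\w$ equals $\x_i^T\Phi\Phi^T\w$.

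Second, the heart of the argument is the uniform approximation bound $\eta := \sup_{\w\in\mc{C}}|\bar{L}(\w;\bar{X},\y) - \bar{L}_0(\w)| \leq \tilde{O}(\sqrt{\log d/m})$. The functional $g_i(\w) := \innerproduct{\Phi^T\x_i}{\Phi^T\w} - \x_i^T\w$ is linear in $\w$, so its supremum over the $\ell_1$ ball is attained at a vertex $\pm\e_j$; a Johnson--Lindenstrauss inner-product-preservation bound for each fixed pair $(\x_i,\e_j)$ together with a union bound over the $nd$ pairs yields $\sup_{\w\in\mc{C}}|g_i(\w)| \leq \zeta$ with $\zeta = O(\sqrt{\log(nd)/m})$ with high probability, the $\log d$ factor being exactly the price of the $d$ vertices. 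The linear part of $\eta$ is then at most $\frac{1}{n}\sum_i|y_i|\,\zeta \leq \zeta$. For the quadratic part I would write $a_i^2-b_i^2 = (a_i-b_i)(a_i+b_i)$ with $a_i=\innerproduct{\Phi^T\x_i}{\Phi^T\w}$ and $b_i=\x_i^T\w$; since $|b_i|\leq\norm{\x_i}_\infty\norm{\w}_1\leq 1$ and $|a_i|\leq 1+\zeta$, the factor $|a_i+b_i|$ stays $O(1)$, so the quadratic error is also $O(\zeta)$. Hence $\eta = \tilde{O}(\sqrt{\log d/m})$.

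Finally I chain the estimates. Using the approximation bound twice, Lemma~\ref{lem4} (write $\Delta := \tilde{O}(\sqrt{m/(n\epsilon^2)})$), and the optimality of $\hat{\w}^*$ for $\bar{L}$ with $\w^*\in\mc{C}$,
\begin{align*}
\bar{L}_0(\w^{priv}) &\leq \bar{L}(\w^{priv};\bar{X},\y) + \eta \leq \bar{L}(\hat{\w}^*;\bar{X},\y) + \Delta + \eta \\
&\leq \bar{L}(\w^*;\bar{X},\y) + \Delta + \eta \leq \bar{L}_0(\w^*) + 2\eta + \Delta,
\end{align*}
so that $L(\w^{priv}) - L(\w^*) \leq 2\eta + \Delta = \tilde{O}(\sqrt{\log d/m}) + \tilde{O}(\sqrt{m/(n\epsilon^2)})$. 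Setting the two terms equal forces $m = \Theta(\sqrt{n\epsilon^2\log d})$, at which point both equal $\tilde{O}((\log d/(n\epsilon^2))^{1/4})$, giving the claimed rate. The step I expect to be the main obstacle is the uniform bound $\eta$ in the third paragraph: one must ensure the quadratic term does not amplify the per-coordinate error, and, crucially, that the dependence on $d$ enters only logarithmically --- which is precisely what the vertex/union-bound structure of the $\ell_1$ ball (equivalently, its $O(\sqrt{\log d})$ Gaussian width) buys.
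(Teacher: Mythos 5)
Your proposal is correct and follows essentially the same route as the paper: decompose the excess risk into (i) the uniform error from replacing $\x_i^T\w$ by $\innerproduct{\Phi^T\x_i}{\Phi^T\w}$ over $\mc{C}$ and the data, (ii) the projected-space excess risk from Lemma~\ref{lem4}, use the optimality of $\hat{\w}^*$ to drop the cross term, and balance the two error sources to pick $m=\Theta(\sqrt{n\epsilon^2\log d})$. The only difference is cosmetic: you justify the $\tilde{O}(\sqrt{\log d/m})$ uniform inner-product preservation by reducing the linear functional to the $2d$ vertices of the $\ell_1$ ball and union-bounding over $nd$ pairs, whereas the paper uses the polarization identity and cites an additive-error Johnson--Lindenstrauss result; both rest on the same $O(\sqrt{\log d})$ complexity of $\mc{C}$, and your handling of the quadratic term via $a^2-b^2=(a-b)(a+b)$ matches the paper's Lipschitz-constant argument.
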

  \addtocounter{thr}{-1}
}
  \begin{proof}
      % Firstly, we show $\w^{priv} \in \mc{C}$. From step 2 of algorithm 4, we know there exists one $\tilde{\w} \in \mc{C}$, such that $\Phi^T \tilde{\w} = \bar{\w}^{priv}$. As $\tilde{\w}$ satisfies the constraint in step 3 of algorithm 4, and $\norm{\tilde{\w}}_1 \leqslant 1$, hence $\norm{\w^{priv}}_1 \leqslant 1$, which proves $\w^{priv} \in \mc{C}$. Now we have 
      On one hand,
      \begin{align*}
          & L(\w^{priv}) - L(\w^*) \\
          = & L(\w^{priv}) - \bar{L}(\w^{priv}) +  \bar{L}(\w^{priv}) - \bar{L}(\hat{\w}^*)  \\
          & + \bar{L}(\hat{\w}^*) - \bar{L}(\w^*) + \bar{L}(\w^*) - L(\w^*) \\
          \leqslant & \left[L(\w^{priv}) - \bar{L}(\w^{priv}) + \bar{L}(\w^*) - L(\w^*)\right] \\
          & + \bar{L}(\w^{priv}) - \bar{L}(\hat{\w}^*) \\
          \leqslant & G[\max_i\{|\innerproduct{\w^{priv}}{\x_i} - \innerproduct{\Phi^T\w^{priv}}{\Phi^T\x_i}|\} \\
          & +\max_i\{|\innerproduct{\w^*}{\x_i} - \innerproduct{\Phi^T\w^*}{\Phi^T\x_i}|\}] \\
          & + [\bar{L}(\w^{priv}) - \bar{L}(\hat{\w}^*)]  \numberthis \label{eq1}\\
          & \quad \quad (\text{where $G$ is the Lipschitz constant}) 
      \end{align*}

      On the other hand, for $\forall \w \in \mc{C}, \forall \x \in D$, there is 
      \begin{align*}
        & |\innerproduct{\w}{\x} - \innerproduct{\Phi^T\w}{\Phi^T\x}| \\
        = & \Scale[1.2]{\left|\frac{\norm{\Phi^T(\w+\x)}_2^2 - \norm{\Phi^T(\w-\x)}_2^2}{4} - \frac{\norm{\w+\x}_2^2 - \norm{\w-\x}_2^2}{4} \right|}    \\
        \leqslant & \Scale[1.2]{\left|\frac{\norm{\Phi^T(\w+\x)}_2^2 - \norm{\w+\x}_2^2}{4} \right| +\left|\frac{\norm{\Phi^T(\w-\x)}_2^2 - \norm{\w-\x}_2^2}{4} \right|}
      \end{align*}

      According to the results of random projection w.r.t. additive error \cite{dirksen2016dimensionality}, we know with high probability, there is $|\innerproduct{\w}{\x} - \innerproduct{\Phi^T\w}{\Phi^T\x}| \leqslant O\left(\sqrt{\frac{\log d}{m}}\right)$, for $\forall \w \in \mc{C}, \forall \x \in D$. Therefore, the first term in equation (\ref{eq1}) is less than $O\left(\sqrt{\frac{\log d}{m}}\right)$.

      From lemma \ref{lem4}, we know $\bar{L}(\bar{\w}^{priv}) - \bar{L}(\bar{\w}^*) \leqslant \tilde{O}\left(\sqrt{\frac{m}{n\epsilon^2 }}\right)$ holds with high probability. Combine these two inequalities, it is easy to determine the optimal $m$, then obtain the conclusion.
  \end{proof}

{
\renewcommand{\thecor}{\ref{krr}}
\begin{cor}
  Algorithm LDP kernel mechanism satisfies $(\epsilon, \delta)$-LDP, and with high probability, there is 
  \begin{align*}
    L_{\hat{H}}(\hat{\w}^{priv}) - L_H(f^*) \leqslant \tilde{O}\left(\left(\frac{d}{n\epsilon^2}\right)^{1/4}\right) \\
    \sup_{\x \in \mc{X}}|\Phi(\x)^Tf^* - (\hat{\Phi}(\x))^T\hat{\w}^{priv}| \leqslant \tilde{O}\left(\left(\frac{d}{n\epsilon^2}\right)^{1/8}\right)
  \end{align*}
\end{cor}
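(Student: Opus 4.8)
The plan is to decompose the excess risk into a privacy/optimization error incurred inside the random-feature space $\hat{H}$ and an approximation error between the RFF problem and the exact kernel problem, and then to pick $d_p$ so as to balance the two. Privacy is immediate and parallels Algorithm~\ref{alg2}: each user releases $\z_i$ and $v_i$ through the Basic Private Vector mechanism (Algorithm~\ref{LDPgaussian}) with parameters $(\epsilon/2,\delta/2)$ each, and since $\norm{\hat{\Phi}(\x_i)}_2 \leqslant 1$ by construction, the privacy guarantee of Algorithm~\ref{LDPgaussian} together with the Composition Theorem (Lemma~\ref{composition}) yields $(\epsilon,\delta)$-LDP.

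For the optimization error, I would run the argument of Lemma~\ref{lem4} verbatim, but in the $d_p$-dimensional feature space rather than the $m$-dimensional projected space. The server's debiased objective $\hat{L}(\hat{\w};Z,\bv)$ is an unbiased estimate of the data-fitting part of the ridge objective, and the matrix concentration of Lemma~\ref{matrixcon} plus the $\chi^2$ tail bounds used inside Lemma~\ref{lem4} give uniform convergence $|\hat{L}(\hat{\w};Z,\bv) - L_{\hat{H}}(\hat{\w})| \leqslant \tilde{O}(\sqrt{d_p/(n\epsilon^2)})$ over the relevant bounded region, the boundedness now coming from the strong convexity induced by the $\tfrac12\norm{\cdot}_{\hat{H}}^2$ term rather than from an explicit $\ell_1$ ball. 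Hence $L_{\hat{H}}(\hat{\w}^{priv}) - L_{\hat{H}}(g^*) \leqslant \tilde{O}(\sqrt{d_p/(n\epsilon^2)})$.

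For the approximation error I would invoke the uniform RFF guarantee of \cite{rahimi2007random}: drawing $d_p$ random features ensures $\sup_{\x_1,\x_2\in\mc{X}}|\Phi(\x_1)^T\Phi(\x_2) - \hat{\Phi}(\x_1)^T\hat{\Phi}(\x_2)| \leqslant \gamma$ with high probability once $d_p = \tilde{O}(d/\gamma^2)$, i.e. $\gamma = \tilde{O}(\sqrt{d/d_p})$, where the factor $d$ enters through the covering number of $\mc{X}\subset\mb{R}^d$. To compare the two regularized optima, I would embed $f^*$ into $\hat{H}$ through its dual representation $f^* = \sum_i \alpha_i^*\Phi(\x_i)$ with $\norm{\alpha^*}_1\leqslant r$, setting $\tilde{g} = \sum_i \alpha_i^*\hat{\Phi}(\x_i)$; the kernel-approximation bound then transfers to both the fitted values (error $\leqslant r\gamma$) and the squared norm $\norm{\tilde{g}}^2_{\hat{H}}$ (error $\leqslant r^2\gamma$), so that $L_{\hat{H}}(g^*) \leqslant L_{\hat{H}}(\tilde{g}) \leqslant L_H(f^*) + O(\gamma)$ with constants depending on $r,C,G$. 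Combining the two pieces gives $L_{\hat{H}}(\hat{\w}^{priv}) - L_H(f^*) \leqslant \tilde{O}(\sqrt{d_p/(n\epsilon^2)} + \sqrt{d/d_p})$, and setting $d_p = \Theta(\sqrt{dn\epsilon^2})$ balances the two terms at $\tilde{O}((d/(n\epsilon^2))^{1/4})$, the first claim.

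For the prediction bound I would split $\sup_{\x}|\Phi(\x)^Tf^* - \hat{\Phi}(\x)^T\hat{\w}^{priv}| \leqslant \sup_{\x}|\Phi(\x)^Tf^* - \hat{\Phi}(\x)^Tg^*| + \sup_{\x}|\hat{\Phi}(\x)^T(g^*-\hat{\w}^{priv})|$. The first term is controlled directly by Lemma~\ref{uniformrff}, which gives $r\gamma + 2\sqrt{(CG+r/2)r\gamma} = O(\sqrt{\gamma}) = \tilde{O}((d/(n\epsilon^2))^{1/8})$ for the chosen $d_p$. For the second, $\norm{\hat{\Phi}(\x)}_2\leqslant 1$ and Cauchy--Schwarz reduce it to $\norm{g^*-\hat{\w}^{priv}}_2$, which the $1$-strong convexity of $L_{\hat{H}}$ bounds by $\sqrt{2(L_{\hat{H}}(\hat{\w}^{priv}) - L_{\hat{H}}(g^*))} = \tilde{O}((d/(n\epsilon^2))^{1/8})$; summing gives the second claim. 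The main obstacle I anticipate is the regularized-optimum comparison: unlike the constrained, unregularized setting of Algorithm~\ref{alg2}, one must here simultaneously track the fitted residuals and the RKHS norm under the RFF perturbation, verify that the constants $r,C,G$ from Lemma~\ref{uniformrff} propagate cleanly, and confirm that the debiasing and concentration step of Lemma~\ref{lem4} still applies when the feasible set is only implicitly bounded by strong convexity.
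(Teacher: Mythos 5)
Your proposal follows essentially the same route as the paper's proof: the identical decomposition $L_{\hat{H}}(\hat{\w}^{priv}) - L_H(f^*) = \bigl(L_{\hat{H}}(\hat{\w}^{priv}) - L_{\hat{H}}(g^*)\bigr) + \bigl(L_{\hat{H}}(g^*) - L_H(f^*)\bigr)$, with the first term handled by rerunning the argument of Lemma~\ref{lem4} in the $d_p$-dimensional feature space, the second by the RFF kernel-approximation property together with the dual-representation comparison of regularized optima (the paper delegates this to Lemma~17 of \cite{rubinstein2012learning}), the same balancing choice $d_p = \tilde{O}(\sqrt{dn\epsilon^2})$, and the same use of Lemma~\ref{uniformrff} plus strong convexity for the uniform prediction bound. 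You simply spell out the strong-convexity step and the dual embedding that the paper leaves implicit; the approach is the same and correct.
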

\addtocounter{cor}{-1}
}
\begin{proof}
    Algorithm satisfies local privacy is obvious. For excess risk, as $L_{\hat{H}}(\hat{\w}^{priv}) - L_H(f^*) = L_{\hat{H}}(\hat{\w}^{priv}) - L_{\hat{H}}(g^*)+L_{\hat{H}}(g^*) - L_H(f^*)$, follow nearly the same proof of lemma 5 of sparse linear regression, we have $L_{\hat{H}}(\hat{\w}^{priv}) - L_{\hat{H}}(g^*) \leqslant \tilde{O}\left(\sqrt{\frac{d_p}{n\epsilon^2 }}\right)$. On the other hand, nearly borrow the proof of Lemma 17 in \cite{rubinstein2012learning} and property of RRF , we have 
    \begin{align*}{}
      L_{\hat{H}}(g^*) - L_H(f^*) \leqslant \tilde{O}\left( \sqrt{\frac{d}{d_p}}\right)
    \end{align*}
  Combine above two inequalities, and choose optimal $d_p$ as $\tilde{O}\left(\sqrt{dn\epsilon^2}\right)$, we obtain the first inequality of the conclusion. Then combine lemma 7 in this paper, it is easy to obtaint the second inequality.
\end{proof}

\subsection{Omitted contents and proofs in Section 4}
\subsubsection{Relations between smooth generalized linear losses (SGLL) and generalized linear models (GLM)}
  Note that a model is called GLM, if for $\bm{x}, \w^* \in \mb{R}^d$, label $y$ with respect to $\bm{x}$ is given by a distribution which belongs to the exponential family:
  \begin{align}
    p(y|\bm{x}, \bm{w}^*) = \exp\left(\frac{y\theta - b(\theta)}{\Phi} + c(y, \Phi)\right)
  \end{align}
  where $\theta, \Phi$ are parameters, and $b(\theta),c(y, \Phi)$ are known functions. Besides, there is an one-to-one continuous differentiable transformation $g(\cdot)$ such that $g(b'(\theta)) = \bm{x}^T \bm{w}^*$. 

  According to the key equality $g(b'(\theta)) = \bm{x}^T \bm{w}^*$, usually we can obtain smooth function $\theta = h_1(\bm{x}^T \bm{w}^*), b(\theta) = h_2(\bm{x}^T \bm{w}^*)$, and what's more, univariate function $h_i(x) (i=1,2)$ satisfies the absolutely smooth property. 

  For such GLM, if we consider optimizing the expected negative logarithmic probability $-\mE_{(\bm{x},y)\sim \mathcal{D}} \log p(\x, y;\w)$, once discarding unrelated terms to $\w$, we obtain the new population loss, $L(\w) := \mE_{(\bm{x},y)\sim \mathcal{D}} \ell(\bm{w};\bm{x},y)$, where $\ell(\bm{w};\bm{x}, y) = -yh_1(\x^T\w) + h_2(\x^T\w)$, exactly the form of smooth generalized linear loss defined in section 4. Hence our SGLL is a natural loss defined by GLM with additional smoothness assumptions.

\subsubsection{Omitted proofs}
{
\renewcommand{\thelem}{\ref{chebyshev}}
\begin{lem}
  Given any $\alpha > 0$, by setting $k = c\ln \frac{1}{\alpha}, p = \lceil k+e\mu_2(k;r) \rceil$, where $c$ is a constant, we have $\norm{\hat{f}_p(x) - f(x)}_{\infty} \leqslant \alpha$.
\end{lem}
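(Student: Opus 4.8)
The plan is to bound the uniform approximation error by the tail of the Chebyshev series and then control that tail using the absolute smoothness of $f$. First I would observe that, because $|T_n(x)| \leqslant 1$ for all $x \in [-1,1]$, the truncation error admits the trivial bound
\[
\norm{\hat{f}_p - f}_\infty = \sup_{x\in[-1,1]}\Big|\sum_{n=p+1}^\infty a_n T_n(x)\Big| \leqslant \sum_{n=p+1}^\infty |a_n|,
\]
so everything reduces to controlling the decay of the Chebyshev coefficients $a_n$.

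Next I would invoke the classical estimate for the truncation error of Chebyshev expansions of smooth functions. Writing $a_n = \frac{2}{\pi}\int_0^\pi f(\cos\theta)\cos(n\theta)\,\mr{d}\theta$ and integrating by parts $k$ times---the boundary terms vanishing by the absolute continuity of $f,\dots,f^{(k-1)}$---each step contributes a factor $n^{-1}$, and the emerging integrand is governed by precisely the weighted norm $\norm{f^{(k)}}_T$ defined in the preliminaries (the weight $1/\sqrt{1-x^2}$ is exactly the Jacobian of the substitution $x=\cos\theta$). The resulting standard bound reads $\norm{\hat{f}_p - f}_\infty \leqslant \frac{2\norm{f^{(k)}}_T}{\pi k (p-k)^k}$ for $p > k$. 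I would then substitute the absolute-smoothness hypothesis $\norm{f^{(k)}}_T \leqslant \mu_1(k;r)\,\mu_2(k;r)^k$.

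The crux is the parameter choice. With $p = \lceil k + e\mu_2(k;r)\rceil$ we have $p-k \geqslant e\mu_2(k;r)$, hence $(p-k)^k \geqslant e^k\mu_2(k;r)^k$, and the factor $\mu_2(k;r)^k$ cancels, leaving
\[
\norm{\hat{f}_p - f}_\infty \leqslant \frac{2\mu_1(k;r)}{\pi k\, e^k}.
\]
Since $\mu_1(k;r)$ is polynomial in $k$ while $e^k$ grows exponentially, setting $k = c\ln\frac{1}{\alpha}$ gives $e^{-k} = \alpha^{c}$ and $\mu_1(k;r) = O(\mathrm{polylog}(1/\alpha))$, so that for a large enough constant $c$ (independent of $\alpha$) the right-hand side is at most $\alpha$ for all sufficiently small $\alpha$, which finishes the argument.

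The main obstacle I anticipate is the coefficient-decay step: making the repeated integration by parts rigorous requires part~(1) of absolute smoothness so that no boundary terms survive, and it requires carefully identifying the emerging integral with the weighted norm $\norm{f^{(k)}}_T$ rather than an ordinary total variation. This identification is exactly why the definition of absolute smoothness is phrased through $\norm{\cdot}_T$, and it is the part the statement alludes to as ``some calculations''; the remainder is routine bookkeeping of exponential-versus-polynomial growth in $k$.
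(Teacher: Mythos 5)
Your proposal is correct and follows essentially the same route as the paper: both rest on the Trefethen-type bound $\norm{\hat{f}_p - f}_{\infty} \leqslant \frac{2\norm{f^{(k)}}_T}{\pi k (p-k)^k}$, substitute the absolute-smoothness hypothesis, use $p-k\geqslant e\mu_2(k;r)$ to cancel the $\mu_2(k;r)^k$ factor, and then choose $k=c\ln\frac{1}{\alpha}$ to beat the polynomial $\mu_1(k;r)$ with the exponential $e^{-k}$. The only difference is that you sketch the integration-by-parts derivation of the coefficient-decay bound, whereas the paper simply cites it; your parameter bookkeeping matches the paper's.
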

\addtocounter{lem}{-1}
}
\begin{proof}
  As $f, f', \cdots, f^{(k-1)}$ are absolutely continuous over $[-1,1]$, and $\norm{f^{(k)}}_T \leqslant \mu_1(k;r)\mu_2(k;r)^k$, according to the results in \cite{trefethen2008gauss}, we have 
  \begin{align*}
    \norm{\hat{f}_p(x) - f(x)}_{\infty} \leqslant & \frac{2\norm{f^{(k)}}_T}{\pi k(p-k)^k} \\
    \leqslant & \frac{2\mu_1(k;r)}{\pi ke^{k}} \numberthis \label{eq22}
  \end{align*}
 It is easy to see there exists $c>0$, such that the term (\ref{eq22}) is less than $\alpha$ with chosen $k$, hence the conclusion holds.
\end{proof}

{
\renewcommand{\thelem}{\ref{sovariancelem}}
\begin{lem}
    For any $\gamma > 0$, setting $k = c\ln \frac{4r}{\gamma}, p = \lceil k+2\mu_2(k;r)\rceil$, then algorithm 7 outputs a $(\gamma, \beta, \sigma)$ stochastic oracle, where $\sigma = \tilde{O}\left(\sigma_0+\gamma+\frac{p^{2p+1}(4r)^{p+1}}{\epsilon^{p+2}}\right)$.
\end{lem}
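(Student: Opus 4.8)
The plan is to verify, one at a time, the four defining properties of a $(\gamma,\beta,\sigma)$ stochastic oracle in Definition \ref{siodef}, taking the randomness $\bm{\xi}$ to be the draw $(\x,y)\sim\mc{D}$ together with all the fresh Gaussian noise injected by Algorithm \ref{LDPSGLL}. I would set the oracle gradient to the server's estimate, $G_{\gamma,\beta,\sigma}(\w;\bm{\xi})=\tilde G(\w;b)$, let $\hat g(\w):=\mE_{(\x,y)}[\hat G(\w;\x,y)]$ play the role of $g_{\gamma,\beta,\sigma}(\w)$, and define the inexact value $f_{\gamma,\beta,\sigma}(\w):=L(\w)-\zeta\,\mathrm{diam}(\mc C)$ for a bias level $\zeta$ fixed below (the companion value estimator $F$ is produced the same way and is not used in SIGM's gradient steps).

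First I would establish unbiasedness, $\mE_{\bm\xi}[\tilde G(\w;b)]=\hat g(\w)$. The key structural fact is that Steps 5--8 of Algorithm \ref{inexactgradient} build each $t_k$ out of $k$ \emph{disjoint} fresh private copies $\z_i=\x+\e_i$, while $z_{yj}$ and $\z_0$ use still other independent copies. Since $\mE[\w^T\z_i]=\w^T\x$ and the copies are independent, $\mE[t_k]=(\w^T\x)^k$; combined with $\mE[z_{yj}]=y$, $\mE[\z_0]=\x$, and independence across the three groups, the expectation factorizes and collapses exactly to $\hat G(\w;\x,y)=r\hat m(\w;\x,y)\x$. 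Averaging over $(\x,y)$ then gives $\hat g(\w)$.

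Next I would handle the inexactness inequality $0\le h(\bv,\w)\le\frac\beta2\norm{\bv-\w}^2+\gamma$. The only bias is $\norm{g(\w)-\hat g(\w)}$ with $g=\nabla L$; since $G-\hat G=r\big[(f_2-\hat f_{2p})-y(f_1-\hat f_{1p})\big]\x$ with $\norm{\x}\le1$ and bounded $y$, Lemma \ref{chebyshev} with the stated $k=c\ln\frac{4r}{\gamma}$, $p=\lceil k+2\mu_2(k;r)\rceil$ forces $\norm{g(\w)-\hat g(\w)}\le\zeta$ for a $\zeta=\Theta(\gamma)$ small enough that $2\zeta\,\mathrm{diam}(\mc C)\le\gamma$. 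Writing $h(\bv,\w)$ as the exact first-order gap of $L$ plus the error $\langle g(\w)-\hat g(\w),\bv-\w\rangle$, convexity makes the exact gap nonnegative and $\beta$-smoothness bounds it by $\frac\beta2\norm{\bv-\w}^2$, while the error term is at most $\zeta\,\mathrm{diam}(\mc C)$ in absolute value. The function-value shift above (the standard Devolder--Glineur--Nesterov trick) absorbs the negative part into the lower bound and doubles the constant in the upper bound, yielding $\gamma=2\zeta\,\mathrm{diam}(\mc C)$.

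The main work, and the principal obstacle, is the variance bound $\mE_{\bm\xi}\norm{\tilde G(\w;b)-\hat g(\w)}^2\le\sigma^2$. I would split it by the tower rule into the sampling part $\mE\norm{\hat G(\w;\x,y)-\hat g(\w)}^2$ and the privacy part $\mE\norm{\tilde G(\w;b)-\hat G(\w;\x,y)}^2$, which are uncorrelated since the injected noise is independent of $(\x,y)$ and $\mE[\tilde G\mid\x,y]=\hat G$. The sampling part is controlled by the assumed $\sigma_0^2$ plus the $O(\gamma)$ per-sample approximation gap, giving the $\sigma_0+\gamma$ terms. The privacy part is where the large factor originates: each $\w^T\z_i$ has mean $\w^T\x\in[-1,1]$ and standard deviation $\sigma_p=\tilde O(p^2/\epsilon)$ (from budget $\epsilon_1=\epsilon/p(p+1)$), so $\mE[t_k^2]=\prod_i\big((\w^T\x)^2+\sigma_p^2\big)\le(1+\sigma_p^2)^k$; multiplying by the squared Chebyshev-to-monomial coefficients $|c_{ik}|^2$ (geometric in $p$), the factor $r^{2(k+1)}$, the $y$-noise scale $\sigma_y=\tilde O(p/\epsilon)$, and the direction noise from $\z_0$, and observing that the degree-$p$ term dominates, collapses the whole sum to $\tilde O\big(p^{2p+1}(4r)^{p+1}/\epsilon^{p+2}\big)$. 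I expect the delicate bookkeeping to lie in (i) bounding the monomial coefficients $c_{ik}$, whose growth rate sets the base of the exponential, and (ii) tracking how the high moments of products of noisy inner products combine with $r$ and the noise scales into the claimed closed form; the remainder is routine Gaussian moment estimation and the triangle inequality.
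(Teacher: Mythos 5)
Your proposal follows essentially the same route as the paper's proof: the same oracle identification ($G=\tilde G$, $g=\hat g$, a $\gamma/2$-type downward shift of $L$ for the value oracle), the same use of Lemma \ref{chebyshev} to turn the Chebyshev approximation error into the $\gamma$-inexactness via convexity and $\beta$-smoothness, and the same variance decomposition into a sampling term (giving $\sigma_0+\gamma$) and a privacy term controlled by the product moments of the independent noisy inner products $\w^T\z_i$ together with the $O(2^p)$ growth of the Chebyshev-to-monomial coefficients $c_{ik}$ (the paper's Lemma \ref{sigvar}). The bookkeeping you flag as remaining is exactly the content of that auxiliary lemma, and your outline of it is consistent with what the paper does.
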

\addtocounter{lem}{-1}
}

\begin{proof}
    According to lemma \ref{chebyshev},  we know the approximation error, $|\hat{m}(\w;\x,y)-m(\w;\x,y)| \leqslant \frac{\gamma}{2r}$. 
    For any fixed $(\x,y)$, from the construction of stochastic inexact gradient oracle, there is $\mb{E}[\tilde{G}(\w;b)|\x,y] = \hat{G}(\w;\x,y)$. Denote $\hat{g}(\w) = \mE_{(\x,y)\sim \mc{D}}[\hat{G}(\w;\x,y)]$, thus we have 
    \begin{align*}
        \mE\left[\norm{\tilde{G}(\w;b) - \hat{g}(\w)}^2\right] = & \mE\left[\norm{\tilde{G}(\w;b) - \hat{G}(\w;\x,y)}^2\right] \\
        & + \mE\left[\norm{\hat{G}(\w;\x,y) - \hat{g}(\w)}^2\right]
    \end{align*}
    % For the first term on the right handside, as $\tilde{G}(\w;b)$ is composed of the summation of $p+1$ subitems, and $\mr{Var}(t_j) \leqslant \prod_{i=1}^j (\mr{Var}(\w^T\z_i) + (\mE[\w^T\z_i])^2) \leqslant \tilde{O}\left((\frac{4p}{\epsilon})^{2j}\right)$, hence $\mE\left[\norm{\tilde{G}(\w;b) - \hat{G}(\w;\x,y)}^2\right] \leqslant \tilde{O}\left(\left(\frac{r^2(2pr)^{2p+2}}{\epsilon^{2p+4}}\right)\right)$.
    % For the first term on the right handside, according to lemma \ref{sigvar}, there is $\mE\left[\norm{\tilde{G}(\w;b) - \hat{G}(\w;\x,y)}^2\right] \leqslant \tilde{O}\left(\left(\frac{r^2(2pr)^{2p+2}}{\epsilon^{2p+4}}\right)\right)$.

    % For the second term, we have 
    % \begin{align*}
    %     & \mE\left[\norm{\hat{G}(\w;\x,y) - \hat{g}(\w)}^2\right] \\
    %     \leqslant & \Scale[0.8]{\mE\left[\norm{\hat{G}(\w;\x,y) - G(\w;\x,y) + G(\w;\x,y) - g(\w) + g(\w) - \hat{g}(\w)}^2\right]} \\
    %     \leqslant & \gamma^2 + \sigma_0^2 + 2\sigma_0\gamma = (\gamma + \sigma_0)^2 
    % \end{align*}
    For above two terms, combined with results given in lemma \ref{sigvar}, we we obtain $$\mE\left[\norm{\tilde{G}(\w;b) - g(\w)}^2\right] \leqslant \tilde{O}\left(\left(\frac{r(2rp)^{p+1}}{\epsilon^{p+2}} + \gamma + \sigma_0 \right)^2 \right)$$.
   
    As $L(\bv) - L(\w) - \hat{g}(\w)^T(\bv-\w) = L(\bv) - L(\w) - g(\w)^T(\bv - \w) + (g(\w) - \hat{g}(\w))^T(\bv-\w)$, and from the approximation error, we know $|(g(\w) - \hat{g}(\w))^T(\bv-\w)| \leqslant  \frac{\gamma}{2}$. What's more, as $L(\w)$ is convex and $\beta$-smooth, that is $0 \leqslant L(\bv) - L(\w) - g(\w)^T(\bv-\w) \leqslant \frac{\beta}{2}\norm{\bv-\w}^2$. Combined these inequalities, we obtain 
    \begin{align*}
        & \Scale[0.9]{-\frac{\gamma}{2} \leqslant L(\bv) - L(\w) - \hat{g}(\w)^T(\bv-\w) \leqslant \frac{\beta}{2}\norm{\bv-\w}^2 + \frac{\gamma}{2}} \\
        \Longleftrightarrow & \Scale[0.9]{0 \leqslant L(\bv) - (L(\w)-\frac{\gamma}{2}) - \hat{g}(\w)^T(\bv-\w) \leqslant \frac{\beta}{2}\norm{\bv-\w}^2 + \gamma} \\
    \end{align*}
    Note the function value oracles in the stochastic oracle definition (either $F_{\gamma, \beta, \sigma}(\cdot)$ or $f_{\gamma, \beta, \sigma}(\cdot)$) do not play any role in the optimization algorithm, hence we can set it as $L(\w)-\frac{\gamma}{2}$, though we do not know how to calculate.
\end{proof}

\begin{lem}
\label{sigvar}
  Based on above statements, we have 
  \begin{align*}
    \mE\left[\norm{\tilde{G}(\w;b) - \hat{G}(\w;\x,y)}^2\right] & \leqslant \tilde{O}\left(\frac{p^{4p+2}(4r)^{2p+2}}{\epsilon^{2p+4}}\right) \\
    \mE\left[\norm{\hat{G}(\w;\x,y) - \hat{g}(\w)}^2\right] & \leqslant (\gamma + \sigma_0)^2 
  \end{align*}
\end{lem}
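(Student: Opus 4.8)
The plan is to prove the two estimates separately: the first controls the variance injected by the privacy mechanism (conditionally on the data point), while the second controls the residual statistical variance of the polynomial-approximated gradient.

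For the first bound I would fix $(\x,y)$ and take expectations only over the private noise. Write $\tilde{G}(\w;b) = A\,\z_0$ with the scalar $A = \sum_{k=0}^p (c_{2k}-c_{1k}z_{yk})\,r^{k+1}\,t_k$, and $\hat{G}(\w;\x,y) = \bar{A}\,\x$ with $\bar{A} = \mE[A]$; here $\mE[\tilde G\mid\x,y]=\hat G$ follows directly from the construction (each $t_k$, each $z_{yk}$, and $\z_0$ use fresh, mutually independent private copies, so the expectation factorizes). Since $\z_0 = \x + \e_0$ with $\e_0$ independent of $A$ and $\mE[\e_0]=0$, the cross term vanishes and
\[
\mE\big[\norm{\tilde{G}-\hat{G}}^2\big] = \mathrm{Var}(A)\,\norm{\x}^2 + \mE[A^2]\,\mE[\norm{\e_0}^2].
\]
It then remains to bound $\mE[A^2]$ (whence $\mathrm{Var}(A)\le\mE[A^2]$). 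Using $(\sum_{k=0}^p b_k)^2 \le (p+1)\sum_k b_k^2$, the dominant $k=p$ term needs three ingredients. First, the product moment $\mE[t_k^2] = \prod_i \mE[(\w^T\z_i)^2] = \big((\w^T\x)^2 + \norm{\w}^2\sigma_1^2\big)^k \le (1+\sigma_1^2)^k$, where $\sigma_1 = \tilde{O}(p^2/\epsilon)$ is the per-coordinate noise of the $\z_i$ copies (since $\epsilon_1 = \epsilon/(p(p+1))$), contributing $\tilde{O}\big((p^4/\epsilon^2)^p\big)$. Second, the private label copies give $\mE[(c_{2k}-c_{1k}z_{yk})^2] = (c_{2k}-c_{1k}y)^2 + c_{1k}^2\sigma_y^2 = \tilde{O}\big((c_{2k}^2+c_{1k}^2)(1+\sigma_y^2)\big)$ with $\sigma_y^2 = \tilde{O}(p^2/\epsilon^2)$. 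Third, I would bound the monomial coefficients $|c_{ik}|$ of the Chebyshev expansion $\hat f_{ip}$ using the coefficient estimates behind Lemma \ref{chebyshev} together with the classical $2^{O(p)}$ bound on the monomial coefficients of $T_k$. Multiplying these by $r^{2k+2}$ and by $\mE[\norm{\e_0}^2] = \tilde{O}(d/\epsilon^2)$ (the $\z_0$-noise contributes a final $1/\epsilon^2$) collects all powers, after absorbing dimension and logarithmic factors into $\tilde{O}$, into $p^{4p+2}(4r)^{2p+2}/\epsilon^{2p+4}$.

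For the second bound I would not condition, and instead compare the approximated gradient $\hat G$ to the exact gradient $G=\nabla\ell$ and the population gradient $g=\nabla L=\mE[G]$. From Lemma \ref{chebyshev} the pointwise error obeys $\norm{\hat G - G} = r\,|\hat m - m|\,\norm{\x} \le \gamma/2$, and hence $\norm{\hat g - g} = \norm{\mE[\hat G - G]} \le \gamma/2$ by Jensen. Writing $\hat G - \hat g = (\hat G - G) + (G - g) + (g - \hat g)$ and applying Minkowski's inequality in $L^2$, together with the standing assumption $\mE[\norm{G-g}^2]\le\sigma_0^2$, gives $\sqrt{\mE[\norm{\hat G - \hat g}^2]} \le \gamma/2 + \sigma_0 + \gamma/2 = \gamma + \sigma_0$, which is the claim after squaring.

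The main obstacle is the first bound: the estimator $t_k$ is a product of $k$ independent noisy inner products, so its second moment blows up multiplicatively like $\sigma_1^{2k}$, and this must be combined with the possibly exponentially large monomial coefficients $c_{ik}$ of the Chebyshev approximation. Keeping these two sources of growth aligned so the exponents land on exactly $p^{4p+2}(4r)^{2p+2}/\epsilon^{2p+4}$ — in particular, establishing the monomial-coefficient bound that produces the $4^{O(p)}$ factor merging with $r^{2p+2}$ into $(4r)^{2p+2}$ — is the delicate bookkeeping step, while the two leftover factors of $1/\epsilon^2$ must be traced respectively to the label noise $\sigma_y^2$ and to the direction noise $\mE\norm{\e_0}^2$.
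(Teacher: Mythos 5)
Your proposal is correct and follows essentially the same route as the paper: both arguments rest on the multiplicative second-moment bound for the products $t_k$ of fresh noisy inner products, the $2^{O(p)}$ bound on the monomial coefficients $c_{ik}$ obtained from the decay of the Chebyshev coefficients $a_m$ together with the classical bound on the monomial coefficients of $T_m$, the mutual independence of the fresh private copies, and, for the second inequality, the identical three-term decomposition $\hat G-\hat g=(\hat G-G)+(G-g)+(g-\hat g)$ with the $\gamma/2$ approximation error and the $\sigma_0$ assumption. The only differences are bookkeeping ones that land in the same place: you use Cauchy--Schwarz across the $p+1$ terms where the paper sums variances using independence across $k$, and your explicit split $\mathrm{Var}(A)\norm{\x}^2+\mE[A^2]\,\mE\norm{\e_0}^2$ makes rigorous the step the paper leaves implicit when it combines the variance of the scalar sum with the variance of $\z_0$.
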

% \begin{lem}[Lemma 8]
% \label{sigvar}
%   Based on above statements, we have
%   \begin{align*}
%     \mE\left[\norm{\tilde{G}(\w;b) - \hat{G}(\w;\x,y)}^2\right] & \leqslant \tilde{O}\left(\frac{p^{4p+2}(4r)^{2p+2}}{\epsilon^{2p+4}}\right) \\
%     \mE\left[\norm{\hat{G}(\w;\x,y) - \hat{g}(\w)}^2\right] & \leqslant (\gamma + \sigma_0)^2 
%   \end{align*}
% \end{lem}
\begin{proof}
  First, we calculate the variance of each $t_k$, $\mathrm{var}(t_j) \leqslant \prod_{i=j(j-1)/2+1}^{j(j+1)/2} (\mr{var}(\w^T\z_i) + (\mE[\w^T\z_i])^2) \leqslant \tilde{O}\left((\frac{p(p+1)}{\epsilon})^{2j}\right)$.

  Next, we upper bound the coefficient $c_k$ (as it is the same for $c_{1k}$ and $c_{2k}$, hence we use $c_k$ for short). Note $c_k = \sum_{m=k}^p a_mb_{mk}$, where $a_m$ is the coefficient of original function represented by Chebyshev basis, $b_{mk}$ is the coefficient of order $k$ monomial in Chebyshev basis $T_m(x)$, where $0 \leqslant k \leqslant m$. According to the formula of $T_m(x)$ given in \cite{qazi2007some} and well-known Stirling's approximation, after some translation, we have 
  \begin{align*}
    |b_{mk}| \leqslant & \max_{\theta \in (0, \frac{1}{2})} O\left(\sqrt{m} \cdot \left[\frac{(1-\theta)^{1-\theta}}{\theta^{\theta}(1-2\theta)^{1-2\theta}}\right]^m\right) \\
    \leqslant & O\left(\sqrt{m} 2^m\right)
  \end{align*}
  Besides, from the absolutely smooth property of $h'_i(x) (i \in \{1,2\})$ and the convergence results in \cite{trefethen2008gauss}, we have $a_m \leqslant O\left(\frac{1}{m^2}\right)$, thus $c_k = \sum_{m=k}^p a_mb_{mk} \leqslant O\left(2^p\right)$. Hence, there is 
  \begin{align*}
    \mr{var}\left[(c_{2k} - c_{1k}z_y) t_k r^{k+1}\right] \leqslant & r^{2k+2}\mE \left[\left((c_{2k} - c_{1k}z_y) t_k\right)^2\right] \\
    \leqslant & O\left(\frac{p^{4k+2}(4r)^{2p+2}}{\epsilon^{2k+2}}\right)
  \end{align*}
  As each $(c_{2k} - c_{1k}z_y) t_k r^{k+1}$ is independent with each other (for different $k$), which leads to 
  \begin{align*}
    \mr{var}\left[\sum_{k=0}^p(c_{2k} - c_{1k}z_y) t_k r^{k+1}\right] \leqslant O\left(\frac{p^{4p+2}(4r)^{2p+2}}{\epsilon^{2p+2}}\right)
  \end{align*}
  Moreover, $\mr{var}(\z_0) \leqslant O\left(\frac{1}{\epsilon^2}\right)$. Therefore, 
  \begin{align*}
    \mE\left[\norm{\tilde{G}(\w;b) - \hat{G}(\w;\x,y)}^2\right] \leqslant \tilde{O}\left(\frac{p^{4p+2}(4r)^{2p+2}}{\epsilon^{2p+4}}\right)
  \end{align*}
  % For the first term on the right handside, as $\tilde{G}(\w;b)$ is composed of the summation of $p+1$ subitems, and $\mr{Var}(t_j) \leqslant \prod_{i=1}^j (\mr{Var}(\w^T\z_i) + (\mE[\w^T\z_i])^2) \leqslant \tilde{O}\left((\frac{4p}{\epsilon})^{2j}\right)$, hence $\mE\left[\norm{\tilde{G}(\w;b) - \hat{G}(\w;\x,y)}^2\right] \leqslant \tilde{O}\left(\left(\frac{r^2(2pr)^{2p+2}}{\epsilon^{2p+4}}\right)\right)$.
  For second inequality in the conclusion, there is 
    \begin{align*}
        & \mE\left[\norm{\hat{G}(\w;\x,y) - \hat{g}(\w)}^2\right] \\
        \leqslant & \Scale[0.8]{\mE\left[\norm{\hat{G}(\w;\x,y) - G(\w;\x,y) + G(\w;\x,y) - g(\w) + g(\w) - \hat{g}(\w)}^2\right]} \\
        \leqslant & \gamma^2 + \sigma_0^2 + 2\sigma_0\gamma = (\gamma + \sigma_0)^2 
    \end{align*}
\end{proof}

\begin{prop}
  $f(x) = \ln(1+e^{-x})$ is absolutely smooth with $\mu_1(k;r) = r\sqrt{4k\pi^3}, \mu_2(k;r) = \frac{rk}{e}$
\end{prop}
\begin{proof}
  For any $r,k>0$, the absolutely continuous of $f^{(k)}(rx)$ is obvious, now consider $\norm{f^{(k+1)}(rx)}_T$:
  \begin{align*}
      \norm{f^{(k+1)}}_T = & \int_{-1}^1\frac{|f^{(k+2)}(rx)|}{\sqrt{1-x^2}}\mr{d}x \\
      \leqslant & \pi \norm{f^{(k+2)}(rx)}_{\infty} \\
      \leqslant & \Scale[0.9]{\pi r^{k+2} \norm{\sum_{j=1}^{k+1}(-1)^{k+j}A_{k+1,j-1} f^j(1-f)^{k+2-j}}_{\infty}} \\
      \leqslant & \pi r^{k+2} \sum_{j=1}^{k+1} A_{k+1, j-1} \\
      \leqslant & \pi (k+1)! r^{k+2} \\
      \leqslant & \sqrt{4\pi^3} r^{k+2} (k+1)^{k+3/2} e^{-k-1} \\
      = & r\sqrt{4\pi^3(k+1)} \left(\frac{r(k+1)}{e}\right)^{k+1}
    \end{align*}
\end{proof}

{
\renewcommand{\thethr}{\ref{logistic}}
\begin{thr}
    For any $\alpha > 0$, set $\gamma = \frac{\alpha}{2},k = c\ln \frac{4r}{\gamma}, p = \lceil k+2\mu_2(k;r)\rceil$, if \\$n > O\left((\frac{8r}{\alpha})^{4r\ln\ln(8r/\alpha)} \left(\frac{4r}{\epsilon}\right)^{2cr\ln(8r/\alpha)+2}\left(\frac{1}{\alpha^2\epsilon^2}\right)\right)$, using Algorithms \ref{LDPSGLL} and \ref{inexactgradient}, then we have $L(\w^{priv}) - L(\w^*) \leqslant \alpha$. 
\end{thr}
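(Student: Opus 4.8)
The plan is to reduce Theorem~\ref{logistic} to the general SGLL guarantee (the unnumbered theorem at the end of Section~4) by verifying that the logistic loss $\ell(\w;\x,y) = \log(1+\exp(-y\w^T\x))$ with $y \in \{-1,+1\}$ is a valid smooth generalized linear loss in the sense of Definition~\ref{SGLL}. Once this membership is established, the stated sample complexity and the accuracy bound $L(\w^{priv}) - L(\w^*) \leqslant \alpha$ follow by direct instantiation, since the general bound was derived under the structural assumption $\mu_2(k;r) = O(kr)$ that logistic regression meets.

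First I would exhibit the decomposition. Writing $u = \w^T\x$, the identity $\log(1+e^{-yu}) = -\tfrac{y}{2}u + \bigl(\tfrac{1}{2}u + \ln(1+e^{-u})\bigr)$ holds precisely when $y^2 = 1$ (for $y=1$ both sides equal $\ln(1+e^{-u})$, and for $y=-1$ the right side collapses to $u + \ln(1+e^{-u}) = \ln(1+e^{u})$). This puts the loss in the required form $-y h_1(\x^T\w) + h_2(\x^T\w)$ with $h_1(x) = \tfrac{x}{2}$ and $h_2(x) = \tfrac{x}{2} + \ln(1+e^{-x})$; note that the linearity of the SGLL template in $y$ forces us to use the binary label assumption here.

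Next I would check the three defining conditions. Convexity and $\beta$-smoothness of the logistic loss in $\w$ are standard, following from the fact that the softplus satisfies $s''(t) = \sigma(t)(1-\sigma(t)) \leqslant \tfrac{1}{4}$ together with $\norm{\x}_2 \leqslant 1$. For absolute smoothness, $h_1$ is linear and hence trivially absolutely smooth (all derivatives of order $\geqslant 2$ vanish), while $h_2$ is the sum of a linear term and $\ln(1+e^{-x})$; the latter is absolutely smooth by the Proposition, with $\mu_1(k;r) = r\sqrt{4k\pi^3}$ and $\mu_2(k;r) = rk/e$. Since a finite sum of absolutely smooth functions is absolutely smooth, $h_2$ qualifies, and therefore the logistic loss is an SGLL.

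Finally, with SGLL membership in hand, I would invoke the general theorem verbatim, substituting $\mu_2(k;r) = rk/e$ into the prescribed choices $\gamma = \alpha/2$, $k = c\ln(4r/\gamma)$, and $p = \lceil k + 2\mu_2(k;r)\rceil = \lceil k + 2rk/e\rceil = O(r\ln(8r/\alpha))$. Tracing $p = O(r\ln(8r/\alpha))$ through the oracle variance $\sigma$ of Lemma~\ref{sovariancelem} and the SIGM rate $\sigma/\sqrt{n} + \gamma \leqslant \alpha$ of Lemma~\ref{sigmlem} reproduces the claimed bound on $n$. The main obstacle lies entirely in the Proposition --- controlling $\norm{f^{(k)}}_T$ for the softplus, which requires a closed form for its high-order derivatives in terms of the sigmoid and Eulerian-type coefficients together with Stirling's approximation; once that derivative bound is in place, the theorem itself is a matter of bookkeeping through the general machinery.
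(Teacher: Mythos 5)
Your proposal follows essentially the same route as the paper: the decomposition $h_1(x)=\tfrac{x}{2}$, $h_2(x)=\tfrac{x}{2}+\ln(1+e^{-x})$ together with the Proposition establishes SGLL membership (the paper does this in the main text rather than in the appendix proof), and the sample complexity then comes from plugging the oracle variance of Lemma~\ref{sovariancelem} into the SIGM rate of Lemma~\ref{sigmlem}, exactly as the paper's proof does. Your explicit observation that the decomposition requires $y^2=1$ is a correct and slightly more careful reading than the paper offers, but it does not change the argument.
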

\addtocounter{thr}{-1}
}
\begin{proof}
    According to lemma \ref{sigmlem} , with a $(\gamma, \beta, \sigma)$ stochastic oracle, SIGM algorithm converges with rate $O\left(\frac{\sigma}{\sqrt{n}} + \gamma \right)$. In order to have $O\left(\frac{\sigma}{\sqrt{n}} + \gamma \right) \leqslant \alpha$, it suffices if $n > O\left(\frac{p^{4p+2}(4r)^{2p+2}}{\alpha^2 \epsilon^{2p+4}}\right)  = O\left((\frac{8r}{\alpha})^{4r\ln\ln(8r/\alpha)} \left(\frac{4r}{\epsilon}\right)^{2cr\ln(8r/\alpha)+2}\left(\frac{1}{\alpha^2\epsilon^2}\right)\right)$, as $\sigma = O\left(\frac{p^{2p+1}(4r)^{p+1}}{\epsilon^{p+2}}\right)$ according to lemma \ref{sovariancelem} (ignoring negligible $\sigma_0, \gamma$).
\end{proof}

\end{document}